\newtheorem{theorem}{\protect\theoremname}
\newtheorem{definition}{\protect\definitionname}
\newtheorem{proposition}{\protect\propositionname}
\newtheorem{lemma}{\protect\lemmaname}
\newtheorem{Condition}{Condition}
\newtheorem{corollary}{Corollary}
\newtheorem{example}{Example}
\newcommand{\sigmab}{\underline{\sigma}}
\newcommand{\thetab}{\underline{\theta}}
\newcommand{\thetahat}{\widehat{\theta}}
\newcommand{\thetahatb}{\widehat{\thetab}}
\newcommand{\xb}{\underline{x}}
\newcommand{\yb}{\underline{y}}
\newcommand{\graph}{\mathcal{G}}
\newcommand{\T}{\mathcal{T}}
\newcommand{\R}{\mathcal{R}}
\newcommand{\K}{\mathcal{K}}
\newcommand{\V}{\mathcal{V}}
\newcommand{\Ed}{\mathcal{E}}
\newcommand{\chromo}{\chi}
\newcommand{\struct}{\mathbbm{S}}
\newcommand{\MaxFac}[1]{\mathcal{M}_{\rm{fac}}\left(#1\right)}
\newcommand{\Maxcli}[1]{\mathcal{M}_{\rm{cli}}\left(#1\right)}
\newcommand{\cspan}[1]{\left[#1\right]_{\rm{sp}}}
\newcommand{\E}[1]{{\mathbbm{E}}\left[ #1 \right]}
\newcommand{\var}[1]{{\rm{Var}}\left[ #1 \right]}
\newcommand{\Econd}[2]{{\mathbbm{E}}_{\left( #1 \right)}\left[ #2 \right]}
\newcommand{\varcond}[2]{{\rm{Var}}_{\left( #1 \right)} \left[ #2 \right]}
\newcommand{\powaset}{P}
\newcommand{\A}{\mathcal{A}}
\newcommand{\Sc}{\mathcal{S}}
\newcommand{\p}{\mathbbm{P}}
\newcommand{\Real}{\mathbbm{R}}
\newcommand{\gammahat}{\widehat{\gamma}}
\newcommand{\specialset}{parametrically complete}
\newcommand{\specialsetShort}{P.C.}
\providecommand{\definitionname}{Definition}
\providecommand{\lemmaname}{Lemma}
\providecommand{\propositionname}{Proposition}
\providecommand{\theoremname}{Theorem}
\author{
  Marc Vuffray, Sidhant Misra, Andrey Y. Lokhov \\
  Theoretical Division,\\
  Los Alamos National Laboratory, USA\\
  \texttt{\{vuffray, sidhant, lokhov\}@lanl.gov} 
}
\title{Efficient Learning of Discrete Graphical Models}
\begin{document}

\maketitle

\begin{abstract}
Graphical models are useful tools for describing structured high-dimensional probability distributions. Development of efficient algorithms for learning graphical models with least amount of data remains an active research topic. Reconstruction of graphical models that describe the statistics of \emph{discrete} variables is a particularly challenging problem, for which the maximum likelihood approach is intractable.
In this work, we provide the first sample-efficient method based on the \emph{Interaction Screening} framework that allows one to provably learn fully general discrete factor models with node-specific discrete alphabets and multi-body interactions, specified in an arbitrary basis.  We identify a single condition related to model parametrization that leads to rigorous guarantees on the recovery of model structure and parameters in any error norm, and is readily verifiable for a large class of models. Importantly, our bounds make explicit distinction between parameters that are proper to the model and priors used as an input to the algorithm. 
Finally, we show that the Interaction Screening framework includes all models previously considered in the literature as special cases, and for which our analysis shows a systematic improvement in sample complexity.
\end{abstract}


\global\long\def\argmin{\operatornamewithlimits{argmin}}
\global\long\def\argmax{\operatornamewithlimits{argmax}}

\section{Introduction}
Representing and understanding the structure of direct correlations between distinct random variables with graphical models is a fundamental task that is essential to scientific and engineering endeavors. It is the first step towards an understanding of interactions between interleaved constituents of elaborated systems \cite{Jansen2003}; it is key for developing causal theories \cite{chaves2015information}; and it is at the core of automated decision making \cite{constantinou2016complex}, cybersecurity \cite{buczak2016survey} and artificial intelligence \cite{wang2013markov}.

The problem of reconstruction of graphical models from samples traces back to the seminal work of \cite{ChowLiu1968} for tree-structured graphical models, and as of today is still at the center of attention of the learning community.
For factor models defined over general hypergraphs, the learning problem is particularly challenging in graphical models over discrete variables, for which the maximum likelihood estimator is in general computationally intractable.
One of the earlier tractable algorithms that has been suggested to provably reconstruct the structure of a subset of pairwise binary graphical models is based on inferring the sparsity pattern of the so-called regularized pseudo-likelihood estimator, equivalent to regularized logistic regression in the binary case \cite{RavikumarWainwrightLafferty2010}. However, additional assumptions required for this algorithm to succeed severely limit the set of pairwise binary models that can be learned \cite{MontanariPereira2009}. After it was proven that reconstruction of any discrete graphical models with bounded degree can be done in polynomial time in the system size \cite{BreslerMosselSly2013}, Bresler showed that it is possible to bring the computational complexity down to quasi-quadratic in the number of variables for Ising models (pairwise graphical models over binary variables); however, the resulting algorithm has non-optimal sample requirements that are double-exponential in other model parameters \cite{Bresler2015}. The first computationally efficient reconstruction algorithm for sparse pairwise binary graphical models with a near-optimal sample complexity with respect to the information theoretic lower bound \cite{SanthanamWainwright2012}, called \textsc{Rise}, was designed and analyzed in \cite{Vuffray2016nips}. The algorithm \textsc{Rise} suggested in this work is based on the minimization of a novel local convex loss function, called the \emph{Interaction Screening} objective, supplemented with an $\ell_1$ penalty to promote sparsity. Even though it has been later shown in \cite{lokhov2018science} that regularized pseudo-likelihood supplemented with a crucial post-processing step also leads to a structure estimator for pairwise binary models, strong numerical and theoretical evidence provided in that work demonstrated that \textsc{Rise} is superior in terms of worst-case sample complexity.

Algorithms for learning discrete graphical models beyond pairwise and binary alphabets have been proposed only recently in \cite{Ankur2017nips} and \cite{Klivans2017}. The method in \cite{Ankur2017nips} works for arbitrary models with bounded degrees, but being a generalization of Bresler's algorithm for Ising models \cite{Bresler2015}, it suffers from similar prohibitive sample requirements growing double-exponentially in the strength of model parameters. The so-called \textsc{Sparsitron} algorithm in \cite{Klivans2017} has the flavor of a stochastic first order method with multiplicative updates. It has a low computational complexity and is sample-efficient for structure recovery of two subclasses of discrete graphical models: multiwise graphical models over binary variables or pairwise models with general alphabets.
A recent follow-up work \cite{wu2018sparse} considered an $\ell_{2,1}$ constrained logistic regression, and showed that it provides a slight improvement of the sample complexity compared to \cite{Klivans2017} in the case of pairwise models over non-binary variables.

In this work, we propose a general framework for learning general discrete factor models expressed in an arbitrary parametric form. Our estimator termed \textsc{Grise} is based on a significant generalization of the Interaction Screening method of \cite{Vuffray2016nips, lokhov2018science}, previously introduced for pairwise binary models. Our primary insight lies in the identification of a single general condition related to model parameterization that is sufficient to obtain bounds on sample complexity. We show that this condition can be reduced to a set of local identifiability conditions that only depend on the size of the maximal clique of the factor graph and can be explicitly verified in an efficient way. We propose an iterative algorithm called \textsc{Suprise} which is based on \textsc{Grise} and show that it can efficiently perform structure and parameter estimation for arbitrary graphical models. Existing results  in the literature on this topic \cite{Vuffray2016nips,Ankur2017nips,Klivans2017,wu2018sparse} can be obtained as special cases of our general reconstruction results, which noticeably includes the challenging case of multi-body interactions defined over general discrete alphabets. Our theoretical guarantees can be expressed in any error norm, and explicitly includes distinction between bounds on the parameters of the underlying model and the prior parameters used in the optimization; as a result prior information that is not tight only has moderate effect on the sample complexity bounds. Finally, we also provide a fully parallelizable algorithmic formulation for the \textsc{Grise} estimator and \textsc{Suprise} algorithm, and show that they have efficient run times of $\widetilde{O}(p^{L})$ for a model of size $p$ with $L$-order interactions, that includes the best-known $\widetilde{O}(p^{2})$ scaling for pairwise models.

\section{Problem formulation}
In this Section, we formulate the general discrete graphical model selection problem that we consider and describe conditions that makes this problem well-posed.
\subsection{Parameterized family of models} \label{subsec:parameterized_family}
We consider positive joint probability distributions over $p$ variables $\sigma_{i} \in \A_{i}$ for $i = 1, \ldots, p$. The set of variable indices $i$ is referred to as vertices $\V=1, \ldots, p$. Node-dependent alphabets $\A_{i}$ are assumed to be discrete and of size bounded by $q>0$. Without loss of generality, the positive probability distribution over the $p$-dimensional vector $\sigmab$ can be expressed as
\begin{align}
    \mu(\sigmab) = \frac{1}{Z} \exp \left( \sum_{k \in \K} \theta^{*}_k f_k(\sigmab_k) \right),  \label{eq:mu_def}
\end{align}
where $\{f_k, \ k \in \K\}$ is a set of \emph{basis functions} acting upon subsets of variables $\sigmab_k\subseteq\sigmab$ that specify a family of distributions and $\theta^{*}_k$ are parameters that specify a model within this family. The quantity $Z$ denotes the partition function and serves as a normalization constant that enforces that the $\mu$ in \eqref{eq:mu_def} is a probability distribution.
For $i \in \{1,\ldots,p\}$, let $\K_i \subseteq \K$ denote the set of factors corresponding to basis functions acting upon subsets $\sigmab_{k}$ that contain the variable $\sigma_i$ and $|\K_i| = \mathbf{K}_i$.

Given any set of basis functions, we can locally center them by first defining for a given $i \in [p]$, the \emph{local centering functions}
\begin{align}
\phi_{ik}(\sigmab_{k \setminus i}) :=  \frac{1}{|\A_i|} \sum_{\sigma_i \in \A_i}f_k(\sigmab_{k}), \label{eq:centering}
\end{align}
where $\sigmab_{k \setminus i}$ denotes the vector $\sigmab_k$ without $\sigma_i$, and define the \emph{locally centered basis functions},
\begin{align}
    g_{ik}(\sigmab_k) =  f_{k}(\sigmab_{k}) - \phi_{ik}(\sigmab_{k \setminus i}). \label{eq:center_basis}
\end{align}
As their name suggests, the locally centered basis functions sum to zero $\sum_{\sigma_i \in \A_i} g_{ik}(\sigmab_k) = 0$.
To ensure the scales of the parameters are well defined, we assume that $\theta^{*}_{k}$ are chosen or rescaled such that all locally centered basis functions are \emph{normalized} in the following sense: 
\begin{align} 
\max_{\sigmab_k} |g_{ik}(\sigmab_k)| \leq 1,    \label{eq:normalization}
\end{align}
for all vertices $i\in\V$ and basis factor $k\in \mathcal{K}_i$. 
This normalization can always be achieved by choosing bounded basis functions $|f_k(\sigmab_k)|\leq 1/2$. An important special case is when the basis functions are already centered, i.e. $g_{ik}(\sigmab_k) = f_k(\sigmab_k)$. In this case the basis functions are directly normalized $\max_{\sigmab_k} |f_{k}(\sigmab_k)| = 1$. Note that one of the reasons to define the normalization in \eqref{eq:normalization} in terms of the centered functions $g_k$ instead of $f_k$ is to avoid spurious cases where the functions $f_k$ have inflated magnitudes due to addition of constants $f_k \leftarrow f_k + C$. In Appendix~\ref{app:well_poseness}, we show that the other important reason to employ centered functions is that degeneracy of the local parameterization with these functions translates to degeneracy of the global distribution in Eq.~\eqref{eq:mu_def}.

\subsection{Model selection problem} \label{subsec:model_selection}
For each $i \in [p]$, let $\T_i \subseteq  \K_i$ denote the set of \emph{target factors} that we aim at reconstructing accurately and let $\R_i = \K_i \setminus \T_i$ be the set of \emph{residual factors} for which we do not need learning guarantees. The target and residual parameters are defined similarly as $\thetab^{*}_{\T_{i}}=\{\theta_k^{*} \mid k\in \T_i\}$ and $\thetab^{*}_{\R_{i}}=\{\theta_k^{*} \mid k\in \R_i\}$ respectively. Given independent samples from a model in the family in Section~\ref{subsec:parameterized_family}, the goal of the model selection problem is to reconstruct the target parameters of the model.

\begin{definition}[Model Selection Problem]  \label{def:model_selection_problem}
Given $n$ i.i.d. samples $\sigmab^{(1)}, \ldots, \sigmab^{(n)}$ drawn from some distribution $\mu(\sigmab)$ in Eq.~\eqref{eq:mu_def} defined by $\thetab^*$, and prior information on $\thetab^*$ given in form of an upper bound on the $\ell_1$-norm of the local sub-components
\begin{align}   \label{eq:gammahat}
    \|\thetab^*_i\|_1 \le \gammahat,
\end{align}
and a \emph{local constraint set} $\mathcal{Y}_i \subseteq \Real^{\mathbf{K}_i}$ for each $i \in [p]$ such that 
\begin{align}
    \thetab_i^* \in \mathcal{Y}_i,  \label{eq:local_constraint}
\end{align}
compute estimates $\widehat\thetab$ of $\thetab^*$ such that the estimates of the target parameters satisfy
\begin{align}
    \|\widehat\thetab_{\T_i}-\thetab^*_{\T_i}\| \leq \frac{\alpha}{2}, \quad \forall i \in [p], \label{eq:requirement}
\end{align}
where $\|\cdot\|$ denotes some norm of interest with respect to which the error is measured.
\end{definition}
The bound on the $\ell_1$-norm in \eqref{eq:gammahat} is a natural generalization of the sparse case where $\thetab^*$ only has a small number of non-zero components; in the context of parameter estimation in graphical models, the setting of parameters bounded in the $\ell_1$-norm has been previously considered in \cite{Klivans2017}. The constraint sets $\mathcal{Y}_i$ are used to encode any other side information that may be known about the model. 

\subsection{Sufficient conditions for well-posedness}  \label{subsec:conditions}
We describe some conditions on the model in \eqref{eq:mu_def} that makes the model selection problem in Definition~\ref{def:model_selection_problem} well-posed. We first state the conditions formally.
\begin{Condition}   \label{master_condition}
    The model from which the samples are drawn in the model selection problem in Definition~\ref{def:model_selection_problem} satisfies the following:
    \begin{itemize} 
    \item[(C1)]  \textbf{Local Learnability Condition for Graphical Models:}  
    There exists a constant $\rho_i>0$ such that for every vertex $i$ and any vector in the perturbation set $\xb \in \mathcal{X}_i \subseteq \Real^{\mathbf{K}_i}$ defined as
    \begin{align}   \label{cond:diff_set}
        \mathcal{X}_{i} = \{\xb  = \yb_1-\yb_2 \mid \yb_1,\yb_2  \in \mathcal{Y}_i, \|\yb_1\|_1 \leq \gammahat, \|\yb_2\|_1 \leq \gammahat\},
    \end{align}
     the following holds:
        \begin{align}
            \E{\left(\sum_{k \in \K_i} x_k g_{ik} (\sigmab_k) \right )^2} \geq \rho_i \|\xb_{\T_i}\|^2,\label{eq:c1_fisher}
        \end{align}
         where $\xb_{\T_i}$ denotes the  components $k\in \T_i$ of $x$, and $\|\cdot\|$ is the norm used in Definition~\ref{def:model_selection_problem}.
    \item[(C2)]    \textbf{Finite Maximum Interaction Strength:} The following quantity $\gamma$ is finite,
        \begin{align}
            \gamma = \max_{i\in \V} | \max_{\sigmab}\sum_{k \in \K_i} \theta^{*}_k g_{ik}(\sigmab_k)|<\infty. \label{eq:gammadef}
        \end{align}
    \end{itemize}
\end{Condition}
Condition (C1) consists in satisfying the inequality in Eq.~\eqref{eq:c1_fisher} involving a quadratic form $\xb^{\top} \widetilde{I} \underline{x}$ where the matrix $\widetilde{I}$ has indices $k,k' \in \K_i$ and is explicitly defined as $\widetilde{I}_{k,k'} =  \E{g_{ik}(\sigmab_k) g_{ik'}(\sigmab_{k'})}$.
This matrix $\widetilde{I}$ is in fact related to the \emph{conditional} Fisher information matrix.

The conditional Fisher information matrix $I$ with indices $k,k' \in \K_i$ is derived from the conditional distribution of $\sigma_i$ given the remaining variables and reads,
\begin{align}
            I_{k,k'} 
            &= \E{g_{ik}(\sigmab_k) g_{ik'}(\sigmab_{k'})} - \Econd{\sigmab_{\setminus i}}{\Econd{\sigma_i \mid \sigmab_{\setminus i}}{g_{ik}(\sigmab_k)}\Econd{\sigma_i \mid \sigmab_{\setminus i}}{g_{ik'}(\sigmab_{k'})}}.
\end{align}
We immediately see that the matrix $\widetilde{I}$ dominates the conditional Fisher information matrix in the positive semi-definite sense, that is $\xb^{\top} \widetilde{I}(\theta^*) \xb \geq \xb^{\top} I(\theta^*) \xb$ for all $\xb \in \Real^{\mathbf{K}_i}$. Therefore, Condition \emph{(C1)} is satisfied whenever the conditional Fisher information matrix is non-singular in the parameter subspace $\xb_{\T_i}$ that we care to reconstruct and which is compatible with our priors, i.e. for $\xb \in \mathcal{X}_{i}$. We would like to add that the conditional Fisher information matrix is a natural quantity to consider in this problem as we deliberately focus on using conditional statistics rather than global ones in order to bypass the intractability of the global log-likelihood approach. We are strongly convinced that it should appear in any analysis that entails conditional statistics.

Condition \emph{(C2)} is required to ensure that the model can be recovered with finitely many samples. For many special cases, such as the Ising model, the minimum number of samples required to estimate the parameters must grow exponentially with the maximum interaction strength \cite{SanthanamWainwright2012}. A more detailed discussion about well-posedness and Conditions \emph{(C1)} and \emph{(C2)} can be found in Appendix~\ref{app:well_poseness}.

Conditions \emph{(C1)} and \emph{(C2)} differ from the concepts in \cite{Vuffray2016nips} called restricted strong convexity property and bound on the interaction strength, respectively, in a subtle but critical manner. Conditions \emph{(C1)} can be identified with restricted strong convexity only when the $\ell_2$-norm is used in Eq.~\eqref{eq:c1_fisher}. We will see later that the notion of restricted strong convexity is not required for the $\ell_\infty$-norm that appears to be a natural metric for which the local learnability condition can be verified for general models. Moreover, for general models it remains unclear whether the restricted strong convexity holds for values of $\rho_i$ that are independent of the problem dimension $p$. Condition \emph{(C2)} is a weaker assumption than the bound on the interaction strength from [17] for it does not require an extra assumption on the maximum degree of the graphical model.

\section{Generalized interaction screening} 
In this Section, we introduce the algorithm that efficiently solves the model selection problem in Definition~\ref{def:model_selection_problem} and provides rigorous guarantees on its reconstruction error and computational complexity. 

\subsection{Generalized regularized interaction screening estimator}
We propose a generalization of the estimator \textsc{Rise}, first introduced in \cite{Vuffray2016nips} for pairwise binary graphical models, in order to reconstruct general discrete graphical models defined in \eqref{eq:mu_def}. The \emph{generalized interaction screening objective} (GISO) is defined for each vertex $u$ separately and is given by 
\begin{align}
    \Sc_n(\thetab_u) = \frac{1}{n} \sum_{t=1}^{n} \exp \left( -\sum_{k \in \K_u} \theta_k g_{uk}(\sigmab_k^{(t)})  \right),
\end{align}
where $\sigmab^{(1)}, \ldots, \sigmab^{(n)}$ are $n$ i.i.d samples drawn from $\mu(\sigmab)$ in Eq.~\eqref{eq:mu_def}, $\thetab_u := (\thetab_k)_{k \in \K_u}$ is the vector of parameters associated with the factors in $\K_u$ and the locally centered basis functions $g_{uk}$ are defined as in Eq.~\eqref{eq:center_basis}.
The GISO retains the main feature of the interaction screening objective (ISO) in \cite{Vuffray2016nips}: it is proportional to the inverse of the factor in $\mu(\sigmab)$, except for the additional centering terms $\phi_{uk}$. The GISO is a convex function of $\thetab_u$ and retains the ``screening'' property of the original ISO. The GISO is used to define the generalized regularized interaction screening estimator (\textsc{Grise}) for the parameters given by 
\begin{align}   \label{eq:GRISE}
    \widehat{\thetab}_u = \argmin_{\thetab_u\in \mathcal{Y}_u:\|\thetab_{u}\|_{1} \leq \gammahat} \Sc_n(\thetab_u),
\end{align}
where $\gammahat$ and $\mathcal{Y}_u$ are the prior information available on $\thetab_u^*$ as defined in \eqref{eq:gammahat} and \eqref{eq:local_constraint}.

\subsection{Error bound on parameter estimation with \textsc{Grise}}
We now state our first main result regarding the theoretical guarantees on the parameters reconstructed by \textsc{Grise}. 
We call $\widehat\thetab_u$ an $\epsilon$-optimal solution of \eqref{eq:GRISE} if
\begin{align}\label{eq:epsilon_solution}
\Sc_n(\widehat{\thetab}_u) \leq \min_{\thetab_u\in \mathcal{Y}_u:\|\thetab_{u}\|_{1} \leq \gammahat} \Sc_n(\thetab_u) + \epsilon.
\end{align}
\begin{theorem}[Error Bound on \textsc{Grise} Estimates] \label{thm:grise}
Let $\sigmab^{(1)}, \ldots, \sigmab^{(n)}$ be i.i.d. samples drawn according to $\mu(\sigmab)$ in \eqref{eq:mu_def}. For some node $u\in\V$, assume that the model satisfies Condition~\ref{master_condition} for some norm $\|\cdot\|$ and some constraint set $\mathcal{Y}_u$, and let $\alpha>0$ be the prescribed accuracy level.
If the number of samples satisfies
\begin{align}
     n\geq 2^{14} \frac{\gammahat^2 (1+\gammahat)^2 e^{4\gamma}}{\alpha^{4} \rho_u^{2}} \log(\frac{4\mathbf{K}_u^2}{\delta}),
\end{align}
then, with probability at least $1 - \delta$, any estimate that is an $\epsilon$-minimizer of \textsc{Grise}, with $ \epsilon \leq (\rho_u \alpha^{2} e^{-\gamma})/(20(1+\gammahat))$, satisfies $    \|\widehat{\thetab}_{\T_u} - \thetab^{*}_{\T_u} \| \leq  \frac{\alpha}{2}$.
\end{theorem}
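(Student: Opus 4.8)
The plan is to treat \textsc{Grise} as an $M$-estimator whose population objective is minimized at the truth, and to convert an $\epsilon$-optimal minimizer into a parameter bound via a restricted strong convexity inequality that inherits the benign factor $e^{-\gamma}$ from Condition~(C2) instead of the crude $e^{-\gammahat}$ that the prior alone would yield. Write $\Delta := \widehat{\thetab}_u-\thetab^*_u$, $H^*(\sigmab):=\sum_{k\in\K_u}\theta^*_k g_{uk}(\sigmab_k)$, and $G(\sigmab):=\sum_{k\in\K_u}\Delta_k g_{uk}(\sigmab_k)$, with $G_s:=G(\sigmab^{(s)})$ and $H^*_s:=H^*(\sigmab^{(s)})$. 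Since the truth obeys \eqref{eq:gammahat} and \eqref{eq:local_constraint} it is feasible for \eqref{eq:GRISE}, so $\Delta\in\mathcal{X}_u$; the normalization \eqref{eq:normalization} then gives $|G|\le\|\Delta\|_1\le 2\gammahat$, while \eqref{eq:gammadef} gives $|H^*|\le\gamma$.

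First I would record the screening property: the population objective $\Sc(\thetab_u)=\E{\exp(-\sum_{k\in\K_u}\theta_k g_{uk})}$ has vanishing gradient at $\thetab^*_u$. Conditioning on $\sigmab_{\setminus u}$, the centering terms $\phi_{uk}$ are constant in $\sigma_u$ and drop out, so $\mu(\sigma_u\mid\sigmab_{\setminus u})\propto \exp(H^*)$; hence $\E{g_{uk}e^{-H^*}\mid \sigmab_{\setminus u}}\propto \sum_{\sigma_u\in\A_u} g_{uk}(\sigmab_k)=0$ because $g_{uk}$ is locally centered. Thus $\nabla\Sc(\thetab^*_u)=0$ and $\thetab^*_u$ minimizes the convex $\Sc$.

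The core of the argument is a lower bound on the empirical Bregman divergence $Q_n(\Delta):=\Sc_n(\widehat{\thetab}_u)-\Sc_n(\thetab^*_u)-\langle \nabla\Sc_n(\thetab^*_u),\Delta\rangle$. Along $f(t):=\Sc_n(\thetab^*_u+t\Delta)$ one has $f''(t)=\tfrac1n\sum_{s} G_s^2\,e^{-H^*_s-tG_s}\ge e^{-2\gammahat t}f''(0)$ pointwise, since $G_s\le 2\gammahat$; integrating this over a window of width $\sim 1/\gammahat$ — the localization step — gives $Q_n(\Delta)\gtrsim \gammahat^{-1}f''(0)$, so that $\gammahat$ enters only as a polynomial prefactor and the exponential scale is dictated by $\gamma$. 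To convert $f''(0)=\tfrac1n\sum_s G_s^2 e^{-H^*_s}$ into $\|\Delta_{\T_u}\|^2$ I would prove an empirical version of \eqref{eq:c1_fisher}: union-bounding the $\mathbf{K}_u^2$ entries $\tfrac1n\sum_s g_{uj}g_{uk}e^{-H^*_s}$ (each bounded by $e^\gamma$) around their means and using that the population Hessian obeys $\Delta^\top\nabla^2\Sc(\thetab^*_u)\Delta=\E{G^2 e^{-H^*}}\ge e^{-\gamma}\E{G^2}\ge e^{-\gamma}\rho_u\|\Delta_{\T_u}\|^2$ by \eqref{eq:c1_fisher} and \eqref{eq:gammadef}. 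For $n$ large this yields $f''(0)\ge \tfrac12 e^{-\gamma}\rho_u\|\Delta_{\T_u}\|^2$, hence $Q_n(\Delta)\gtrsim \gammahat^{-1}e^{-\gamma}\rho_u\|\Delta_{\T_u}\|^2$.

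For the matching upper bound, $\epsilon$-optimality \eqref{eq:epsilon_solution} and feasibility of $\thetab^*_u$ give $\Sc_n(\widehat{\thetab}_u)\le\Sc_n(\thetab^*_u)+\epsilon$, whence $Q_n(\Delta)\le \epsilon+\|\nabla\Sc_n(\thetab^*_u)\|_\infty\|\Delta\|_1\le \epsilon+2\gammahat\|\nabla\Sc_n(\thetab^*_u)\|_\infty$; crucially this requires concentration only at the single point $\thetab^*_u$, where each coordinate $-\tfrac1n\sum_s g_{uk}(\sigmab_k^{(s)})e^{-H^*_s}$ is mean-zero (by screening), bounded by $e^\gamma$, so Hoeffding with a union bound over $\K_u$ gives $\|\nabla\Sc_n(\thetab^*_u)\|_\infty\le e^\gamma\sqrt{2\log(4\mathbf{K}_u^2/\delta)/n}$ with probability $1-\delta$. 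Combining the two bounds gives $\|\Delta_{\T_u}\|^2\lesssim \gammahat e^\gamma\rho_u^{-1}\epsilon+\gammahat^2 e^{2\gamma}\rho_u^{-1}\sqrt{\log(4\mathbf{K}_u^2/\delta)/n}$; the hypothesis on $\epsilon$ forces the first term below $\alpha^2/8$, and demanding the same of the second — where the product of the gradient scale $e^\gamma$ and the inverse-curvature scale $e^\gamma$ squares to the $e^{4\gamma}$ — reproduces the stated sample size and yields $\|\Delta_{\T_u}\|\le\alpha/2$. The main obstacle is the localization lemma securing $e^{-\gamma}$ rather than $e^{-\gammahat}$ at only polynomial cost in $\gammahat$; a secondary difficulty is the entrywise empirical control of \eqref{eq:c1_fisher} and the constant bookkeeping needed to land the precise $(1+\gammahat)^2$ and $2^{14}$ prefactors.
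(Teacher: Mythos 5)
Your proposal is correct and follows essentially the same route as the paper's proof: a zero-mean (screening) gradient at $\thetab_u^*$ concentrated via Hoeffding with a union bound, plus a restricted-strong-convexity lower bound on the first-order Taylor residual with prefactor $e^{-\gamma}/(2+\|\Delta\|_1)$ obtained from entrywise concentration of the empirical second-moment matrix and Condition \emph{(C1)}, all combined using $\|\Delta\|_1 \le 2\gammahat$. The only differences are cosmetic: the paper derives the $1/(2+\|\Delta\|_1)$ denominator from the pointwise inequality $e^{-z}-1+z \ge z^2/(2+|z|)$ rather than your integral-remainder self-bounding argument, and it concentrates the unweighted Gram matrix $\tfrac{1}{n}\sum_t g_{uk_1}(\sigmab_{k_1}^{(t)})g_{uk_2}(\sigmab_{k_2}^{(t)})$ (entries bounded by $1$) after factoring out $e^{-H^*}\ge e^{-\gamma}$ deterministically, rather than your $e^{\gamma}$-bounded exponentially weighted Hessian entries.
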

The proof of Theorem~\ref{thm:grise} can be found in Appendix~\ref{app:proof_theorems}.

The computational complexity of finding an $\epsilon$-optimal solution of \textsc{Grise} for a trivial constraint set $\mathcal{Y}_u = \Real^{\mathbf{K}_u}$ is $C \frac{c_g n \mathbf{K}_u}{\epsilon^2}  \ln (1 + \mathbf{K}_u)$, where $c_g$ is an upper-bound on the computational complexity of evaluating any $g_{ik}(\sigmab_k)$ for $k \in \K_i$, and  $C$ is a universal constant independent of all the parameters of the problem, see Proposition~\ref{prop:computational_complexity} in Appendix~\ref{app:computational_complexity}.
For a certain class of constraint sets $\mathcal{Y}_u$, which we term \specialset , the problem can be solved in two steps: first, finding a solution to an unconstrained problem, and then projecting onto this set. Note, however, that in general the problem of finding $\epsilon$-optimal solutions to constrained \textsc{Grise} can still be difficult since the constraint set $\mathcal{Y}_u$ can be arbitrarily complicated.
\begin{definition}  \label{def:special_constraint_set}
The constraint set $\mathcal{Y}_u$ is called a \specialset\ set if for all $\thetab_u \in \Real^{|\mathbf{K}_u|}$, there exists $\widehat\thetab_u \in \mathcal{Y}_u$ such that for all $\sigmab_u$, we have
\begin{align}
    \sum_{k \in \K_u} \thetab_k g_{uk}(\sigmab_k) = \sum_{k \in \K_u} \widehat\thetab_k g_{uk}(\sigmab_k).  \label{eq:degeneracy}
\end{align}
Any $\widehat\thetab_k \in \mathcal{Y}_u$ satisfying \eqref{eq:degeneracy} is called an \emph{equi-cost projection} of $\thetab_u$ onto $\mathcal{Y}_u$ and is denoted by
\begin{align}
    \widehat\thetab_u \in \mathcal{P}_{\mathcal{Y}_u}(\thetab_u).   \label{eq:projection}
\end{align}
\end{definition}

The computational complexity of finding of an $\epsilon$-optimal solution of \textsc{GRISE} with parametrically complete set is $C \frac{c_g n \mathbf{K}_u }{\epsilon^2} \ln (1 + \mathbf{K}_u) + \mathcal{C}(\mathcal{P}_{\mathcal{Y}_u}(\widehat\thetab_u^{\text{unc}}))$, where $\mathcal{C}(\mathcal{P}_{\mathcal{Y}_u}(\widehat\thetab_u^{\text{unc}}))$ denotes the computational complexity of the projection step, see Theorem~\ref{th:computational_complexity} in Appendix~\ref{app:computational_complexity}. 
 
As we will see, for many graphical models it is often possible to explicitly construct \specialset\ sets for which the computational complexity of the projection step $\mathcal{C}(\mathcal{P}_{\mathcal{Y}_u}(\widehat\thetab_u^{\text{unc}}))$ is insignificant compared to the computational complexity of unconstrained \textsc{Grise}.

\section{Structure identification and parameter estimation}
In this Section we show that the structure of graphical models, which is the collection of maximal subsets of variables that are associated through basis functions, as well as the associated parameters, can be efficiently recovered. The key elements are twofold. First, we prove that for maximal cliques, the Local Learnability Condition (LLC) in \emph{(C1)} can be easily verified and yields a LLC constant independent of the system size. Second, we leverage this property to design an efficient structure and parameter learning algorithm coined \textsc{Suprise} that requires iterative calls of \textsc{Grise}.

\subsection{The structure of graphical models}\label{sub:structure_GM}
The structure plays a central role in graphical model learning for it contains all the information about the conditional independence or Markov property of the distribution $\mu(\sigmab)$ from Eq~\eqref{eq:mu_def}. In order to reach the definition of the structure presented in Eq.~\eqref{eq:def_structure}, we have to introduce graph theoretic concepts specific to graphical models.

The factor graph associated with the model \emph{family} is a bipartite graph $\graph = \left(\V,\K,\Ed\right)$ with vertex set $\V$, factor set $\K$ and edges connecting factors and vertices,
\begin{align}
 \Ed = \left\{(i,k) \subseteq \V \times \K \mid \sigma_i \in \sigmab_k \right\}.\label{eq:factor_graph_edges}
\end{align}
We see from Eq.~\eqref{eq:factor_graph_edges} that the edge $(i,k)$ exists when the variable $\sigma_i$ associated with the vertex $i$ is an argument of the basis function $f_k(\sigmab_k)$ associated with the factor $k$. Note that this definition only depends on the set of basis functions $\K$ and does not refer to a particular choice of model within the family. The factor graph $\graph^* = \left(\V,\K^*,\Ed^*\right)$ associated with a \emph{model}, as defined in Eq.~\eqref{eq:mu_def}, is the induced subgraph of $\graph$ obtained from the vertex set $\V$ and factor subset $\K^* = \left\{k\in \K \mid  \theta_k^* \neq 0 \right\}$. We also use the shorthand notation $\graph^* = \graph\left[(\V,\K^*)\right]$ to denote an induced subgraph of $\graph$.

We define the neighbors of a factor $k$ as the set of vertices linked by an edge to $k$ and denote it by $\partial k = \left\{i \in \V \mid (i,k) \in \Ed \right\}$.
The largest factor neighborhood size $L = \max_{k\in \K} \left| \partial k\right|$
is called the interaction order.
Families of graphical models with $L\leq2$ are referred to as pairwise models as opposed to the generic denomination of $L$-wise models when $L$ is expected to be arbitrary.

The set of maximal factors of a graph is the set of factors whose neighborhood is not strictly contained in the neighborhood of another factor,
\begin{align}
     \MaxFac{\graph} = \left\{ k \in \K \mid \nexists k' \in \K \ \text{s.t} \ \partial k \subset \partial k' \right\}.
\end{align}
Notice that multiple maximal factors may have the same neighborhood. This motivates the definition of the set of maximal cliques which is contained in the powerset $\powaset (\V)$ and consists of all neighborhoods of maximal factors,
\begin{align}
     \Maxcli{\graph} = \left\{ c \in \powaset (\V) \mid \exists k \in \MaxFac{\graph}  \text{ s.t. }  c=\partial k\right\}.
\end{align}
The set of factors whose neighborhoods are the same maximal clique $c\in \Maxcli{\graph}$ is called the span of the clique defined as $\cspan{c} = \left\{ k \in \MaxFac{\graph} \mid c = \partial k \right\}$.
Finally, the structure $\struct$ of a graphical model is the set of maximal cliques associated with the factor graph of the \emph{model},
\begin{align}
    \struct(\graph^*) = \Maxcli{\mathcal{\graph^*}}.\label{eq:def_structure}
\end{align}
We would like to stress that the structure of a model is different from the set of maximal cliques of the \emph{family} of graphical models $\Maxcli{\graph}$ as the former is constructed with the set of factors associated with non-zero parameters while the latter consists of all potential maximal factors.

\subsection{From local learnability condition to nonsingular parametrization of cliques}\label{sub:NPC}
We show that the learning problem of reconstructing maximal cliques is well-posed in general and especially for non-degenerate globally centered basis functions. To this end, we demonstrate that the LLC in \emph{(C1)} is automatically satisfied whenever the target sets $\T_i$ consist of factors corresponding to maximal cliques of the graphical model family. Importantly, we prove that the LLC constant $\rho_i$ does not depend on the dimension of the model for the $\ell_{\infty,2}$-norm but rather relies on the Nonsingular Parametrization of Clique  (NPC) by the basis functions. Similarly, we also guarantee that the LLC holds for the $\ell_{2}$-norm in the case of pairwise colorable models.

We introduce globally centered basis functions defined for any factor $k \in \K$ through the inclusion–exclusion formula,
\begin{align}
    h_k\left(\sigmab_k\right) = f_{k}(\sigmab_k) + \sum_{r\in \powaset(\partial k)\setminus \emptyset} \frac{(-1)^{|r|}}{|\A_r|}\sum_{\sigmab_r}f_{k}(\sigmab_k),\label{eq:globally_centered_factors}
\end{align}
where $\A_r = \bigotimes_{j\in r}\A_j$. It is straightforward to see that globally centered functions sum partially to zero for any variables, i.e. $\sum_{\sigma_i \in \A_i} h_{k}(\sigmab_k)=0$ for all $i\in \partial k$. It is worth noticing that when the functions $f_k$ are already globally centered, we have $f = g = h$. We would also like to point out that unlike locally centered functions $g_{ik}$, globally centered functions cannot in general be interchanged with functions $f_k$ without modifying conditional distributions. However they play an important role in determining the independence of basis functions around cliques through the Nonsingular Parametrization of Cliques (NPC) constant introduced below. Given a perturbation set $\mathcal{X}_i$, as defined in Eq.~\eqref{cond:diff_set}, the NPC constant is defined through the following minimization, 
\begin{align}   
  \rho^{\rm{NPC}}_i = \min_{\substack{c \in \Maxcli{\graph} \\ c \ni i}} \min_{\substack{\|\xb_c\|_2 = 1 \\ \xb_c \in \mathcal{X}^c_i}} \Econd{\sigma_i}{\sum_{\sigmab_{c\setminus i}\in \A_{c\setminus i}}\left(\sum_{k\in \cspan{c}}x_k h_{k}(\sigmab_k)\right)^2},\label{eq:npc_const}
\end{align}
where the vector $\xb_c \in \Real^{|\cspan{c}|}$ belongs to $\mathcal{X}^c_i$, the projection of the constraint $\mathcal{X}_i \subseteq \Real^{\mathbf{K}_i}$ to the components $k \in \cspan{c}$ and the expectation is with respect to the marginal distribution of $\sigma_i$. Note that NPC constant only depends on $L$ and not on the size of the system, and can be explicitly computed in time $O(\mathbf{K})$. A detailed discussion can be found in Appendix~\ref{app:structure}.
The importance of the NPC constant is highlighted by the following proposition that guarantees that the LLC is satisfied for maximal factors in $\ell_{\infty,2}$-norm as long as $\rho^{\rm{NPC}}_i>0$. 

\begin{proposition}[LLC in $\ell_{\infty,2}$-norm]\label{prop:L_wise_NPC}
For a specific vertex $i \in \V$, let the target set be maximal factors with $i$ as neighbor, i.e. $\T_i = \left\{k \in \MaxFac{\graph} \mid \partial k \ni i\right\}$. For vectors in the perturbation set $\xb \in \mathcal{X}_i \subseteq \Real^{\mathbf{K}_i}$,  define the $\ell_{\infty,2}$-norm over components that are maximal factors as $\|\xb_{\T_i}\|_{\infty,2} = \max_{\substack{ c \in \Maxcli{\graph}\\ c \ni i }} \sqrt{\sum_{k \in \cspan{c}} x^2_k}$.
Then for discrete graphical models with maximum alphabet size $q$, interaction order $L$ and models with finite maximum interaction strength $\gamma$ as defined in Eq.~\eqref{eq:gammadef}, the Local Learnability Condition \emph{(C1)} is satisfied whenever the Nonsingular Parameterization of Cliques constant $\rho^{\rm{NPC}}_i$ is nonzero and we have,
\begin{align}
    \E{ \left(\sum_{k \in \K_i} x_k g_{ik} (\sigmab_k) \right )^2} \geq  \rho^{\rm{NPC}}_i \left( \frac{\exp(-2 \gamma)}{q}\right)^{L-1} \|\xb_{\T_i}\|_{\infty,2}^2.
\end{align}
\end{proposition}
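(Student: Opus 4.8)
The plan is to reduce the global second moment on the left-hand side to the single clique that realizes the $\ell_{\infty,2}$-norm, and then to match it against the quantity defining $\rho^{\rm{NPC}}_i$ in Eq.~\eqref{eq:npc_const}. Throughout, write $S(\sigmab) = \sum_{k \in \K_i} x_k g_{ik}(\sigmab_k)$ for the screened sum and let $c^\ast \in \Maxcli{\graph}$ with $c^\ast \ni i$ be a clique attaining $\|\xb_{\T_i}\|_{\infty,2}^2 = \sum_{k \in \cspan{c^\ast}} x_k^2$.

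The first step is a measure change. I would condition on $\sigma_i$ and on all variables outside the clique, $\sigmab_{\V \setminus c^\ast}$, leaving only $\sigmab_{c^\ast \setminus i}$ random. For a single site $j$ the conditional law obeys $\mu(\sigma_j \mid \sigmab_{\V \setminus j}) \geq e^{-2\gamma}/q$: rewriting the conditional with the locally centered $g_{jk}$ (the centering $\phi_{jk}$ cancels between numerator and denominator), Condition (C2) bounds the exponent by $\gamma$ in absolute value, and $|\A_j| \le q$. Since marginalizing only over a subset of the other variables preserves this lower bound, a chain rule over the at most $L-1$ sites of $c^\ast \setminus i$ gives $\mu(\sigmab_{c^\ast \setminus i} \mid \sigma_i, \sigmab_{\V \setminus c^\ast}) \geq (e^{-2\gamma}/q)^{L-1}$ times the flat weight. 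This converts the conditional expectation into the unweighted sum $\sum_{\sigmab_{c^\ast \setminus i}}$ appearing in Eq.~\eqref{eq:npc_const} and produces exactly the prefactor $(e^{-2\gamma}/q)^{L-1}$.

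The second step isolates the clique by orthogonality. With $\sigma_i$ and $\sigmab_{\V\setminus c^\ast}$ fixed, I decompose $S = T_h + B$, where $T_h = \sum_{k \in \cspan{c^\ast}} x_k h_k(\sigmab_k)$ uses the globally centered functions and $B$ collects everything else. The key combinatorial fact is that $c^\ast$ is a maximal clique, so no factor has a neighborhood strictly containing $c^\ast$; hence for every $k \in \K_i$ with $\partial k \neq c^\ast$ the intersection $\partial k \cap c^\ast$ is a proper subset of $c^\ast$, and after fixing $\sigmab_{\V\setminus c^\ast}$ the term $x_k g_{ik}$ becomes a function of a proper subset of $c^\ast$; local centering of $g_{ik}$ forces every such component to involve $i$. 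The same holds for the non-top ANOVA parts of $g_{ik}$ with $k\in\cspan{c^\ast}$, whose top part is precisely $h_k$. Thus every summand of $B$ depends on some $U$ with $i \in U \subsetneq c^\ast$. I would then check $\sum_{\sigmab_{c^\ast\setminus i}} T_h B = 0$ at fixed $\sigma_i$: each such $U$ omits some site $j \in c^\ast \setminus i$, and since $h_k$ is globally centered, $\sum_{\sigma_j} h_k = 0$ annihilates the cross term. Consequently $\sum_{\sigmab_{c^\ast\setminus i}} S^2 \geq \sum_{\sigmab_{c^\ast\setminus i}} T_h^2$.

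Finally, since $T_h$ depends only on $\sigmab_{c^\ast}$, averaging the remaining conditioning over $\sigmab_{\V\setminus c^\ast}$ is vacuous and leaves $\Econd{\sigma_i}{\sum_{\sigmab_{c^\ast\setminus i}} T_h^2}$, which by the definition of $\rho^{\rm{NPC}}_i$ (scaling the unit-norm minimization to the general $\xb_{c^\ast} \in \mathcal{X}^{c^\ast}_i$) is at least $\rho^{\rm{NPC}}_i \|\xb_{c^\ast}\|_2^2 = \rho^{\rm{NPC}}_i \|\xb_{\T_i}\|_{\infty,2}^2$; chaining the three steps gives the claim. I expect the orthogonality bookkeeping of the second step to be the main obstacle: one must argue carefully that, once $\sigma_i$ is frozen, every lower-order contribution still leaves at least one free site in $c^\ast \setminus i$ on which the global centering of $h_k$ forces the inner product to vanish — this is exactly where maximality of $c^\ast$ and the distinction between local ($g_{ik}$) and global ($h_k$) centering are both essential. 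A secondary point to verify is the homogeneity used in the last step, namely that normalizing $\xb_{c^\ast}$ remains inside the projected constraint $\mathcal{X}^{c^\ast}_i$.
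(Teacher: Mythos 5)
Your proposal is correct and follows essentially the same route as the paper's proof: condition on $\sigma_i$ and $\sigmab_{\V\setminus c}$, lower-bound the conditional measure by $(e^{-2\gamma}/q)^{L-1}$ via the interaction-strength bound, decompose $g_{ik}=h_k+R_{ik}$, kill the cross terms by summing over a site of $c\setminus\{i\}$ omitted by each lower-order piece (using global centering of $h_k$ and maximality of $c$), drop the non-negative remainder square, and invoke the definition of $\rho^{\rm NPC}_i$. The only cosmetic differences are that you fix the maximizing clique up front rather than bounding over all cliques $c\ni i$ and taking the max at the end, and that the paper additionally treats the degenerate case $c=\{i\}$ separately.
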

Proposition~\ref{prop:L_wise_NPC} guarantees that the LLC can be satisfied uniformly in the size $p$ of the model whenever $\rho_i^{\rm{NPC}}>0$. The proof of Proposition~\ref{prop:L_wise_NPC} can be found in Appendix~\ref{app:structure}.

For family of models whose factors involve at most $L=2$ variables, the so-called pairwise models, we can show that the LLC conditions for maximal factors also holds for the $\ell_{2}$-norm. This LLC conditions depends on the vertex chromatic number $\chromo$ of the \emph{model} factor graph. We recall that a vertex coloring of a graph $\graph^* = \left(\V,\K^*,\Ed^*\right)$ is a partition $\{S_r\}_{r\in \mathbb{N}} \in \powaset(\V)$ of the vertex set such that no two vertices with the same color are connected to the same factor node, i.e. $i,j \in S_r \Rightarrow \nexists k\in \K^*$ s.t. $i,j\in \partial k$. The chromatic number is the cardinality of the smallest graph coloring.

\begin{proposition}[LLC in $\ell_{2}$-norm for pairwise models]\label{prop:pairwise_NPC}
For a specific vertex $i \in \V$, let the target set be maximal factors with $i$ as neighbor, i.e. $\T_i = \left\{k \in \MaxFac{\graph} \mid \partial k \ni i\right\}$. For $\xb \in \mathcal{X}_i \subseteq \Real^{\mathbf{K}_i}$,  define the $\ell_{2}$-norm over components that are maximal factors $ \|\xb_{\T_i}\|_{2} = \sqrt{\sum_{k \in \T_i} x^2_k}$.
Then for discrete pairwise graphical models with maximum alphabet size $q$ and models with chromatic number $\chromo$ and finite maximum interaction strength $\gamma$ as defined in Eq.~\eqref{eq:gammadef}, the Local Learnability Condition \emph{(C1)} is satisfied whenever the NPC constant $\rho^{\rm{NPC}}_i$ is nonzero and we have,
\begin{align}
    \E{ \left(\sum_{k \in \K_i} x_k g_{ik} (\sigmab_k) \right )^2} \geq  \frac{\rho^{\rm{NPC}}_i}{\chromo}  \frac{\exp(-2 \gamma)}{q} \|\xb_{\T_i}\|_{2}^2.
\end{align}
\end{proposition}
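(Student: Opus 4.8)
The plan is to exploit a proper coloring of the model factor graph in order to upgrade the single-clique (max-type) estimate behind Proposition~\ref{prop:L_wise_NPC} into an additive (sum-type) estimate over the edges incident to $i$, paying only the factor $1/\chromo$. Fix a proper coloring with $\chromo$ colors and, for each color $r$ different from the color of $i$, let $N_r$ be the set of neighbors $j$ of $i$ that lie in color class $r$; since two same-colored vertices never share a factor, $N_r$ is an independent set. Writing $S = \sum_{k \in \K_i} x_k g_{ik}(\sigmab_k)$ and using that for pairwise models every maximal clique containing $i$ is an edge $\{i,j\}$, I would group $S = A + \sum_{j \in N_r} B_j$, where $B_j = \sum_{k \in \cspan{\{i,j\}}} x_k g_{ik}(\sigma_i,\sigma_j)$ collects the contribution of edge $\{i,j\}$ and $A$ collects all remaining factors (fields at $i$ and edges to other color classes).

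First I would condition on the $\sigma$-algebra $\mathcal{F}$ generated by all variables except $\{\sigma_j : j \in N_r\}$, noting that $\sigma_i$ is $\mathcal{F}$-measurable. By the Markov property together with the independent-set property of $N_r$, the variables $\{\sigma_j\}_{j \in N_r}$ are conditionally independent given $\mathcal{F}$, while $A$ and $\sigma_i$ are constant given $\mathcal{F}$. Hence I would bound
\begin{align}
\E{S^2 \mid \mathcal{F}} \ge \var{S \mid \mathcal{F}} = \var{\sum_{j \in N_r} B_j \mid \mathcal{F}} = \sum_{j \in N_r} \var{B_j \mid \mathcal{F}},
\end{align}
where the last equality uses conditional independence to annihilate all cross terms. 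This is precisely the step that converts a maximum over edges into a sum over edges, and I expect it to be the crux of the argument: without the coloring the cross terms cannot be discarded, and one is forced into a much weaker $1/\deg(i)$ scaling.

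Next I would lower-bound each conditional edge-variance. Given $\mathcal{F}$, the law of $\sigma_j$ assigns mass at least $\exp(-2\gamma)/q$ to every letter of $\A_j$, a consequence of the finite-interaction-strength Condition \emph{(C2)} and $|\A_j| \le q$. Combining $\var{B_j\mid\mathcal{F}} \ge \frac{\exp(-2\gamma)}{q}\min_a \sum_{\sigma_j \in \A_j}(B_j - a)^2$ with the algebraic identity that centering $g_{ik}$ over $\sigma_j$ returns exactly the globally centered function $h_k$, the optimal $a$ is the uniform average and the inner sum becomes $\sum_{\sigma_j}(\sum_k x_k h_k(\sigma_i,\sigma_j))^2$. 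Taking $\E_{\mathcal{F}}$, the right-hand side depends on $\mathcal{F}$ only through $\sigma_i$, so it collapses to a marginal expectation over $\sigma_i$ and matches the integrand of $\rho^{\rm{NPC}}_i$ in Eq.~\eqref{eq:npc_const}, yielding
\begin{align}
\E{S^2} \ge \frac{\exp(-2\gamma)}{q}\sum_{j \in N_r}\Econd{\sigma_i}{\sum_{\sigma_j \in \A_j}\Big(\sum_{k \in \cspan{\{i,j\}}} x_k h_k(\sigma_i,\sigma_j)\Big)^2} \ge \frac{\exp(-2\gamma)}{q}\,\rho^{\rm{NPC}}_i \sum_{k : \partial k \setminus i \subseteq N_r} x_k^2 .
\end{align}

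Finally, this estimate holds for every color $r$, so I would take the maximum over the $\chromo$ colors. Since the colors partition the non-$i$ endpoints of the target edges, one has $\sum_r \sum_{k : \partial k \setminus i \subseteq N_r} x_k^2 = \|\xb_{\T_i}\|_2^2$, and the maximum of $\chromo$ nonnegative quantities is at least their average; this produces the claimed constant $\rho^{\rm{NPC}}_i/\chromo \cdot \exp(-2\gamma)/q$ multiplying $\|\xb_{\T_i}\|_2^2$. The main obstacle, as noted above, is the conditional-independence/variance decomposition enabled by the coloring, which is exactly what allows recovery of the full $\ell_2$-norm over edges rather than only a single edge, and hence the $1/\chromo$ rather than $1/\deg(i)$ dependence.
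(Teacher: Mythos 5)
Your proposal is correct and follows essentially the same route as the paper's proof: fix a proper coloring, use the law of total variance conditioning on everything outside one color class, exploit the conditional independence of same-colored neighbors to split the variance into a sum over edges, lower-bound each conditional edge-variance by $\exp(-2\gamma)/q$ times the uniform sum (which swaps $g_{ik}$ for the globally centered $h_k$ and invokes $\rho^{\rm{NPC}}_i$), and then combine over the $\chromo$ colors. The only cosmetic differences are that the paper averages the per-color bounds rather than taking a max, and treats the degenerate case where $\{i\}$ is itself a maximal clique separately.
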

The reader will find the proof of Proposition~\ref{prop:pairwise_NPC} in Appendix~\ref{app:structure}.

\subsection{Structure unveiling and parameter reconstruction with interaction screening estimation}\label{subsec:suprise}
\begin{algorithm}
\SetAlgoLined
\tcp{Step 1:~Initialization of set of considered factors}
$\K^0 \leftarrow \K$ \;

\For{$t = 0,\ldots, L-1$}{
    \tcp{Step 2:~Reconstruct maximal factors bigger than $L-t$} 
     Construct the induced sub-graph: $\graph^t \leftarrow \graph\left[(\V,\K^t)\right]$\;
    \For{$u \in \V$}{
    Set target factors: $\T_u^{t} \leftarrow \left\{k \in \MaxFac{\graph^t} \mid \partial k \ni u\right\}$\;
    Set residual factors: $\R_u^{t} \leftarrow \K_u^{t} \setminus \T_u^{t}$\;
      Estimate $\thetahatb_{u}^t$ using \textsc{Grise} with accuracy at least $\epsilon = \rho_{\rm{NPC}} \alpha^2 \exp(-\gamma (2L-1)) / (20(1+\gammahat) q^{L-1})$ on the model defined by $\K_u^t, \T_u^t, \R_u^t$ and constraint set $\mathcal{Y}_u$\;
    }
     \tcp{Step 3:~Identify max cliques associated with zero parameters}
        Initialize set of removable factors: $\mathcal{N}^t \leftarrow \emptyset$\;
         \For{$c\in \Maxcli{\graph^t}$}{
         Compute average reconstruction: $\thetahatb^{{\rm{avg}}(t)}_c \leftarrow \left\{\lvert c\rvert^{-1} \sum_{u \in c} (\thetahatb^t_{u})_k  \mid k \in \cspan{c} \right\}$\;
         \If{$\| \thetahatb^{{\rm{avg}}(t)}_c \|_2 < \alpha /2$}{
          Update set of removable factors: $\mathcal{N}^t \leftarrow \mathcal{N}^t \cup \cspan{c}$ \;
            }
         }  
    Update considered factors: $\K^{t+1} \leftarrow \K^{t} \setminus \mathcal{N}^t$\;
}
    \tcp{Step 4:~output structure and non-zero parameters of maximal factors} 
{\bf return} $\widehat{\struct} = \Maxcli{\graph\left[(\V,\K^L)\right]}$ and $\thetahatb_{\mathcal{M}} = \left\{ \thetahat^{{\rm{avg}}(L-1)}_k \mid k \in \MaxFac{\graph\left[(\V,\K^L)\right]} \right\}$\;

\caption{Structure Unveiling and Parameter Reconstruction with Interaction Screening Estimation (\textsc{Suprise})}
\label{alg:suprise}
\end{algorithm}

Suppose that we know $\alpha>0$, a lower-bound on the minimal intensity of the parameters associated with the structure in the sense that $\alpha \leq \min_{c\in\struct(\graph^*)} \sqrt{\sum_{k\in \cspan{c}} {\theta^*_k}^2}$.
Then we can recover the structure and parameters associated with maximal factors of any graphical models using Algorithm~\ref{alg:suprise}, coined \textsc{Suprise} for Structure Unveiling and Parameter Reconstruction with Interaction Screening Estimation. \textsc{Suprise} that implements an iterative use of \textsc{Grise} is shown to have a sample complexity logarithmic in the system size for models with non-zero NPC constants. Our second main result is the following Theorem~\ref{thm:structure_general}, proved in Appendix~\ref{app:structure}, which provides guarantees on \textsc{Suprise}.

\begin{theorem}[\textbf{Reconstruction and Estimation Guarantees for \textsc{Suprise}}]\label{thm:structure_general}
Let $\mu(\sigmab)$ in \eqref{eq:mu_def} be the probability distribution of a discrete graphical model with maximum alphabet size $q$, interaction order $L$, finite maximum interaction strength $\gamma$ and smallest Nonsingular Parameterization of Cliques constant greater than zero, i.e. $\rho_{\rm{NPC}} = \min_{u\in \V} \rho^{\rm{NPC}}_u>0$. Let $\sigmab^{(1)}, \ldots, \sigmab^{(n)}$ be i.i.d. samples drawn according to $\mu(\sigmab)$ and assume that
\begin{align}
   n\geq 2^{14} q^{2\left(L-1\right)} \frac{\gammahat^2 (1+\gammahat)^2 e^{4\gamma L}}{\alpha^{4} \rho_{\rm{NPC}}^2} \log\left(\frac{4 p L \mathbf{K}^2}{\delta}\right),   \label{eq:n_suprise}
\end{align}
where $\mathbf{K} = \max_{u\in \V} \mathbf{K}_u$ is the maximal number of basis functions in which a variable can appear and $\gammahat \geq \max_{u \in \V} \|\thetab^*_u\|_1 $ is our $\ell_1$-prior on the components of the parameters. Then the structure of the general graphical model is perfectly recovered using Algorithm~\ref{alg:suprise}, i.e. $\widehat{\struct} = \struct$, with probability $1-\delta$. In addition, the parameters associated with maximal factors are reconstructed with precision $\max_{ c \in \struct} \sum_{k \in \cspan{c}} \left(\thetahat_k - \theta^*_k \right)^2 \leq \alpha^2/4$ for general models and with $\sum_{ c \in \struct} \sum_{k \in \cspan{c}} \left(\thetahat_k - \theta^*_k \right)^2 \leq \chromo^2 \alpha^2 / 4$ for pairwise models with chromatic number $\chromo$.

The total computational complexity scales as $\widetilde{\mathcal{O}}(p \mathbf{K})$, for fixed $L$, $\alpha$, $\gamma$, $\gammahat$ and $\delta$, if the constraint sets $\mathcal{Y}_u$ are \specialset. 
\end{theorem}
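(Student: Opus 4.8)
The plan is to condition on a single high-probability event on which every one of the $pL$ calls to \textsc{Grise} made by \textsc{Suprise} succeeds, and then argue deterministically that the peeling loop recovers $\struct$ exactly and returns accurate parameters. First I would fix the good event. At iteration $t$ and vertex $u$ the call targets the maximal factors $\T_u^t$ of $\graph^t$; since only factors with $\theta^*_k=0$ are ever deleted (verified below), the model restricted to $\K_u^t$ coincides with the true one, so its interaction strength is still at most $\gamma$ and, by Proposition~\ref{prop:L_wise_NPC} applied to $\graph^t$, Condition~(C1) holds in the $\ell_{\infty,2}$-norm with constant at least $\rho_{\rm{NPC}}(e^{-2\gamma}/q)^{L-1}$. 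A direct substitution then shows that the accuracy $\epsilon$ and the sample size~\eqref{eq:n_suprise} hard-coded in the algorithm meet the hypotheses of Theorem~\ref{thm:grise} for each call, once the failure probability $\delta$ is split across the $pL$ calls through a union bound (this is exactly what turns $\log(4\mathbf{K}^2/\delta)$ into $\log(4pL\mathbf{K}^2/\delta)$). On the resulting event every estimate satisfies $\|\thetahatb^t_{u,\T_u^t}-\thetab^*_{\T_u^t}\|_{\infty,2}\le\alpha/2$.

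Next I would establish correct classification of cliques. For a maximal clique $c$ processed at some iteration, each endpoint $u\in c$ produces an estimate of $\thetab^*_{\cspan{c}}$ whose $\ell_2$ error is at most $\alpha/2$, being one of the blocks controlled by the $\ell_{\infty,2}$ bound, and the triangle inequality passes this to the average $\thetahatb^{\rm{avg}}_c$. Combined with the minimal-intensity hypothesis $\|\thetab^*_{\cspan{c}}\|_2\ge\alpha$ for $c\in\struct$ and $\thetab^*_{\cspan{c}}=0$ otherwise, the threshold test $\|\thetahatb^{\rm{avg}}_c\|_2<\alpha/2$ keeps every structural clique and deletes every zero clique (the boundary case being absorbed by strictness of the error bound).

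The core of the argument is the peeling induction, which I expect to be the main obstacle. I would prove by induction on $t$ the invariant $P(t)$: in $\graph^t$ we have $\K^*\subseteq\K^t$, and every maximal clique of $\graph^t$ of size strictly larger than $L-t$ belongs to $\struct$. The base case $t=0$ is vacuous. For the step, the classification result deletes exactly the zero maximal cliques of $\graph^t$ while keeping the structural ones, and since deletions touch only factors with zero true parameters, $\K^*\subseteq\K^{t+1}$ is preserved. The key geometric observation is that any clique that becomes newly maximal in $\graph^{t+1}$ must be strictly contained in some deleted maximal clique of $\graph^t$, which by $P(t)$ has size at most $L-t$, so all newly maximal cliques have size at most $L-t-1$. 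Consequently every maximal clique of $\graph^{t+1}$ of size strictly larger than $L-(t+1)$ must have survived from $\graph^t$; a survivor is kept only if it is structural, and a structural clique of $\graph^t$ is in $\struct$ because any nonzero factor dominating it would, lying in $\K^*\subseteq\K^t$, contradict its maximality in $\graph^t$. This yields $P(t+1)$, and at $t=L$ the invariant forces every maximal clique of $\graph^L$ into $\struct$, i.e. $\widehat\struct\subseteq\struct$. For the reverse inclusion I would show each $c\in\struct$ is maximal in $\graph^L$: no nonzero factor dominates it, and no residual zero factor can either, since climbing to a maximal clique above such a factor would produce, by $P(L)$, a structural clique strictly containing $c$, contradicting the incomparability of the maximal cliques of $\graph^*$. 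Hence $\widehat\struct=\struct$.

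Finally I would read off the parameter guarantees and the runtime. Every $c\in\struct$ is already maximal in $\graph^{L-1}$ by the same domination argument, so it carries an averaged estimate from the last iteration with $\ell_2$ error at most $\alpha/2$ on $\cspan{c}$; squaring and maximizing over $c$ gives the general bound $\max_{c\in\struct}\sum_{k\in\cspan{c}}(\thetahat_k-\theta^*_k)^2\le\alpha^2/4$. For pairwise models I would instead invoke the $\ell_2$-norm learnability condition of Proposition~\ref{prop:pairwise_NPC}, whose constant carries the factor $1/\chromo$; propagating the algorithm's fixed $\epsilon$ through Theorem~\ref{thm:grise} and aggregating the per-vertex $\ell_2$ guarantees across a proper $\chromo$-coloring of the model factor graph yields the global bound $\sum_{c\in\struct}\sum_{k\in\cspan{c}}(\thetahat_k-\theta^*_k)^2\le\chromo^2\alpha^2/4$; checking that the chromatic bookkeeping produces exactly $\chromo^2$ rather than a system-size-dependent prefactor is the delicate point here. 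The complexity statement follows by multiplying the per-call cost $\widetilde{\mathcal{O}}(\mathbf{K})$ of \textsc{Grise} (Proposition~\ref{prop:computational_complexity}, the projection onto a \specialset\ set being negligible) by the $pL$ calls, with clique averaging and thresholding of lower order, for a total $\widetilde{\mathcal{O}}(pL\mathbf{K})=\widetilde{\mathcal{O}}(p\mathbf{K})$ at fixed $L$.
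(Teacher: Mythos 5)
Your proposal is correct and follows essentially the same route as the paper's own proof: a union bound over the $pL$ calls to \textsc{Grise}, Proposition~\ref{prop:L_wise_NPC} supplying the LLC constant that feeds Theorem~\ref{thm:grise} (which is exactly how $\log(4\mathbf{K}_u^2/\delta)$ becomes $\log(4pL\mathbf{K}^2/\delta)$ and how the $q^{2(L-1)}e^{4\gamma L}$ prefactor arises), followed by an induction over the peeling iterations showing that only zero factors are ever removed and that all non-structural maximal cliques eventually are --- in fact your write-up is considerably more detailed than the paper's own five-line argument, which only sketches the structure-recovery part. Your flagged uncertainty about whether the chromatic bookkeeping yields exactly $\chromo^2$ in the pairwise $\ell_2$ bound is a fair concern (a naive aggregation of per-vertex $\ell_2$ guarantees over a coloring does not obviously avoid a system-size-dependent factor), but the paper's proof is silent on the parameter-estimation claims altogether, so on that point you have gone further than the reference, not fallen short of it.
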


As an application of Theorem~\ref{thm:structure_general}, we show the sample and computational complexity of recovering parameter values and the structure of some well-known special cases in Table~\ref{table:sc_performances}. 
The parameter $\alpha$ appearing in Table~\ref{table:sc_performances} is the precision to which parameters are recovered in the considered norm, $\chi$ is the chromatic number of the graph, $L$ is the interaction order, $q$ is the maximum alphabet size, $\gamma$ is the maximum interaction strength and $p$ is the number of variables. At this point, it is instructive to compare our sample complexity requirements to existing results. A direct application of bounds of \cite{Klivans2017} and \cite{wu2018sparse} to the case of pairwise multi-alphabet models that we consider below yields $O(\exp(14\gamma))$ dependence, whereas \textsc{Suprise} has a complexity that scales as $O(\exp(12\gamma))$. In the case of binary $L$-wise models, while \cite{Klivans2017} shows the $O(\exp(O(\gamma L)))$ scaling, \textsc{Suprise} enjoys a sample complexity $O(\exp(4\gamma L))$. The algorithm of \cite{Ankur2017nips} recovers a subclass of general graphical models with bounded degree, but has a sub-optimal double-exponential scaling in $\gamma$, while \textsc{Suprise} leads to recovery of arbitrary discrete graphical models with a single-exponential dependence in $\gamma$ and needs no bound on the degree. In terms of the computational complexity, \textsc{Suprise} achieves the efficient scaling $\widetilde{O}(p^{L})$ for models with the maximum interaction order $L$, which matches the best-known $\widetilde{O}(p^{2})$ scaling for pairwise models \cite{Klivans2017, wu2018sparse}. In summary, \textsc{Suprise} generalizes, improves and extends the existing results in the literature. The proofs for special cases can be found in Section~\ref{sec:special_cases} of the Supplementary Material.

\begin{table}
\caption{Sample complexity and computational complexity of \textsc{Suprise} over special cases.}
\small
\centering
\setlength{\tabcolsep}{2pt}
\def\arraystretch{1.4}
\begin{tabular}{ |c|c|c|c|c|c| } 
\hline 
Model name & Inter. order & Alphabet size & Recovery type& Sample complexity& Algo. complexity  \\

\hline
Ising & 2 & 2 & structure & $O\left(\alpha^{-4} e^{8 \gamma} \log p  \right)$ & $\widetilde{O}(p^{2}) $ \\
\hline
Ising & 2 & 2 & $\ell_2$-parameter & $O\left(\chromo^2 \alpha^{-4}  e^{8 \gamma} \log p \right)$ & $\widetilde{O}(p^{2})$ \\ 
\hline
Binary & $L$ & 2 & structure & $O\left(\alpha^{-4} 4^L e^{4\gamma L} L \log p\right)$ & $\widetilde{O}(p^{L})$ \\
\hline
Pairwise & 2 & $q$ & structure & $O\left(\alpha^{-4} q^4 e^{12\gamma } \log (p q)\right)$ & $\widetilde{O}(p^{2})$ \\
\hline
Pairwise & 2 & $q$ & $\ell_2$-parameter & $O\left(\chromo^2 \alpha^{-4}  q^4 e^{12\gamma } \log (p q)\right)$ & $\widetilde{O}(p^{2})$ \\
\hline
General & $L$ & $q$ & structure & $O\left(\alpha^{-4} q^{2L} e^{4\gamma (L+1)} L \log (p q)\right)$ & $\widetilde{O}(p^{L})$ \\
\hline
\end{tabular}

\label{table:sc_performances}
\end{table}

\section{Conclusion and future work}
 A key result of our paper is the existence of a computationally efficient algorithm that is able to recover arbitrary discrete graphical models with multi-body interactions. This result is a particular case of the general framework that we have introduced, which considers arbitrary model parametrization and makes distinction between the bounds on the parameters of the underlying model and the prior parameters. The computational complexity $\widetilde{O}(p^{L})$ that we achieve is believed to be efficient for this problem \cite{Klivans2017}. In terms of sample complexity, the information-theoretic bounds for recovery of general discrete graphical models are unknown. In the case of binary pairwise models, the sample complexity bounds resulting from our general analysis are near-optimal with respect to known information-theoretic lower bounds \cite{SanthanamWainwright2012}. It would be interesting to see if the $1/\alpha^4$ factor in our sample complexity bounds can be improved to $1/\alpha^2$ using an $\ell_1$-norm penalty rather than an $\ell_1$-norm constraint, as it has been shown for the particular case of Ising models \cite{lokhov2018science,Vuffray2016nips}.

Other open questions left for future exploration include the possibility to extend the analysis to the case of graphical models with nonlinear parametrizations like in \cite{abhijith2020learning}, and to graphical models with continuous variables. It is particularly interesting to see whether the computationally efficient and nearly sample-optimal method introduced in the present work could be useful for designing efficient learning algorithms that can improve the state-of-the-art in the well-studied case of Gaussian graphical models, for which it has been recently shown that the information-theoretic lower bound on sample complexity is tight \cite{misra2018information}.

\newpage

\section*{Acknowledgments}
Research presented in this article was supported by the Laboratory Directed Research and Development program of Los Alamos National Laboratory under project numbers 20190059DR, 20190195ER, 20190351ER, and 20210078DR.

\vskip 0.2in
\bibliographystyle{plain}
\bibliography{learning_references}

\begin{thebibliography}{10}

\bibitem{Teboulle2003}
Amir Beck and Marc Teboulle.
\newblock Mirror descent and nonlinear projected subgradient methods for convex
  optimization.
\newblock {\em Operations Research Letters}, 31(3):167 -- 175, 2003.

\bibitem{BMN2001}
A.~Ben-Tal, T.~Margalit, and A.~Nemirovski.
\newblock The ordered subsets mirror descent optimization method with
  applications to tomography.
\newblock {\em SIAM Journal on Optimization}, 12(1):79--108, 2001.

\bibitem{Bresler2015}
Guy Bresler.
\newblock Efficiently learning {I}sing models on arbitrary graphs.
\newblock In {\em Proceedings of the Forty-Seventh Annual ACM on Symposium on
  Theory of Computing}, pages 771--782. ACM, 2015.

\bibitem{BreslerMosselSly2013}
Guy Bresler, Elchanan Mossel, and Allan Sly.
\newblock Reconstruction of {M}arkov random fields from samples: Some
  observations and algorithms.
\newblock {\em SIAM Journal on Computing}, 42(2):563--578, 2013.

\bibitem{buczak2016survey}
Anna~L Buczak and Erhan Guven.
\newblock A survey of data mining and machine learning methods for cyber
  security intrusion detection.
\newblock {\em IEEE Communications Surveys \& Tutorials}, 18(2):1153--1176,
  2016.

\bibitem{chaves2015information}
Rafael Chaves, Christian Majenz, and David Gross.
\newblock Information--theoretic implications of quantum causal structures.
\newblock {\em Nature communications}, 6:5766, 2015.

\bibitem{ChowLiu1968}
C.~Chow and C.~Liu.
\newblock Approximating discrete probability distributions with dependence
  trees.
\newblock {\em IEEE Transactions on Information Theory}, 14(3):462--467, May
  1968.

\bibitem{constantinou2016complex}
Anthony~Costa Constantinou, Norman Fenton, William Marsh, and Lukasz Radlinski.
\newblock From complex questionnaire and interviewing data to intelligent
  bayesian network models for medical decision support.
\newblock {\em Artificial intelligence in medicine}, 67:75--93, 2016.

\bibitem{Ankur2017nips}
Linus Hamilton, Frederic Koehler, and Ankur Moitra.
\newblock Information theoretic properties of markov random fields, and their
  algorithmic applications.
\newblock In I.~Guyon, U.~V. Luxburg, S.~Bengio, H.~Wallach, R.~Fergus,
  S.~Vishwanathan, and R.~Garnett, editors, {\em Advances in Neural Information
  Processing Systems 30}, pages 2463--2472. Curran Associates, Inc., 2017.

\bibitem{Jansen2003}
Ronald Jansen, Haiyuan Yu, Dov Greenbaum, Yuval Kluger, Nevan~J. Krogan,
  Sambath Chung, Andrew Emili, Michael Snyder, Jack~F. Greenblatt, and Mark
  Gerstein.
\newblock A bayesian networks approach for predicting protein-protein
  interactions from genomic data.
\newblock {\em Science}, 302(5644):449--453, 2003.

\bibitem{abhijith2020learning}
Abhijith Jayakumar, Andrey Lokhov, Sidhant Misra, and Marc Vuffray.
\newblock Learning of discrete graphical models with neural networks.
\newblock In H.~Larochelle, M.~Ranzato, R.~Hadsell, M.~F. Balcan, and H.~Lin,
  editors, {\em Advances in Neural Information Processing Systems}, volume~33,
  pages 5610--5620. Curran Associates, Inc., 2020.

\bibitem{Klivans2017}
A.~Klivans and R.~Meka.
\newblock Learning graphical models using multiplicative weights.
\newblock In {\em 2017 IEEE 58th Annual Symposium on Foundations of Computer
  Science (FOCS)}, pages 343--354, Oct 2017.

\bibitem{lokhov2018science}
Andrey~Y Lokhov, Marc Vuffray, Sidhant Misra, and Michael Chertkov.
\newblock Optimal structure and parameter learning of {I}sing models.
\newblock {\em Science advances}, 4(3):e1700791, 2018.

\bibitem{misra2018information}
Sidhant Misra, Marc Vuffray, and Andrey~Y Lokhov.
\newblock Information theoretic optimal learning of gaussian graphical models.
\newblock {\em The 33rd Annual Conference on Learning Theory (COLT 2020)},
  2020.

\bibitem{MontanariPereira2009}
Andrea Montanari and Jose~A. Pereira.
\newblock Which graphical models are difficult to learn?
\newblock In Y.~Bengio, D.~Schuurmans, J.~D. Lafferty, C.~K.~I. Williams, and
  A.~Culotta, editors, {\em Advances in Neural Information Processing Systems
  22}, pages 1303--1311. Curran Associates, Inc., 2009.

\bibitem{RavikumarWainwrightLafferty2010}
Pradeep Ravikumar, Martin~J. Wainwright, and John~D. Lafferty.
\newblock High-dimensional {I}sing model selection using $\ell$1-regularized
  logistic regression.
\newblock {\em Ann. Statist.}, 38(3):1287--1319, 06 2010.

\bibitem{SanthanamWainwright2012}
N.~P. Santhanam and M.~J. Wainwright.
\newblock Information-theoretic limits of selecting binary graphical models in
  high dimensions.
\newblock {\em IEEE Transactions on Information Theory}, 58(7):4117--4134, July
  2012.

\bibitem{Vuffray2016nips}
Marc Vuffray, Sidhant Misra, Andrey Lokhov, and Michael Chertkov.
\newblock Interaction screening: Efficient and sample-optimal learning of
  {I}sing models.
\newblock In D.~D. Lee, M.~Sugiyama, U.~V. Luxburg, I.~Guyon, and R.~Garnett,
  editors, {\em Advances in Neural Information Processing Systems 29}, pages
  2595--2603. Curran Associates, Inc., 2016.

\bibitem{wang2013markov}
Chaohui Wang, Nikos Komodakis, and Nikos Paragios.
\newblock Markov random field modeling, inference \& learning in computer
  vision \& image understanding: A survey.
\newblock {\em Computer Vision and Image Understanding}, 117(11):1610--1627,
  2013.

\bibitem{wu2018sparse}
Shanshan Wu, Sujay Sanghavi, and Alexandros~G Dimakis.
\newblock Sparse logistic regression learns all discrete pairwise graphical
  models.
\newblock In {\em Advances in Neural Information Processing Systems}, pages
  8071--8081, 2019.

\end{thebibliography}

\newpage



\appendix 

\section*{Appendices}
Appendix~\ref{app:well_poseness} contains a detailed discussion about Conditions \emph{(C1)} and \emph{(C2)}. In Appendix~\ref{app:proof_theorems}, the reader can find the proofs for the error bound on \textsc{Grise} estimates. Appendix~\ref{app:computational_complexity} contains the proofs relatives to the computational complexity of \textsc{Grise} and its efficient implementation. In Appendix~\ref{app:structure}, the reader can find the proofs in connection with the NPC constant and relative to structure and parameter estimation with \textsc{Suprise}. Finally, Section~\ref{sec:special_cases} contains the proofs for the applications of \textsc{Suprise} for structure and parameter estimation of iconic special cases. 

\section{About well-posedness and local learnability conditions}\label{app:well_poseness}

\subsection{About condition \emph{(C1)}}
To illustrate further why Condition \emph{(C1)} is required, we look at a case where the local constraint set is trivial, i.e. $\mathcal{Y}_i = \Real^{\mathbf{K}_i}$ and we consider a model that violates Condition \emph{(C1)}. This implies that there exists a sequence $\xb_n \in \mathcal{X}_{i}$ such that $\xb_n^{\top} \widetilde{I}(\theta^*) \xb_n/\|\xb_{\T_i}\|^2 < \rho_n  $ with $\rho_n \rightarrow 0$. In the limit, we can find a vector $\underline{x}$ such that $\underline{x}^{\top} \widetilde{I}(\theta^*) \underline{x} = 0$ and $\|\underline{x}_{\T_i}\| = 1$.
In other words, it implies that for this model there exists a vertex $i$ and a perturbation vector $\xb \in \mathcal{X}_i$ such that $\xb_{\T_i} \neq 0$ and for which $\E{\left(\sum_{k \in \K_i} x_k g_{ik} (\sigmab_k) \right )^2} = 0$. Since the probability distribution in Eq.~\eqref{eq:mu_def} is positive, it further implies that for all configurations $\sigmab$ we have the functional equality  $\sum_{k \in \K_i} x_k g_{ik}(\sigmab_k)=0$. This enables us to locally reparameterize the distribution: 
\begin{align}
    \exp \left( \sum_{k \in \K_i} \theta^{*}_k f_k(\sigmab_k) \right ) &= \exp \left(c(\xb, \sigmab_{\setminus i}) + \sum_{k \in \K_i} \left(\theta^{*}_k - x_k\right) f_k(\sigmab_k) \right ),\label{eq:cluster_f}
\end{align}
where $c(\xb,\sigmab_{ \setminus i}) = \sum_{k\in \K_i} x_k \phi_{ik}(\sigmab_{ k \setminus i})$ is a sum of locally centered functions that does not involve the variable $\sigma_i$. At this point, we should distinguish between the two cases when the basis functions $f_k$ are centered or not. When the basis functions are centered, i.e. $f_k = g_k$, the residual in Eq.~\eqref{eq:cluster_f} is identically zero, $c(\xb,\sigmab_{  \setminus i})=0$. Therefore, the probability distribution of the model in \eqref{eq:mu_def} can be reparameterized entirely with $\theta^{*}_k \rightarrow \theta^{*}_k - x_k$ for $k\in \K_i$. It implies, as $\xb_{\T_i} \neq 0$, that there exists two parameterization of the same models with different target parameters and the model selection problem as stated in Definition~\ref{def:model_selection_problem} is ill-posed. In the case when the basis functions are not centered, i.e. $f_k \neq g_k$, it may not be possible to reparameterized the whole distribution of the model. However, the conditional probability distribution $\p(\sigma_i \mid \sigmab_{ \setminus i})$ can be reparameterized as it is proportional to $\exp \left( \sum_{k \in \K_i} \theta^{*}_k f_k(\sigmab_k) \right )$ and $\exp \left( \sum_{k \in \K_i} \left(\theta^{*}_k - x_k\right) f_k(\sigmab_k) \right )$ thanks to Eq.~\eqref{eq:cluster_f}. Thereby, even if the model is uniquely parameterized with $\thetab_{\T_i}$, local methods based on independent neighborhood reconstructions using conditional distributions will fail at selecting a unique model as shown in the following example.
\begin{example}\label{ex:c1_condition} Consider two family of models over two binary variables $\sigma, s \in \{-1,1\}$, parameterized by $\thetab$ and $\underline{\eta}$,
\begin{align}
    \mu_{\thetab}(\sigma, s) \propto \exp\left(\theta_1 \sigma (s-1) + \theta_2 s (\sigma -1) + \theta_3 (\sigma + s)\right), \label{eq:model_fkup}
\end{align}
and
\begin{align}
    \mu_{\underline{\eta}}(\sigma, s) \propto \exp\left(\eta_1 \sigma s+ \eta_2 \sigma + \eta_3 s \right). \label{eq:model_korrect}
\end{align}
Both models are equivalent through the invertible mapping $\eta_1 = \theta_1 + \theta_2$, $\eta_2 = \theta_3 - \theta_1$ and $\eta_3 = \theta_3 - \theta_2$. However the model in Eq.~\eqref{eq:model_korrect} that has centered basis functions satisfies \emph{(C1)} from Condition~\ref{master_condition}, while the model in Eq.~\eqref{eq:model_fkup} that has non-centered basis functions does not. This implies that the parameters $\thetab$ cannot be recovered by looking independently at conditional distributions as they are degenerate in this basis,
\begin{align}
    \p(\sigma \mid s) &\propto \exp\left((\theta_1 +\theta_2) \sigma s + (\theta_3 - \theta_1) \sigma \right),\label{example:top_cond}\\
    \p(s \mid \sigma) &\propto \exp\left((\theta_1 +\theta_2) \sigma s + (\theta_3 - \theta_2) s \right)\label{example:bottom_cond}.
\end{align}
Indeed the change of parameters $\theta_1 \rightarrow \theta_1 + \epsilon$, $\theta_2 \rightarrow \theta_2 - \epsilon$ and $\theta_3 \rightarrow \theta_3 + \epsilon$ leaves the conditional distribution \eqref{example:top_cond} unchanged while the change of parameters $\theta_1 \rightarrow \theta_1 - \epsilon$, $\theta_2 \rightarrow \theta_2 + \epsilon$ and $\theta_3 \rightarrow \theta_3 - \epsilon$ leaves the conditional distribution \eqref{example:bottom_cond} unaffected. Note that there does not exist a change of parameters that leaves both \eqref{example:top_cond} and \eqref{example:bottom_cond} unchanged. This is in agreement with the fact that the model in Eq.~\eqref{eq:model_fkup} is uniquely parameterized and can be in principle recovered by looking \emph{jointly} at both conditional distributions. 
\end{example}

For the specific models that we considered in Section~\ref{sec:special_cases}, the basis functions are always centered, which implies that failure to satisfy \emph{(C1)} means that the model selection problem is ill-posed.

\subsection{About condition \emph{(C2)}}
The bound on the interaction strength in \emph{(C2)} translates directly into a uniform bound on the conditional probabilities of the models as shown in the following lemma.
\begin{lemma}[Lower-Bounded Conditional Probabilities]\label{lem:cond_prop_bound}
Consider a graphical model with bounded maximum interaction strength of $\gamma = \max_{i\in \V} | \max_{\sigmab}\sum_{k \in \K_i} \theta^{*}_k g_{ik}(\sigmab_k)|$. Then for any two disjoint subsets of vertices $A, B\subseteq \V$ the conditional probability of $\sigmab_{A}$ given $\sigmab_{B}$ is bounded from below,
\begin{align}
\p(\sigmab_{A} \mid \sigmab_{B}) \geq \prod_{i \in A} \frac{\exp(-2\gamma)}{|\A_i|},\label{eq:cond_prop_bound}
\end{align}
where $|\A_i|$ is the alphabet size of $\sigma_i$.
\end{lemma}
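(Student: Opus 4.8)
The plan is to reduce the statement to a bound on single-variable conditionals and then reassemble it through the chain rule. First I would order the vertices of $A$ as $i_1,\ldots,i_m$ and write
\[
\p(\sigmab_A \mid \sigmab_B) = \prod_{j=1}^m \p\!\left(\sigma_{i_j}\mid \sigmab_{\{i_1,\ldots,i_{j-1}\}}, \sigmab_B\right),
\]
so that it suffices to show, for every vertex $i$ and every subset $S\subseteq\V\setminus\{i\}$, that $\p(\sigma_i\mid\sigmab_S)\geq \exp(-2\gamma)/|\A_i|$. Each factor in the product is of this form with $S = \{i_1,\ldots,i_{j-1}\}\cup B$, so this single-variable statement is all that is needed.

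The core computation is the case $S=\V\setminus\{i\}$. Here I would write the conditional distribution explicitly from \eqref{eq:mu_def}: since the factors $f_k$ with $k\notin\K_i$ do not involve $\sigma_i$, they cancel between numerator and denominator, and similarly the centering terms $\phi_{ik}(\sigmab_{k\setminus i})$ appearing in $g_{ik}=f_k-\phi_{ik}$ from \eqref{eq:center_basis} are independent of $\sigma_i$ and cancel as well. This leaves
\[
\p(\sigma_i\mid\sigmab_{\setminus i}) = \frac{\exp\!\big(\sum_{k\in\K_i}\theta^*_k g_{ik}(\sigmab_k)\big)}{\sum_{\sigma_i'\in\A_i}\exp\!\big(\sum_{k\in\K_i}\theta^*_k g_{ik}(\sigmab_k)\big)}.
\]
By the definition of $\gamma$ in \eqref{eq:gammadef}, the exponent $\sum_{k\in\K_i}\theta^*_k g_{ik}$ lies in $[-\gamma,\gamma]$ for every configuration, so the numerator is at least $e^{-\gamma}$ while each of the $|\A_i|$ summands in the denominator is at most $e^{\gamma}$; dividing gives $\p(\sigma_i\mid\sigmab_{\setminus i})\geq e^{-2\gamma}/|\A_i|$.

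To pass from full conditioning to an arbitrary conditioning set $S$, I would marginalize over the remaining variables $T=\V\setminus(S\cup\{i\})$ via the law of total probability,
\[
\p(\sigma_i\mid\sigmab_S) = \sum_{\sigmab_T}\p(\sigma_i\mid\sigmab_T,\sigmab_S)\,\p(\sigmab_T\mid\sigmab_S),
\]
and then apply the full-conditioning bound to each $\p(\sigma_i\mid\sigmab_T,\sigmab_S)=\p(\sigma_i\mid\sigmab_{\setminus i})$; since $\sum_{\sigmab_T}\p(\sigmab_T\mid\sigmab_S)=1$, the lower bound $e^{-2\gamma}/|\A_i|$ survives as a convex combination. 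Substituting into the chain-rule product yields the claimed bound. The only delicate point — and the step I would verify most carefully — is the cancellation argument: one must check that both the non-incident factors and the centering terms drop out of the conditional ratio, which is precisely what makes the centered functions $g_{ik}$, and hence the constant $\gamma$, the correct objects for controlling the conditional distribution. Everything else is bookkeeping.
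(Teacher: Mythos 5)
Your proposal is correct and follows essentially the same route as the paper's proof: reduce to the single-variable, fully-conditioned case via the chain rule over $A$, establish $\p(\sigma_i\mid\sigmab_{\setminus i})\geq e^{-2\gamma}/|\A_i|$ by cancelling the non-incident factors and the centering terms $\phi_{ik}$ so that only $\sum_{k\in\K_i}\theta_k^* g_{ik}$ remains in the exponent, and then extend to arbitrary conditioning sets by marginalizing over the unconditioned variables. The paper presents the same three steps in the reverse order (single-variable case first, then partial conditioning, then the chain rule over $A$), but the argument is identical.
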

\begin{proof}[Proof of Lemma~\ref{lem:cond_prop_bound}: Lower-bound on conditional probabilities]
We start by bounding the conditional probability of one variable $\sigma_i$ given the rest $\sigmab_{\setminus{i}}$. This is given by the following expression,
\begin{align}
    \p(\sigma_i \mid \sigmab_{\setminus{i}}) &= \frac{\exp\left(\sum_{k\in \K_i} \theta^*_k f_k(\sigmab_k)\right)}{\sum_{\sigma_i \in \A_i} \exp\left(\sum_{k\in \K_i} \theta^*_k f_k(\sigmab_k)\right)}, \\
    &= \frac{\exp\left(\sum_{k\in \K_i} \theta^*_k \left(g_{ik}(\sigmab_k) + \phi_{ik}(\sigma_{k\setminus i})\right)\right)}{\sum_{\sigma_i \in \A_i} \exp\left(\sum_{k\in \K_i} \theta^*_k \left(g_{ik}(\sigmab_k) + \phi_{ik}(\sigma_{k\setminus i})\right)\right)}, \\
    &=\frac{\exp\left(\sum_{k\in \K_i} \theta^*_k g_{ik}(\sigmab_k)\right)}{\sum_{\sigma_i \in \A_i} \exp\left(\sum_{k\in \K_i} \theta^*_k g_{ik}(\sigmab_k) \right)},
\end{align}
as the centering functions $\phi_{ik}(\sigma_{k\setminus i})$ are independent of $\sigma_i$. The last expression can be simply bounded away from zero,
\begin{align}
    \p(\sigma_i \mid \sigmab_{\setminus{i}}) = \frac{\exp\left(\sum_{k\in \K_i} \theta^*_k g_{ik}(\sigmab_k)\right)}{\sum_{\sigma_i \in \A_i} \exp\left(\sum_{k\in \K_i} \theta^*_k g_{ik}(\sigmab_k)\right)} \geq \frac{\exp(-2\gamma)}{|\A_i|},\label{eq:var_comparison_first}
\end{align}
using $\gamma \geq |\sum_{k\in \K_i} \theta^*_k g_{ik}(\sigmab_k)|$.

Now we consider the conditional probability of one variable $\sigma_i$ given a subset of variable $\sigmab_{B}$ where $B \subseteq\V$ and $i\notin B$. Denote the complementary set of ${i}$ and $B$ by $S=\V \setminus{\left(\{i\}\cup B\right)}$. Then using the chain rule and the inequality from Eq.~\eqref{eq:var_comparison_first} we find the following lower-bound,
\begin{align}
    \p(\sigma_i \mid \sigmab_{B}) &= \sum_{\sigmab_S} \p(\sigma_{i}, \sigmab_S \mid \sigmab_{B})\\
    &= \sum_{\sigmab_S} \p(\sigma_{i}\mid \sigmab_{S}, \sigmab_B )\p(\sigmab_S  \mid \sigmab_{B})\\
    &= \sum_{\sigmab_S} \p(\sigma_{i}\mid \sigmab_{\setminus{i}} )\p(\sigmab_S  \mid \sigmab_{B})\\
    &\geq \frac{\exp(-2\gamma)}{|\A_i|}.\label{eq:var_comparison_second}
\end{align}

Finally we consider the conditional probability of a set of variable $\sigmab_{A}$ given another set $\sigmab_{B}$ where $A \cap B = \emptyset$. Denote the vertices in $A$ by $\left\{1,2,\ldots,|A|\right\}$. Then using the chain rule and the inequality from Eq.~\eqref{eq:var_comparison_first}, we obtain the desired result,
\begin{align}
\p(\sigmab_{A} \mid \sigmab_{B}) &= \prod_{j=1,\ldots,|A|}\p(\sigma_j, \mid \sigmab_{B}, \sigma_{j+1},\ldots, \sigma_{|A|}),\\
&\geq \prod_{j=1,\ldots,|A|} \frac{\exp(-2\gamma)}{|\A_j|}.
\end{align}
\end{proof}

\section{Proofs of \textsc{Grise} estimation error bound}\label{app:proof_theorems}

\begin{proposition}[Gradient Concentration for \textsc{Grise}] \label{prop:gradient_bound_grise}
For some node $u\in V$, let $n>\frac{2e^{2\gamma}}{\epsilon^2_1}\log(\frac{2\mathbf{K}_u}{\delta_1})$, then with probability at least $1-\delta_1$ the components of the gradient of the GISO are bounded from above as
\begin{align}
    \|\nabla\Sc_n(\thetab^*_u) \|_{\infty} < \epsilon_1.
\end{align}
\end{proposition}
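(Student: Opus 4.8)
The plan is to recognize each coordinate of $\nabla\Sc_n(\thetab^*_u)$ as an empirical average of bounded, i.i.d., mean-zero random variables, and then to apply Hoeffding's inequality coordinatewise followed by a union bound over the $\mathbf{K}_u$ components. Differentiating the GISO with respect to $\theta_k$ and evaluating at $\thetab^*_u$, the $k$-th component is
\[
[\nabla\Sc_n(\thetab^*_u)]_k = -\frac{1}{n}\sum_{t=1}^n g_{uk}(\sigmab_k^{(t)})\exp\left(-\sum_{k'\in\K_u}\theta^*_{k'}g_{uk'}(\sigmab_{k'}^{(t)})\right),
\]
so that $[\nabla\Sc_n(\thetab^*_u)]_k$ is the sample mean of i.i.d.\ copies of $X^{(k)} := -g_{uk}(\sigmab_k)\exp(-\sum_{k'}\theta^*_{k'}g_{uk'}(\sigmab_{k'}))$.

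The essential step, which is exactly the ``screening'' property of the GISO, is to show $\E{X^{(k)}}=0$ for every $k$. I would condition on $\sigmab_{\setminus u}$ and evaluate the inner expectation over $\sigma_u$ using the conditional law. Just as in the proof of Lemma~\ref{lem:cond_prop_bound}, the local centering terms $\phi_{uk}$ are independent of $\sigma_u$ and cancel, giving $\p(\sigma_u\mid\sigmab_{\setminus u}) = \exp(\sum_{k'}\theta^*_{k'}g_{uk'}(\sigmab_{k'}))/Z_u$ with $Z_u = \sum_{\sigma_u\in\A_u}\exp(\sum_{k'}\theta^*_{k'}g_{uk'}(\sigmab_{k'}))$. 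Since the exponential factor in $X^{(k)}$ is precisely the reciprocal of the numerator of this conditional, it cancels and the inner expectation collapses to $-\frac{1}{Z_u}\sum_{\sigma_u\in\A_u}g_{uk}(\sigmab_k)$, which vanishes because the locally centered functions satisfy $\sum_{\sigma_u\in\A_u}g_{uk}(\sigmab_k)=0$. Taking the outer expectation over $\sigmab_{\setminus u}$ then yields $\E{X^{(k)}}=0$.

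With the mean identified, I would turn to boundedness. The normalization \eqref{eq:normalization} gives $|g_{uk}|\le 1$, and Condition~(C2) gives $|\sum_{k'}\theta^*_{k'}g_{uk'}(\sigmab_{k'})|\le\gamma$, so $X^{(k)}\in[-e^\gamma,e^\gamma]$, an interval of width $2e^\gamma$. Applying Hoeffding's inequality to the zero-mean average gives $\p(|[\nabla\Sc_n(\thetab^*_u)]_k|\ge\epsilon_1)\le 2\exp(-n\epsilon_1^2/(2e^{2\gamma}))$, and a union bound over the $\mathbf{K}_u$ coordinates gives $\p(\|\nabla\Sc_n(\thetab^*_u)\|_\infty\ge\epsilon_1)\le 2\mathbf{K}_u\exp(-n\epsilon_1^2/(2e^{2\gamma}))$. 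Substituting the hypothesis $n>\frac{2e^{2\gamma}}{\epsilon_1^2}\log(2\mathbf{K}_u/\delta_1)$ makes the right-hand side strictly smaller than $\delta_1$, which establishes the claim.

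I expect the only genuine obstacle to be the mean-zero computation: one must verify carefully that the centering functions $\phi_{uk}$ drop out of the conditional distribution and that the surviving sum runs over $\sigma_u$ (the same index against which the $g_{uk}$ are centered), so that the screening cancellation is exact rather than merely approximate. The boundedness estimate and the Hoeffding plus union-bound bookkeeping are routine, and the constants are arranged so that the stated sample size is exactly the threshold at which the failure probability drops below $\delta_1$.
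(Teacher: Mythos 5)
Your proposal is correct and follows essentially the same route as the paper: identify each gradient coordinate as an i.i.d.\ average of the bounded random variable $X_{uk}$, establish $\E{X_{uk}}=0$ via the screening cancellation, bound $|X_{uk}|\leq e^{\gamma}$ using \eqref{eq:normalization} and Condition \emph{(C2)}, and finish with Hoeffding plus a union bound over $k\in\K_u$. The only difference is that you spell out the mean-zero computation explicitly (correctly), whereas the paper relegates it to Lemma~\ref{lem:zero_expectation} with the proof ``simple computation.''
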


Define the residual of the first order Taylor expansion as 
\begin{align}
    \delta \Sc_n(\Delta, \thetab_u^*) = \Sc_n(\thetab_u^* + \Delta) - \Sc_n(\thetab_u^*)  - \langle\nabla \Sc_n(\thetab_u^*),\Delta \rangle.
\end{align}
\begin{proposition}[Restricted Strong Convexity for \textsc{Grise}] \label{prop:rsc_grise}
For some node $u\in \V$, let $n > \frac{2}{\epsilon_2^2}\log\left(\frac{2\mathbf{K}_u^2}{\delta_2}\right)$ and assume that Condition~\ref{master_condition} holds for some norm $\|\cdot\|$. Then, with probability at least $1-\delta_2$ the error of the first order Taylor expansion of the GISO satisfies
\begin{align}
    \delta \Sc_n(\Delta, \thetab_u^*) \geq \exp(-\gamma)\frac{\rho_u \|\Delta_{\T_u}\|^2 - \epsilon_2 \|\Delta\|_1^2}{2+\|\Delta\|_1}.
\end{align}
for all $\Delta \in \mathcal{X}_u \subseteq \Real^{\mathbf{K}_u}$.
\end{proposition}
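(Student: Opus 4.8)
The plan is to begin by evaluating the Taylor residual exactly on a per-sample basis. Writing $w_k^{(t)} = g_{uk}(\sigmab_k^{(t)})$ for the value of the centered basis function on sample $t$, denoting the vector $(w_k^{(t)})_{k\in\K_u}$ by $w^{(t)}$, and abbreviating $z_t = \sum_{k\in\K_u}\Delta_k w_k^{(t)}$ and $a_t = \exp(-\sum_{k\in\K_u}\theta_k^* w_k^{(t)})$, each summand of $\Sc_n$ factorizes as $\exp(-\sum_k(\theta_k^*+\Delta_k)w_k^{(t)}) = a_t e^{-z_t}$, while the gradient term contributes $-a_t z_t$. Hence the per-sample residual collapses to $a_t(e^{-z_t}-1+z_t)$, so that $\delta\Sc_n(\Delta,\thetab_u^*) = \frac{1}{n}\sum_t a_t(e^{-z_t}-1+z_t)$.

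Next I would invoke the elementary scalar inequality $e^{-z}-1+z \geq \frac{z^2}{2+|z|}$, valid for all real $z$, which one checks by verifying that $(2+|z|)(e^{-z}-1+z)-z^2$ together with its first two derivatives vanishes at the origin and that a sufficiently high derivative stays nonnegative on each sign of $z$. Two uniform bounds then simplify the prefactor and the denominator: by the normalization $\max_{\sigmab_k}|g_{uk}(\sigmab_k)|\leq 1$ we get $|z_t|\leq\|\Delta\|_1$, hence $2+|z_t|\leq 2+\|\Delta\|_1$; and by Condition (C2) we have $|\sum_k\theta_k^* w_k^{(t)}|\leq\gamma$, hence $a_t\geq e^{-\gamma}$. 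Combining these gives $\delta\Sc_n(\Delta,\thetab_u^*)\geq\frac{e^{-\gamma}}{2+\|\Delta\|_1}\cdot\frac{1}{n}\sum_t z_t^2$, where $\frac{1}{n}\sum_t z_t^2 = \Delta^\top\widehat M_n\Delta$ is the empirical second-moment quadratic form with $\widehat M_n = \frac{1}{n}\sum_t w^{(t)}(w^{(t)})^\top$.

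The remaining and most delicate step is to replace the empirical quadratic form by its population counterpart $\Delta^\top M\Delta = \E{(\sum_k\Delta_k g_{uk}(\sigmab_k))^2}$ uniformly in $\Delta$. Rather than attempt a uniform law over the continuum of directions, I would control $\widehat M_n - M$ entrywise. Each entry $(\widehat M_n - M)_{kk'}$ is an average of the bounded variables $w_k^{(t)}w_{k'}^{(t)}\in[-1,1]$, so Hoeffding gives $\p(|(\widehat M_n - M)_{kk'}|\geq\epsilon_2)\leq 2e^{-n\epsilon_2^2/2}$; a union bound over the $\mathbf{K}_u^2$ index pairs, combined with the hypothesis $n > \frac{2}{\epsilon_2^2}\log(2\mathbf{K}_u^2/\delta_2)$, shows that with probability at least $1-\delta_2$ we have $\max_{k,k'}|(\widehat M_n - M)_{kk'}|<\epsilon_2$ simultaneously. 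On this event, bounding the quadratic form by its largest entry in absolute value times $\|\Delta\|_1^2$ yields $|\Delta^\top(\widehat M_n - M)\Delta|\leq\epsilon_2\|\Delta\|_1^2$ for every $\Delta$, so that $\Delta^\top\widehat M_n\Delta\geq\Delta^\top M\Delta - \epsilon_2\|\Delta\|_1^2$.

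Finally, since $\Delta\in\mathcal{X}_u$, Condition (C1) gives $\Delta^\top M\Delta\geq\rho_u\|\Delta_{\T_u}\|^2$, and substituting back produces the claimed bound $\delta\Sc_n(\Delta,\thetab_u^*)\geq\frac{e^{-\gamma}}{2+\|\Delta\|_1}(\rho_u\|\Delta_{\T_u}\|^2 - \epsilon_2\|\Delta\|_1^2)$ uniformly over $\Delta\in\mathcal{X}_u$. The main obstacle, and the reason the sample complexity takes exactly this form, is the uniform entrywise concentration: the entrywise-plus-factoring device is what lets a single high-probability event cover all perturbation directions at once while producing precisely the $\epsilon_2\|\Delta\|_1^2$ slack that the constrained $\ell_1$ geometry of $\mathcal{X}_u$ can later absorb.
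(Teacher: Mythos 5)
Your proposal is correct and follows essentially the same route as the paper's proof: the per-sample factorization $a_t(e^{-z_t}-1+z_t)$ with the scalar inequality $e^{-z}-1+z\geq z^2/(2+|z|)$ is the paper's Lemma~\ref{lem:functional_ineuqality} and Lemma~\ref{lem:rsc_interim}, and your entrywise Hoeffding-plus-union-bound control of $\widehat M_n-M$ with the $\epsilon_2\|\Delta\|_1^2$ slack is exactly the paper's Lemma~\ref{lem:H_concentration} and the final two-step lower bound via Condition~(C1). No gaps.
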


We first prove Theorem~\ref{thm:grise} before proving the propositions.
\begin{proof}[Proof of Theorem~\ref{thm:grise}: Error Bound on \textsc{Grise}]
For some node $u\in\V$, let $n\geq \frac{2^{14} \gammahat^2 (1+\gammahat)^2 e^{4\gamma}}{\alpha^{4} \rho_u^{2}} \log(\frac{4\mathbf{K}_u^2}{\delta})$. As the estimate $\widehat{\thetab}_u$ is an $\epsilon$-optimal point of the GISO and $\thetab^{*}_u$ lies in the constraint set from Eq.~\eqref{eq:local_constraint}, we find that for $\Delta = \widehat{\thetab}_u - \thetab^*_u$
\begin{align}
 \epsilon &\geq \Sc_n(\widehat{\thetab}_u) - \Sc_n(\thetab^*_u)\\
 &= \langle\nabla \Sc_n(\thetab_u^*),\Delta \rangle +  \delta \Sc_n(\Delta, \thetab_u^*)\\
 &\geq - \|\nabla\Sc_n(\thetab^*_u) \|_{\infty} \|\Delta\|_{1} + \delta \Sc_n(\Delta, \thetab_u^*).
\end{align}
Using the union bound on Proposition~\ref{prop:gradient_bound_grise} and Proposition~\ref{prop:rsc_grise} with $\delta_1=\delta_2=\frac{\delta}{2}$ and
\begin{align}
\epsilon \leq \frac{\rho_u \alpha^{2} e^{-\gamma}}{20(1+\gamma)}, \epsilon_1 = \frac{\rho_u \alpha^{2} e^{-\gamma}}{40 \gammahat (1+\gammahat)}, \epsilon_2 = \frac{\rho_u \alpha^{2}}{80 \gammahat^2},
\end{align}
we can express the inequality as
\begin{align}
 \epsilon \geq - \epsilon_1 \|\Delta\|_{1} + e^{-\gamma}\frac{\rho_u \|\Delta_{\T_u}\|^2 - \epsilon_2 \|\Delta\|^{2}_{1}}{2 +\|\Delta\|_{1}}.
\end{align}
Since by assumptions $\|\thetab^*_u\|_1 \leq \gamma$ and $\|\widehat{\thetab}_u\|_1 \leq \gammahat$ for $\gamma\leq\gammahat$ as the estimate is an $\epsilon$-optimal point of the \emph{$\ell_1$-constrained} GISO, the error $\|\Delta\|_1$ is bounded by $2\gammahat$. By choosing 

and after some algebra, we obtain that
\begin{align}
\|\Delta_{\T_u}\| \leq \frac{\alpha}{2}.
\end{align}
\end{proof}

\subsection{Gradient concentration}
The components of the gradient of the GISO is given by 
\begin{align}
    \frac{\partial}{\partial \theta_k} \Sc_n(\thetab^{*}_u) = \frac{1}{n} \sum_{t=1}^{n} -g_{uk}(\sigmab_k^{(t)}) \exp \left( -\sum_{l \in \K_u} \theta^{*}_l g_{ul}(\sigmab_l^{(t)})   \right).
\end{align}
Each term in the summation above is distributed as the random variable 
\begin{align}
    X_{uk} = -g_{uk}(\sigmab_k) \exp \left( -\sum_{l \in \K_u} \theta^{*}_k g_{ul}(\sigmab_l)   \right) \quad \forall k \in \K_u \label{eq:gradient}.
\end{align}

\begin{lemma} \label{lem:zero_expectation}
For any $u \in \V$ and $k \in \K_u$, we have
\begin{align}
    \E{X_{uk}} = 0.
\end{align}
\end{lemma}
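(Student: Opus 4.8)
The plan is to compute $\E{X_{uk}}$ exactly by conditioning on all variables other than $\sigma_u$ and invoking the tower property, writing $\E{X_{uk}} = \E{\Econd{\sigma_u \mid \sigmab_{\setminus u}}{X_{uk}}}$. The crucial observation is that the ``screening'' exponential $\exp(-\sum_{l \in \K_u} \theta^*_l g_{ul}(\sigmab_l))$ appearing in $X_{uk}$ is, up to the local centering terms, the reciprocal of the conditional law of $\sigma_u$, so the two factors cancel and only a sum of centered basis functions survives.

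First I would recall the conditional distribution derived in the proof of Lemma~\ref{lem:cond_prop_bound}, namely
\begin{align}
\p(\sigma_u \mid \sigmab_{\setminus u}) = \frac{\exp\left(\sum_{l \in \K_u} \theta^*_l g_{ul}(\sigmab_l)\right)}{\sum_{\sigma_u \in \A_u}\exp\left(\sum_{l \in \K_u} \theta^*_l g_{ul}(\sigmab_l)\right)},
\end{align}
which holds because only the factors in $\K_u$ depend on $\sigma_u$, and because the local centering terms $\phi_{ul}$ are independent of $\sigma_u$ and hence drop out of the normalized ratio. Substituting this into the inner conditional expectation gives
\begin{align}
\Econd{\sigma_u \mid \sigmab_{\setminus u}}{X_{uk}} = \frac{-\sum_{\sigma_u \in \A_u} g_{uk}(\sigmab_k)}{\sum_{\sigma_u \in \A_u}\exp\left(\sum_{l \in \K_u} \theta^*_l g_{ul}(\sigmab_l)\right)},
\end{align}
where the screening factor has exactly cancelled the $\sigma_u$-dependent exponential in the numerator of $\p(\sigma_u \mid \sigmab_{\setminus u})$. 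The remaining numerator vanishes by the defining centering property of the locally centered basis functions, $\sum_{\sigma_u \in \A_u} g_{uk}(\sigmab_k) = 0$, stated immediately below Eq.~\eqref{eq:center_basis}. Hence the conditional expectation is identically zero for every value of $\sigmab_{\setminus u}$, and taking the outer expectation yields $\E{X_{uk}} = 0$.

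I do not expect a genuine obstacle here: the claim is an exact algebraic identity rather than a concentration statement. The only point requiring care is bookkeeping of which factors depend on $\sigma_u$ — every $l \in \K_u$ does, while the factors outside $\K_u$ and the partition function $Z$ are constants with respect to $\sigma_u$ and are absorbed into $\p(\sigmab_{\setminus u})$ under the tower property. This cancellation is precisely the screening property that motivates the GISO, and it is exactly what guarantees that $\thetab^*_u$ is a stationary point of the population objective, so that the gradient concentrates around zero in Proposition~\ref{prop:gradient_bound_grise}.
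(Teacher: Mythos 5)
Your proof is correct and is exactly the ``simple computation'' the paper leaves implicit: conditioning on $\sigmab_{\setminus u}$, cancelling the screening exponential against the conditional law of $\sigma_u$, and invoking $\sum_{\sigma_u \in \A_u} g_{uk}(\sigmab_k) = 0$. No gap, and no difference in approach beyond your having written out the details.
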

\begin{proof}
Simple computation.
\end{proof}

\begin{proof}[Proof of Proposition~\ref{prop:gradient_bound_grise}: Gradient Concentration for \textsc{Grise}]
The random variable $X_{uk}$ is bounded as
\begin{align}
    |X_{uk}| &= |g_{uk}(\sigmab_k)| \exp \left( -\sum_{k \in \K_u} \theta^*_k g_{uk}(\sigmab_k)  \right) \leq \exp(\gamma).
\end{align}
Using Lemma~\ref{lem:zero_expectation} and the Hoeffding inequality, we get
\begin{align}
    \p \left( \left|\frac{\partial}{\partial \theta_k} \Sc_n(\thetab^*_u)\right| > \epsilon_1 \right) < 2 \exp \left(- \frac{n \epsilon_1^2}{2 e^{2\gamma}} \right).  \label{eq:single_vertex_hoeffding}
\end{align}
The proof follows by using \eqref{eq:single_vertex_hoeffding} and the union bound over all $k \in \K_u$.
\end{proof}

\subsection{Restricted strong convexity}


We make use of the following deterministic functional inequality derived in \cite{Vuffray2016nips}.
\begin{lemma} \label{lem:functional_ineuqality}
The following inequality holds for all $z \in \Real$.
\begin{align}
    e^{-z} - 1 + z \geq \frac{z^2}{2 + |z|}.
\end{align}
\end{lemma}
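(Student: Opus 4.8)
The plan is to clear the positive denominator $2+|z|>0$ and prove the equivalent polynomial-exponential inequality $(2+|z|)(e^{-z}-1+z)\ge z^2$ for all $z\in\Real$. Since the exponential is not symmetric under $z\mapsto -z$ while the factor $|z|$ is, I would split into the two cases $z\ge 0$ and $z\le 0$, noting first that both sides vanish at $z=0$ so equality holds there. In each case the strategy is identical: define the gap as a single smooth function of one real variable, check that it and several of its derivatives vanish at the origin, and then exhibit a higher-order derivative with a definite sign, so that integrating up from the vanishing initial data recovers nonnegativity of the gap itself.

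For $z\ge 0$ I would set $g(z)=(2+z)(e^{-z}-1+z)-z^2$ and record $g(0)=0$. A short differentiation collapses to the clean expression $g'(z)=1-(1+z)e^{-z}$, so that $g'(0)=0$, and differentiating once more yields $g''(z)=z\,e^{-z}$, which is manifestly nonnegative on $[0,\infty)$. Hence $g'$ is nondecreasing from the value $g'(0)=0$, forcing $g'\ge 0$, and therefore $g$ is nondecreasing from $g(0)=0$, giving $g\ge 0$ on the nonnegative axis as required.

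For $z\le 0$ I would substitute $z=-w$ with $w\ge 0$, reducing the claim to $(2+w)(e^{w}-1-w)\ge w^2$. Writing $h(w)=(2+w)(e^{w}-1-w)-w^2$, the same telescoping of derivatives at the origin occurs, but one order deeper: here $h(0)=h'(0)=h''(0)=0$, while $h'''(w)=(5+w)e^{w}>0$ for $w\ge 0$. Integrating this strictly positive third derivative up three times against the vanishing initial data yields $h\ge 0$, completing the second case and hence the lemma.

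The argument is elementary, and the only real care needed is the bookkeeping: identifying the correct order at which differentiation at $z=0$ first produces a definitely-signed quantity (the second derivative in the $z\ge 0$ branch, the third in the $z\le 0$ branch) and verifying that all intermediate derivatives genuinely vanish at $0$. No concentration or probabilistic input is involved here; this is purely the deterministic functional ingredient that is later fed into the restricted-strong-convexity estimate for the GISO.
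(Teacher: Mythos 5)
Your proof is correct; I checked the computations: in the $z\ge 0$ branch, $g(z)=(2+z)(e^{-z}-1+z)-z^2=(2+z)e^{-z}+z-2$ indeed satisfies $g(0)=0$, $g'(z)=1-(1+z)e^{-z}$, $g''(z)=z e^{-z}\ge 0$; in the $z\le 0$ branch, $h(w)=(2+w)(e^w-1-w)-w^2$ satisfies $h(0)=h'(0)=h''(0)=0$ and $h'''(w)=(5+w)e^w>0$, so both monotonicity chains close. The paper's proof is the same elementary idea executed one derivative earlier and without clearing the denominator: it notes that the difference $D(z)=e^{-z}-1+z-\frac{z^2}{2+|z|}$ vanishes at $z=0$ and asserts that $D'(z)>0$ for $z>0$ and $D'(z)<0$ for $z<0$, whence $D$ grows away from the origin on both sides. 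The trade-off is worth noting: the paper's one-line derivative-sign claim is stated without verification and is not entirely immediate --- for $z>0$ one finds $D'(z)=\frac{4}{(2+z)^2}-e^{-z}$, whose positivity is equivalent to $e^{z/2}>1+z/2$, and the $z<0$ case requires a similar small argument. Your version replaces that hidden step with extra differentiation bookkeeping: after multiplying by $2+|z|$, every derivative you need is a polynomial times an exponential with a manifest sign, so each step is verifiable at sight. Both routes are equally elementary; yours is longer but fully self-contained, while the paper's is shorter but delegates the key sign computation to the reader.
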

\begin{proof}[Proof of Lemma~\ref{lem:functional_ineuqality}]
Note that the inequality is true for $z=0$ and the first derivative of the difference is positive for $z>0$ and negative for $z<0$.
\end{proof}

Let $H_{k_1k_2}$ denote the correlation between $g_{k_1}$ and $g_{k_2}$ defined as
\begin{align}
    H_{k_1k_2} = \E{g_{uk_1}(\sigmab_{k_1})g_{uk_2}(\sigmab_{k_2})},
\end{align}
and let $H = [H_{k_1k_2}] \in \Real^{|\K_u|\times|\K_u|}$ be the corresponding matrix. We define $\hat H$ similarly based on the empirical estimates of the correlation $\hat{H}_{k_1k_2} = \frac{1}{n} \sum_{t=1}^{n} g_{uk_1}(\sigmab_{k_1}^{(t)})g_{uk_2}(\sigmab_{k_2}^{(t)})$. The following lemma bounds the deviation between the above two quantities.

\begin{lemma} \label{lem:H_concentration}
Choose some node $u\in \V$. With probability at least $1 - 2\mathbf{K}_u^2\exp \left(-\frac{n \epsilon_2^2}{2}\right)$, we have
\begin{align}
    |\hat{H}_{k_1k_2} - H_{k_1k_2}| < \epsilon_2,
\end{align}
for all $k_1, k_2 \in \K_u$. 
\end{lemma}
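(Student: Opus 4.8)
The plan is to establish the bound by a direct application of Hoeffding's inequality to each individual entry of the empirical correlation matrix, followed by a union bound over all pairs of factors in $\K_u$. This parallels the gradient concentration argument already carried out in the proof of Proposition~\ref{prop:gradient_bound_grise}.

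First I would fix a pair $k_1, k_2 \in \K_u$ and note that $\hat H_{k_1 k_2} = \frac{1}{n}\sum_{t=1}^n Y_t$ is an empirical average of the i.i.d. random variables $Y_t = g_{uk_1}(\sigmab_{k_1}^{(t)}) g_{uk_2}(\sigmab_{k_2}^{(t)})$, each with expectation exactly $H_{k_1 k_2}$ by definition. The crucial observation is that the normalization in \eqref{eq:normalization} guarantees $|g_{uk}(\sigmab_k)| \leq 1$ for every $k \in \K_u$, so each product $Y_t$ lies in the interval $[-1,1]$.

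Given this boundedness, Hoeffding's inequality applied with range $b - a = 2$ yields
\begin{align}
    \p\left(|\hat H_{k_1 k_2} - H_{k_1 k_2}| > \epsilon_2\right) \leq 2\exp\left(-\frac{n\epsilon_2^2}{2}\right),
\end{align}
for each fixed pair. A union bound over all $\mathbf{K}_u^2$ ordered pairs $(k_1,k_2) \in \K_u \times \K_u$ then gives the stated total failure probability of $2\mathbf{K}_u^2 \exp(-n\epsilon_2^2/2)$, so that the complementary event, on which $|\hat H_{k_1 k_2} - H_{k_1 k_2}| < \epsilon_2$ holds simultaneously for all pairs, occurs with probability at least $1 - 2\mathbf{K}_u^2 \exp(-n\epsilon_2^2/2)$.

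There is no genuine obstacle in this argument; the only point demanding attention is verifying that the per-sample terms $Y_t$ are uniformly bounded, which is immediate from the normalization of the locally centered basis functions. Everything else is a textbook concentration-plus-union-bound computation.
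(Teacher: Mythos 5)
Your proof is correct and follows exactly the same route as the paper's: boundedness of $Y_t = g_{uk_1}(\sigmab_{k_1})g_{uk_2}(\sigmab_{k_2})$ in $[-1,1]$ via the normalization \eqref{eq:normalization}, Hoeffding's inequality with range $2$ giving the factor $2\exp(-n\epsilon_2^2/2)$, and a union bound over the $\mathbf{K}_u^2$ pairs. Nothing is missing.
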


\begin{proof}[Proof of Lemma~\ref{lem:H_concentration}]
Fix $k_1, k_2 \in \K_u$. Then the random variable defined as $Y_{k_1k_2} = g_{uk_1}(\sigmab_{k_1})g_{uk_2}(\sigmab_{k_2})$ satisfies $|Y_{k_1k_2}| \leq 1$. Using the Hoeffding inequality we get
\begin{align}
    \p \left( |\hat{H}_{k_1k_2} - H_{k_1k_2}| > \epsilon_2 \right) < 2\exp \left(-\frac{n \epsilon_2^2}{2}\right).
\end{align}
The proof follows by using the union bound over $k_1, k_2 \in \K_u$.
\end{proof}

\begin{lemma} \label{lem:rsc_interim}
The residual of the first order Taylor expansion of the \textsc{GISO} satisfies
\begin{align}
    \delta \Sc_n(\Delta, \thetab_u^*) \geq \exp(-\gamma)\frac{\Delta^T \hat{H} \Delta}{2 + \|\Delta\|_1}.
\end{align}
\end{lemma}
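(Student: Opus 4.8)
The plan is to expand the residual $\delta \Sc_n(\Delta, \thetab_u^*)$ sample-by-sample and reduce each of the $n$ summands to a scalar expression to which Lemma~\ref{lem:functional_ineuqality} applies directly. First I would introduce the shorthands $s_t = \sum_{k \in \K_u} \theta^*_k g_{uk}(\sigmab_k^{(t)})$ and $d_t = \sum_{k \in \K_u} \Delta_k g_{uk}(\sigmab_k^{(t)})$ for each sample $t$. With these, $\Sc_n(\thetab_u^*) = \frac{1}{n}\sum_t e^{-s_t}$ and $\Sc_n(\thetab_u^* + \Delta) = \frac{1}{n}\sum_t e^{-s_t - d_t}$, while the $k$-th gradient component $\frac{1}{n}\sum_t -g_{uk}(\sigmab_k^{(t)}) e^{-s_t}$ makes the inner product $\langle \nabla \Sc_n(\thetab_u^*), \Delta\rangle$ collapse to $\frac{1}{n}\sum_t -d_t\, e^{-s_t}$. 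Substituting into the definition of $\delta \Sc_n$ and factoring $e^{-s_t}$ out of each summand yields
\begin{align}
\delta \Sc_n(\Delta, \thetab_u^*) = \frac{1}{n}\sum_{t=1}^n e^{-s_t}\left(e^{-d_t} - 1 + d_t\right).
\end{align}

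Next I would apply Lemma~\ref{lem:functional_ineuqality} with $z = d_t$ to each summand, giving $e^{-d_t} - 1 + d_t \geq d_t^2/(2 + |d_t|) \geq 0$, so every term keeps its sign and the inequality is preserved under summation. Two uniform bounds then let me detach the sample-dependent prefactors: Condition~\emph{(C2)} gives $|s_t| \leq \gamma$, hence $e^{-s_t} \geq e^{-\gamma}$, and the normalization $\max_{\sigmab_k}|g_{uk}(\sigmab_k)| \leq 1$ gives $|d_t| \leq \sum_{k}|\Delta_k| = \|\Delta\|_1$, hence $2 + |d_t| \leq 2 + \|\Delta\|_1$. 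Because $2 + |d_t|$ sits in the denominator, enlarging it only decreases each term, so
\begin{align}
\delta \Sc_n(\Delta, \thetab_u^*) \geq \frac{e^{-\gamma}}{2 + \|\Delta\|_1}\cdot \frac{1}{n}\sum_{t=1}^n d_t^2.
\end{align}

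Finally I would identify the surviving empirical average as the quadratic form in $\hat H$: expanding $d_t^2 = \sum_{k_1,k_2} \Delta_{k_1}\Delta_{k_2}\, g_{uk_1}(\sigmab_{k_1}^{(t)})g_{uk_2}(\sigmab_{k_2}^{(t)})$ and averaging over $t$ produces exactly $\sum_{k_1,k_2}\Delta_{k_1}\Delta_{k_2}\hat H_{k_1k_2} = \Delta^{\trans}\hat H \Delta$ by the definition of $\hat H$, which completes the proof. There is no serious obstacle; the argument is a deterministic calculation with no concentration required at this stage. The only point demanding care is the order of operations around the factor $2 + |d_t|$: it must be bounded uniformly \emph{before} being pulled outside the sum, and since it appears in the denominator one must verify that replacing it by the larger $2 + \|\Delta\|_1$ is the bound-preserving direction, after which the numerators assemble cleanly into $\Delta^{\trans}\hat H \Delta$.
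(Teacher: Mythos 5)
Your proposal is correct and follows essentially the same route as the paper's proof: expand the residual sample-by-sample, apply Lemma~\ref{lem:functional_ineuqality} to each term, bound the exponential prefactor by $e^{-\gamma}$ via Condition \emph{(C2)}, bound the denominator by $2+\|\Delta\|_1$ via the normalization of the $g_{uk}$, and recognize the remaining empirical average as $\Delta^{\trans}\hat H\Delta$. If anything, your write-up is slightly more careful than the paper's, which leaves a sample index $t$ dangling in the denominator of its displayed inequality; your explicit uniform bound on $2+|d_t|$ before pulling it outside the sum cleans that up.
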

\begin{proof}[Proof of Lemma~\ref{lem:rsc_interim}]
Using Lemma~\ref{lem:functional_ineuqality} we have
\begin{align}
    \delta \Sc_n(\Delta, \thetab_u^*) &= \frac{1}{n}\sum_{t=1}^n \exp \left(-\sum_{k \in \K_u} \theta_k^* g_{uk}(\sigmab_k^{(t)})\right)\times \\
    & \qquad \left(\exp\left(-\sum_{k \in \K_u}\Delta_k g_{uk}(\sigmab_k^{(t)})\right) - 1 + \sum_{k \in \K_u}\Delta_k g_{uk}(\sigmab_k^{(t)})\right) \\
    & \geq \exp(-\gamma) \frac{\Delta^T \hat{H} \Delta}{2 + |\sum_{k \in \K_u}\Delta_k g_{uk}(\sigmab_k^{(t)})|}.
\end{align}
The proof follows by observing that $|\sum_{k \in \K_u}\Delta_k g_{uk}(\sigmab_k^{(t)})| \leq \|\Delta\|_1$.
\end{proof}

We are now in a position to complete the proof of Proposition~\ref{prop:rsc_grise}.

\begin{proof}[Proof of Proposition~\ref{prop:rsc_grise}: Restricted Strong Convexity for \textsc{Grise}]
Using Lemma~\ref{lem:rsc_interim} we have
\begin{align}
   \delta \Sc_n(\Delta, \thetab_u^*)  &\geq \exp(-\gamma)\frac{\Delta^T \hat{H} \Delta}{2+\|\Delta\|_1} \\
    &= \exp(-\gamma)\frac{\Delta^T {H} \Delta + \Delta^T (\hat{H} - H) \Delta}{2+\|\Delta\|_1} \\
    & \stackrel{(a)}{\geq} \exp(-\gamma)\frac{\Delta^T {H} \Delta - \epsilon_2 \|\Delta\|_1^2}{2+\|\Delta\|_1} \\
    & \stackrel{(b)}{\geq} \exp(-\gamma)\frac{ \rho_u \|\Delta_{\T_u}\|^2 - \epsilon_2 \|\Delta\|_1^2}{2+\|\Delta\|_1}.
\end{align}
where $(a)$ follows from Lemma~\ref{lem:H_concentration} and $(b)$ follows from Condition~\ref{master_condition} as
\begin{align}
    \Delta^T {H} \Delta = \E{ \left(\sum_{k \in K_u} \Delta_k g_{uk} (\sigmab_k) \right )^2}.
\end{align}
\end{proof}

\section{Efficient implementation of \textsc{Grise}  and its computational complexity}\label{app:computational_complexity}

\begin{algorithm}
\SetAlgoLined
\tcp{Step 1: Initialization}
$x^{1}_{k,+} \leftarrow 1 / (2\mathbf{K}_u + 1), x^{1}_{k,-} \leftarrow 1 / (2\mathbf{K}_u + 1)$, $\forall k\in \K_u$\; 
$y^{1} \leftarrow 1 / (2\mathbf{K}_u + 1)$, $\eta^1 \leftarrow \sqrt{2\ln{(2\mathbf{K}_u + 1)}}$\;

\tcp{Step 2: Entropic Descent Steps}
\For{$t = 1,\ldots, T$}{
    \tcp{Gradient Update:}
    $v_{k} =  \frac{\partial}{\partial \theta_k} \Sc(\gammahat (\xb^{t}_{+} - \xb^{t}_{-})) / \Sc(\gammahat (\xb^{t}_{+} - \xb^{t}_{-}))$\;
    $w^{+}_k = x^{t}_{k,+} \exp(- \eta^t v_k)$, $w^{-}_k = x^{t}_{k,-} \exp( \eta^t v_k).$\;

    \tcp{Projection Step:}
    $z = y^{t} + \sum_{k\in \mathcal{K}_u} (w^{+}_k + w^{-}_k)$\;

    $x^{t+1}_{k,+} \leftarrow \frac{w^{+}_k}{z}$, $x^{t+1}_{k,-} \leftarrow \frac{w^{-}_k}{z}$\;  
        
    $y^{t+1} \leftarrow \frac{y^{t}}{z}$\;

    \tcp{Step Size Update:} 
    $\eta^{t+1} \leftarrow \eta^{t} \sqrt{\frac{t}{t+1}}$\;
    
}
\tcp{Step 3:} 
$s=\argmin_{t=1,\dots,T}{\Sc(\gammahat(\xb^{t}_{+} - \xb^{t}_{-}))}$\;
{\bf return} $\widehat{\thetab}_u = \gammahat (\xb^{s}_{+} - \xb^{s}_{-})$\;
\caption{Entropic Descent for unconstrained \textsc{Grise}}
\label{alg:mgd}
\end{algorithm}
\vspace{0.1in}

The iterative Algorithm~\ref{alg:mgd} takes as input a number of steps $T$ and output an $\epsilon$-optimal solution of \textsc{Grise} without constraints in Eq.~\eqref{eq:grise_unc}. This algorithm is an application of the Entropic Descent Algorithm introduced by \cite{Teboulle2003} to reformulation of Eq.~\eqref{eq:GRISE} as a minimization over the probability simplex. Note that there exist other efficient iterative methods for minimizing the GISO, such as the mirror gradient descent of \cite{BMN2001}. The following proposition provides guarantees on the computational complexity of unconstrained \textsc{Grise}.
\begin{proposition}[Computational Complexity for Unconstrained \textsc{Grise}]\label{prop:computational_complexity}
Let $1\geq\epsilon>0$ be the optimality gap and $T\geq 6 \epsilon^{-2} \ln{(2\mathbf{K}_u + 1)}$ be the maximum number of iterations. Then Algorithm~\ref{alg:mgd} is guaranteed to produce an $\epsilon$-optimal solution of \textsc{Grise} without a constraint set $\mathcal{Y}_u$ with a number of operation less than $C \frac{c_g n \mathbf{K}_u}{\epsilon^2}  \ln (1 + \mathbf{K}_u)$, where $c_g$ is an upper bound on the computational complexity of evaluating any $g_{ik}(\sigmab_k)$ for $k \in \K_i$ and $C$ is a universal constant that is independent of all parameters of the problem.
\end{proposition}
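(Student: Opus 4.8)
The plan is to recognize Algorithm~\ref{alg:mgd} as an instance of Teboulle's Entropic Descent Algorithm~\cite{Teboulle2003} applied to a simplex reformulation of the unconstrained (i.e. $\mathcal{Y}_u = \Real^{\mathbf{K}_u}$) $\ell_1$-constrained \textsc{Grise} problem in \eqref{eq:GRISE}, and then to combine the known convergence rate of that algorithm with a per-iteration cost estimate. First I would reparametrize the feasible set $\{\thetab_u : \|\thetab_u\|_1 \leq \gammahat\}$ by writing $\theta_k = \gammahat(x_{k,+} - x_{k,-})$ with nonnegative $x_{k,\pm}$ together with a slack variable $y$ satisfying $y + \sum_{k}(x_{k,+}+x_{k,-}) = 1$; this identifies the $\ell_1$-ball with the probability simplex $\Delta_{2\mathbf{K}_u+1}$, and the variables $(\xb_+,\xb_-,y)$ updated in the algorithm live exactly on this simplex. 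Because $\log$ is monotone, minimizing $\Sc_n$ over the $\ell_1$-ball is equivalent to minimizing $\log\Sc_n$ over $\Delta_{2\mathbf{K}_u+1}$; I would work with the log-objective because its gradient is (up to the reparametrization scale) the quantity $v_k = \partial_{\theta_k}\Sc_n / \Sc_n$ appearing in the gradient-update line, so that the multiplicative weight updates $w^{\pm}_k = x^t_{k,\pm}\exp(\mp\eta^t v_k)$ followed by renormalization by $z$ are precisely the mirror (entropic) descent step for this objective.

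The crucial technical ingredient is a dimension-free bound on the gradient. I would show that each component satisfies $|v_k| \leq 1$: indeed $|\partial_{\theta_k}\Sc_n| \leq \frac{1}{n}\sum_{t}|g_{uk}(\sigmab_k^{(t)})|\exp(-\sum_l\theta_l g_{ul}(\sigmab_l^{(t)})) \leq \Sc_n$, where the inequality uses the normalization $|g_{uk}|\leq 1$ from \eqref{eq:normalization}, so the ratio defining $v_k$ is bounded by one. Consequently the $\ell_\infty$-norm of the gradient driving the descent is bounded by a constant independent of $p$ and $\mathbf{K}_u$. With this bound in hand, the standard entropic-descent guarantee shows that the best iterate achieves objective value within $O\!\left(\sqrt{\ln(2\mathbf{K}_u+1)/T}\right)$ of the optimum, the $\ln(2\mathbf{K}_u+1)$ factor being the entropy diameter of $\Delta_{2\mathbf{K}_u+1}$ and the decaying step size $\eta^t = \sqrt{2\ln(2\mathbf{K}_u+1)/t}$ being exactly the one prescribed by the initialization and the recursion $\eta^{t+1}=\eta^t\sqrt{t/(t+1)}$. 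Forcing this gap below $\epsilon$ then yields the iteration count $T \geq 6\epsilon^{-2}\ln(2\mathbf{K}_u+1)$, which is what certifies that $\widehat{\thetab}_u = \gammahat(\xb_+^s - \xb_-^s)$ is an $\epsilon$-optimal solution in the sense of \eqref{eq:epsilon_solution}.

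For the computational complexity I would account for the cost of a single iteration. The dominant operation is evaluating the gradient: computing $\Sc_n$ and all $\partial_{\theta_k}\Sc_n$ requires forming, for each of the $n$ samples, the inner product $\sum_{l\in\K_u}\theta_l g_{ul}(\sigmab_l^{(t)})$ and then reusing the resulting exponential weights across all $k\in\K_u$. Since each basis-function evaluation costs at most $c_g$, there are $n$ samples and $\mathbf{K}_u$ factors, one iteration costs $O(c_g n \mathbf{K}_u)$; the multiplicative update, the renormalization by $z$, and the step-size update add only $O(\mathbf{K}_u)$, and caching the $\Sc_n$ values across iterations makes the final $\argmin$ over $t$ negligible. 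Multiplying the per-iteration cost by $T = O(\epsilon^{-2}\ln(1+\mathbf{K}_u))$ yields the claimed total $C\,\frac{c_g n \mathbf{K}_u}{\epsilon^2}\ln(1+\mathbf{K}_u)$ with a universal constant $C$.

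The step I expect to be the main obstacle is the precise translation between the accuracy guaranteed by the entropic-descent bound, which is naturally expressed for the rescaled simplex objective built from $\log\Sc_n(\gammahat(\xb_+-\xb_-))$, and $\epsilon$-optimality of $\Sc_n$ itself. One must track how the reparametrization scaling $\gammahat$, the dimension-free gradient bound $|v_k|\leq 1$, and the entropy diameter $\ln(2\mathbf{K}_u+1)$ of $\Delta_{2\mathbf{K}_u+1}$ combine under the prescribed step size, and verify that the required number of iterations reduces exactly to the stated $T \geq 6\epsilon^{-2}\ln(2\mathbf{K}_u+1)$. This amounts to bookkeeping around the precise constant in Teboulle's convergence bound rather than any genuinely new estimate, so once the gradient bound $|v_k|\leq 1$ is established the argument is conceptually routine.
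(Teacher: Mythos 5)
Your proposal follows essentially the same route as the paper: lift the $\ell_1$-ball to the probability simplex via $\theta_k=\gammahat(x_{k,+}-x_{k,-})$ with a slack variable, run Teboulle's entropic descent on $\log\Sc_n$, use the dimension-free gradient bound $\|\nabla\log\Sc_n\|_\infty\le 1$ coming from $|g_{uk}|\le 1$, and multiply the iteration count $T=O(\epsilon^{-2}\ln(1+\mathbf{K}_u))$ by the per-iteration gradient cost $O(c_g n\mathbf{K}_u)$.

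The one step you defer as ``bookkeeping'' is the only place where a specific observation is actually needed, and your description of what must be tracked ($\gammahat$, the entropy diameter, the step size) points at the wrong ingredients. An additive guarantee $\log\Sc_n(\widehat\thetab_u)-\log\Sc_n(\thetab_u^{\min})\le\epsilon/\sqrt{3}$ only yields a \emph{multiplicative} bound $\Sc_n(\widehat\thetab_u)\le(1+\epsilon)\,\Sc_n(\thetab_u^{\min})$ (using $\epsilon/\sqrt{3}\le\log(1+\epsilon)$ for $0<\epsilon\le 1$); to convert this into the additive $\epsilon$-optimality required by the definition in Eq.~\eqref{eq:grise_unc} one must also note that the optimal value is at most one, since $\Sc_n(\thetab_u^{\min})\le\Sc_n(0)=1$ by construction of the GISO. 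That normalization fact, not the scaling by $\gammahat$, is what closes the argument; without it the claimed conclusion does not follow from the entropic-descent rate. Everything else in your outline matches the paper's proof.
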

\begin{proof}[Proof of Proposition~\ref{prop:computational_complexity}: Computational complexity of unconstrained \textsc{Grise}]
We start by showing that the minimization of \textsc{Grise} in Eq.~\eqref{eq:GRISE} in the unconstrained case where $\mathcal{Y}_u = \mathbb{R}^{\mathbf{K}_u}$ is equivalent to the following lifted minimization on the logarithm of \textsc{Grise},
\begin{align}
    \min_{\thetab_u, \xb^{+}, \xb^{-}, y} \quad & \log \Sc_n(\thetab_u)\label{const:obj} \\
                    \text{s.t.} \quad & \thetab_u = \gammahat (\xb^{+} - \xb^{-}) \label{const:equality} \\  
                    & y + \sum_{k} (x^{+}_{k} + x^{-}_{k}) = 1 \label{const:sum_simplex} \\ 
                    & y\geq 0, x^+_k \geq 0, x^-_k \geq 0, \forall k\in K_u. \label{const:positive_simplex}
\end{align}
We first show that for all $\thetab_u \in \mathbb{R}^{\mathbf{K}_u}$ such that $\| \thetab_u \|_1 \leq \gammahat$, there exists $\xb^+, \xb^-, y$ satisfying constraints \eqref{const:equality}, \eqref{const:sum_simplex}, \eqref{const:positive_simplex}. This is easily done by choosing $x^+_k = \max(\theta_k / \gammahat ,0)$ , $x^-_k = \max(-\theta_k/ \gammahat ,0)$ and $y= 1 - \|\thetab_u\|_1 / \gammahat$.
Second, we trivially see that for all $\thetab_u, \xb^+, \xb^-, y$ satisfying constraints \eqref{const:equality}, \eqref{const:sum_simplex}, \eqref{const:positive_simplex}, it implies that $\thetab_u$ also satisfies $\| \thetab_u \|_1 \leq \gammahat$. Therefore, any $\thetab^{\rm min}_u$ that is an argmin of Eq.~\eqref{const:obj} is also an argmin of Eq.~\eqref{eq:GRISE} without constraint set $\mathcal{Y}_u$. Moreover, if we find $\thetab^{\rm \epsilon}_u$ such that $\log \Sc_n(\thetab^{\rm \epsilon}_u) - \log \Sc_n(\thetab^{\min}_u)\leq \epsilon/\sqrt{3}$, we obtain an $\epsilon$-minimizer of Eq.~\eqref{eq:GRISE} without constraint set $\mathcal{Y}_u$. Indeed, since $\epsilon/\sqrt{3}\leq \log(1+\epsilon)$ for $1\geq\epsilon>0$, we have that $\Sc_n(\thetab^{\rm \epsilon}_u) - \Sc_n(\thetab^{\min }_u)\leq \Sc_n(\thetab^{\min \epsilon}_u) \epsilon \leq \epsilon$ as $\Sc_n(\thetab^{\min }_u)\leq \Sc_n(0)=1$.
The remainder of the proof is a straightforward application of the analysis of the Entropic Descent Algorithm in \cite[Th.~5.1]{Teboulle2003} to the above minimization where $\thetab_u$ has been replaced by $\xb^+, \xb^-, y$ using Eq.~\eqref{const:equality}. In this analysis we use the fact that the logarithm of \textsc{Grise} remains a convex function as it is a sum of exponential functions and also that the gradient of our objective function is bounded uniformly by $\|\nabla \log \Sc_n(\thetab_u)\|_{\infty} =  \|\nabla \Sc_n(\thetab_u) / \Sc_n(\thetab_u)\|_{\infty}\leq 1$ as $|g_k(\sigmab_k)|\leq 1$. Note that the computational complexity of the gradient evaluation is proportional to $n \mathbf{K}_u c_g$. This is because for each sample, one has to first compute an exponential containing $\mathbf{K}_u$ terms $g_k(\sigmab_k)$ with an evaluation cost of $c_g$ and then multiply the exponential by the factor $-g_k(\sigmab_k)$ corresponding to each of the $\mathbf{K}_u$ components of the gradient.
\end{proof}

When the constraint set $\mathcal{Y}_u$ is \specialset, an $\epsilon$-optimal solution to \eqref{eq:GRISE} can be found by first solving the unconstrained version of \textsc{Grise} and then performing an equi-cost projection onto $\mathcal{Y}_u$. We define an $\epsilon$-optimal solution to the unconstrained \textsc{Grise} problem as
\begin{align}
    \Sc_n(\widehat{\thetab}_u^{unc}) \leq \min_{\thetab_u:\|\thetab_{u}\|_{1} \leq \gammahat} \Sc_n(\thetab_u) + \epsilon. \label{eq:grise_unc}
\end{align}
\begin{lemma} \label{lem:two_steps} Let $\widehat{\thetab}_u^{unc}$ be an $\epsilon$-optimal solution of the unconstrained \textsc{Grise} problem in Eq.~\eqref{eq:grise_unc}. Then an equi-cost projection of $\widehat{\thetab}_u^{unc}$ is an $\epsilon$-optimal solution of the constrained \textsc{Grise} problem,
    \begin{align}
        \Sc_n(\mathcal{P}_{\mathcal{Y}_u}(\widehat\thetab_u^{\text{unc}})) \leq \min_{\thetab_u\in \mathcal{Y}_u:\|\thetab_{u}\|_{1} \leq \gammahat} \Sc_n(\thetab_u) + \epsilon.
    \end{align}
\end{lemma}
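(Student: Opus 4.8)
The plan is to exploit the single structural feature of the \textsc{Giso} that makes equi-cost projections useful: the empirical objective $\Sc_n(\thetab_u)$ depends on its argument only through the $n$ scalar exponents $\sum_{k \in \K_u} \theta_k g_{uk}(\sigmab_k^{(t)})$, one for each sample $t = 1, \ldots, n$. First I would record this observation precisely. By Definition~\ref{def:special_constraint_set}, an equi-cost projection $\mathcal{P}_{\mathcal{Y}_u}(\widehat\thetab_u^{\text{unc}})$ satisfies the identity \eqref{eq:degeneracy} for \emph{all} configurations $\sigmab_u$, and in particular for each observed sample $\sigmab^{(t)}$. Consequently every one of the $n$ exponents entering $\Sc_n$ is left unchanged by the projection, which yields the exact invariance
\begin{align}
    \Sc_n\big(\mathcal{P}_{\mathcal{Y}_u}(\widehat\thetab_u^{\text{unc}})\big) = \Sc_n(\widehat\thetab_u^{\text{unc}}).
\end{align}
This is the crux of the argument; everything else is a comparison of two minima.

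Next I would compare the two feasible regions. The constrained feasible set $\{\thetab_u \in \mathcal{Y}_u : \|\thetab_u\|_1 \le \gammahat\}$ is contained in the unconstrained feasible set $\{\thetab_u : \|\thetab_u\|_1 \le \gammahat\}$, so minimizing over the smaller set cannot decrease the optimal value, i.e. $\min_{\|\thetab_u\|_1 \le \gammahat} \Sc_n \le \min_{\thetab_u \in \mathcal{Y}_u,\, \|\thetab_u\|_1 \le \gammahat} \Sc_n$. Chaining this with the $\epsilon$-optimality of $\widehat\thetab_u^{\text{unc}}$ for the unconstrained problem in \eqref{eq:grise_unc} and with the invariance above gives $\Sc_n(\mathcal{P}_{\mathcal{Y}_u}(\widehat\thetab_u^{\text{unc}})) = \Sc_n(\widehat\thetab_u^{\text{unc}}) \le \min_{\|\thetab_u\|_1 \le \gammahat}\Sc_n + \epsilon \le \min_{\thetab_u \in \mathcal{Y}_u,\,\|\thetab_u\|_1 \le \gammahat}\Sc_n + \epsilon$, which is exactly the claimed bound.

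The argument is essentially a one-line computation once the invariance is established, so there is no serious analytic obstacle; the work is entirely in isolating the right structural property. The one point I would double-check carefully is \textbf{feasibility}: although the displayed inequality matches the defining value bound \eqref{eq:epsilon_solution} of an $\epsilon$-optimal solution and involves only the objective value, Definition~\ref{def:special_constraint_set} guarantees $\mathcal{P}_{\mathcal{Y}_u}(\widehat\thetab_u^{\text{unc}}) \in \mathcal{Y}_u$ but says nothing a priori about its $\ell_1$-norm. I would verify that the equi-cost projections used for the \specialset\ sets of interest also respect $\|\cdot\|_1 \le \gammahat$, so that the projected point is genuinely feasible for the constrained problem and can be passed to the guarantee of Theorem~\ref{thm:grise}. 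This is the only place where more than the abstract definition is required.
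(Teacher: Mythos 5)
Your proof is correct and follows essentially the same route as the paper's: the equi-cost invariance $\Sc_n(\mathcal{P}_{\mathcal{Y}_u}(\widehat\thetab_u^{\text{unc}})) = \Sc_n(\widehat\thetab_u^{\text{unc}})$ combined with the relaxation inequality between the unconstrained and constrained minima. Your closing remark about feasibility is well taken --- the displayed inequality itself needs no feasibility, but the paper's proof simply asserts that the projected point is feasible for the constrained problem, and verifying that the projection also respects the $\ell_1$-bound is indeed the one point that must be checked for the concrete \specialset\ sets used later.
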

\begin{proof}[Proof of Lemma~\ref{lem:two_steps}]
Since unconstrained  \textsc{Grise} is a relaxation of \textsc{Grise} with the constraints $\thetab_u \in \mathcal{Y}_u$, we must have
\begin{align}
    \Sc_n(\widehat\thetab_u^{\text{unc}})  \leq \min_{\thetab_u:\|\thetab_{u}\|_{1}  \leq \gammahat} \Sc_n(\thetab_u)+ \epsilon\leq  \min_{\thetab_u\in \mathcal{Y}_u:\|\thetab_{u}\|_{1} \leq \gammahat} \Sc_n(\thetab_u) + \epsilon.
\end{align}
Since, $\mathcal{Y}_u$ is \specialset, by definition, 
\begin{align}
    \Sc_n\left(\mathcal{P}_{\mathcal{Y}_u}(\widehat\thetab_u^{\text{unc}})\right) = \Sc_n(\widehat\thetab_u^{\text{unc}}).    \label{eq:two_step_proof}
\end{align}
The estimates $\mathcal{P}_{\mathcal{Y}_u}(\widehat\thetab_u^{\text{unc}})$ are feasible for the constrained \textsc{Grise} problem, completing the proof.
\end{proof}
Lemma~\ref{lem:two_steps} implies that the computational complexity of \textsc{Grise} for \specialset\ cases is the sum of the computational complexity of the unconstrained \textsc{Grise} and the projection step.

\begin{algorithm}
\SetAlgoLined
\tcp{Step 1: Solve unconstrained \textsc{Grise}}
Use Algorithm~\ref{alg:mgd} to obtain solutions $\widehat\thetab_u^{\text{unc}}$ to the unconstrained \textsc{Grise} \; 

\tcp{Step 2: Perform projection step}
Project $\widehat\thetab_u^{\text{unc}}$ onto $\mathcal{Y}_u$ to obtain the final estimates\;
$\widehat\thetab_u = \mathcal{P}_{\mathcal{Y}_u}(\widehat\thetab_u^{\text{unc}})$\;

{\bf return} $\widehat{\thetab}_u$\;
\caption{Computing \textsc{Grise} estimates for \specialset\ constraints}
\label{alg:overall_algorithm}
\end{algorithm}
\vspace{0.1in}

\noindent Algorithm~\ref{alg:overall_algorithm} is an implementation of \textsc{Grise} for \specialset\ cases. Its computational complexity is obtained easily by combining Lemma~\ref{lem:two_steps} and Proposition~\ref{prop:computational_complexity}.

\begin{theorem}[Computational Complexity for \textsc{Grise} with \specialsetShort\ Constraints] \label{th:computational_complexity}
Let $\mathcal{Y}_u$ be a \specialset\ set and let $1\geq\epsilon>0$ be given. Then Algorithm~\ref{alg:overall_algorithm} computes an $\epsilon$-optimal solution to \textsc{Grise} with a number of operations bounded by $C \frac{c_g n \mathbf{K}_u }{\epsilon^2} \ln (1 + \mathbf{K}_u) + \mathcal{C}(\mathcal{P}_{\mathcal{Y}_u}(\widehat\thetab_u^{\text{unc}}))$, where $c_g$ is an upper bound on the computational complexity of evaluating any $g_{ik}(\sigmab_k)$ for $k \in \K_i$ and where $\mathcal{C}(\mathcal{P}_{\mathcal{Y}_u}(\widehat\thetab_u^{\text{unc}}))$ denotes the computational complexity of the projection step.
\end{theorem}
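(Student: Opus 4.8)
The plan is to assemble the bound as a direct consequence of the two results that immediately precede the theorem, namely Proposition~\ref{prop:computational_complexity} on the cost of unconstrained \textsc{Grise} and Lemma~\ref{lem:two_steps} on the correctness of the equi-cost projection. Since Algorithm~\ref{alg:overall_algorithm} is literally the concatenation of these two operations, the total operation count is the sum of their individual costs, and the only substantive thing to verify is that the final output retains $\epsilon$-optimality for the \emph{constrained} problem.

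First I would analyze Step~1. By Proposition~\ref{prop:computational_complexity}, running Algorithm~\ref{alg:mgd} for $T \geq 6\epsilon^{-2}\ln(2\mathbf{K}_u+1)$ iterations returns a point $\widehat\thetab_u^{\text{unc}}$ that is an $\epsilon$-optimal solution of unconstrained \textsc{Grise} in the sense of Eq.~\eqref{eq:grise_unc}, and does so in at most $C \frac{c_g n \mathbf{K}_u}{\epsilon^2}\ln(1+\mathbf{K}_u)$ operations. This accounts directly for the first term of the claimed bound.

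Next I would invoke the parametric completeness of $\mathcal{Y}_u$. Because $\mathcal{Y}_u$ is \specialset, Definition~\ref{def:special_constraint_set} guarantees that the equi-cost projection $\mathcal{P}_{\mathcal{Y}_u}(\widehat\thetab_u^{\text{unc}})$ exists and, crucially, leaves the GISO value unchanged, i.e. $\Sc_n(\mathcal{P}_{\mathcal{Y}_u}(\widehat\thetab_u^{\text{unc}})) = \Sc_n(\widehat\thetab_u^{\text{unc}})$. Feeding this into Lemma~\ref{lem:two_steps} then certifies that $\widehat\thetab_u = \mathcal{P}_{\mathcal{Y}_u}(\widehat\thetab_u^{\text{unc}})$ is both feasible for the constrained problem and an $\epsilon$-optimal solution of constrained \textsc{Grise}. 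The cost of carrying out the projection is by definition $\mathcal{C}(\mathcal{P}_{\mathcal{Y}_u}(\widehat\thetab_u^{\text{unc}}))$, which supplies the second term, and adding the two contributions yields the stated total.

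The argument contains no genuine obstacle: the delicate point has already been isolated and discharged in Lemma~\ref{lem:two_steps}, where the equi-cost property is precisely what prevents the projection from degrading the objective value, so that an $\epsilon$-optimal unconstrained point maps to an $\epsilon$-optimal constrained point. The remaining work is purely bookkeeping, namely confirming that Algorithm~\ref{alg:overall_algorithm} performs the two steps in sequence so that their operation counts add, and noting that the relaxation inequality used inside Lemma~\ref{lem:two_steps} holds simply because the constrained feasible set $\{\thetab_u \in \mathcal{Y}_u : \|\thetab_u\|_1 \leq \gammahat\}$ is contained in the unconstrained one $\{\thetab_u : \|\thetab_u\|_1 \leq \gammahat\}$.
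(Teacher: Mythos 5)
Your argument matches the paper's own proof exactly: the paper likewise obtains the theorem by combining Proposition~\ref{prop:computational_complexity} for the cost of the unconstrained step with Lemma~\ref{lem:two_steps} for the fact that the equi-cost projection preserves $\epsilon$-optimality, and then adding the projection cost. The proposal is correct and requires no changes.
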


\section{Proofs \& algorithms for structure and parameter estimation}\label{app:structure}

\subsection{Dimension independence and easier computation of NPC constants}
We recall the definition of the NPC constant,
\begin{align}   
  \rho^{\rm{NPC}}_i = \min_{\substack{c \in \Maxcli{\graph} \\ c \ni i}} \min_{\substack{\|\xb_c\|_2 = 1 \\ \xb_c \in \mathcal{X}^c_i}} \Econd{\sigma_i}{\sum_{\sigmab_{c\setminus i}\in \A_{c\setminus i}}\left(\sum_{k\in \cspan{c}}x_k h_{k}(\sigmab_k)\right)^2}.\label{eq:app_npc_def_recall}
\end{align}
In order to give an intuition for the intricate formula in Eq.~\eqref{eq:app_npc_def_recall}, let us define for maximal cliques $c\in \Maxcli{\graph}$, their clique parameterization matrix $G^{c}$, with indices being maximal factors $k, k' \in \cspan{c}$. The clique parameterization matrix is obtained by summing over variable $\sigmab_c$ the globally centered basis functions, 
\begin{align}
  G^{c}_{k,k'} = \sum_{\sigmab_{c}\in \A_{c}} h_{k}(\sigmab_c)h_{k'}(\sigmab_c). \label{eq:clique_matrix}
\end{align}
Note that the clique parametertization matrices are positive semi-definite matrices by construction. Bounding the expectation over $\sigma_i$ using Lemma~\ref{lem:cond_prop_bound}, we see that the NPC constant is linked to the smallest eigenvalue of the clique parameterization matrix, 
\begin{align}   
  \rho^{\rm{NPC}}_i \geq \frac{\exp(-2\gamma)}{q_i} \lambda_{\rm{min}}(G^c). \label{eq:clique_to_npc}
\end{align}
Clique parametertization matrices have a typical size of $\mathcal{O}(q^L\times q^L)$ since variables in a clique can take up to $\mathcal{O}(q^L)$ different configurations. Therefore, Eq.~\eqref{eq:clique_to_npc} emphasizes that that the NPC constant does not depend on the dimension of the model $p$ but rather on local properties of the parameterization of the family.

\subsection{Proofs for local learnability condition from nonsingular parametrization of cliques}
\begin{proof}[Proof of Proposition~\ref{prop:L_wise_NPC}: LLC in $\ell_{\infty,2}$-norm]
For a given vertex $i \in \V$, let  $\xb \in \mathcal{X}_i \subseteq \Real^{\mathbf{K}_i}$ be a vector in the perturbation set. First, suppose that $\{i\}$ is not a maximal clique and choose any maximal clique $c\in \Maxcli{\graph}$ that contains the vertex $i$ and let the set $S= c \setminus \{i\}$ be the set of nodes in the clique without $i$. The expression characterizing the LLC can be evaluated conditioning the expectation over $S$. Denoting the marginal and the conditional probability distribution used to compute expectation by subscripts, we find,
\begin{align}
    \E{\left(\sum_{k \in \K_i} x_k g_{ik} (\sigmab_k) \right )^2} 
    &= \Econd{\sigmab_{\setminus S}}{\Econd{\sigmab_S \mid \sigmab_{\setminus S}}{\left(\sum_{k \in \K_i} x_k g_{ik} (\sigmab_k) \right )^2}}, \label{eq:NPC_proof_llc}\\
    &\geq  \Econd{\sigmab_{\setminus S}}{\prod_{j \in S}\frac{\exp(-2\gamma)}{\left|\A_j\right|} \sum_{\sigmab_S \in \A_S}\left(\sum_{k \in \K_i} x_k g_{ik} (\sigmab_k) \right )^2}, \label{eq:NPC_proof_first_step}
\end{align}
where in the last line, we bounded the probability of $\p(\sigmab_S \mid \sigmab_{\setminus S})$ using Lemma~\ref{lem:cond_prop_bound}. We want to rewrite the sum over $\sigmab_S$ in Eq.~\eqref{eq:NPC_proof_first_step} using globally centered functions $h_k$ for factors $k\in \cspan{c}$ instead of locally centered functions $g_{ik}$. Using definitions of locally centered functions in Eq.~\eqref{eq:center_basis} and globally centered functions in Eq.~\eqref{eq:globally_centered_factors}, we see that $ g_{ik}(\sigmab_k)= h_k\left(\sigmab_k\right) + R_{ik}(\sigmab_k)$, where 
\begin{align}
    R_{ik}(\sigmab_k) = - \sum_{\substack{r\in \powaset(\partial k)\setminus \emptyset\\ r\neq\{i\}}} \frac{(-1)^{|r|}}{|\A_r|}\sum_{\sigmab_r}f_{k}(\sigmab_k).\label{eq:NPC_proof_centering_garbagio}
\end{align}
The sum in Eq.~\eqref{eq:NPC_proof_first_step} can be expanded into the four contributions,
\begin{align}
\sum_{\sigmab_S \in \A_S}\left(\sum_{k \in \K_i} x_k g_{ik} (\sigmab_k) \right )^2 
&= \sum_{\sigmab_S \in \A_S}\left(\sum_{k \in \cspan{c}} x_k h_{k}(\sigmab_c) \right)^2\label{eq:NPC_proof_first_contribution} \\
     &+ \sum_{k \in \cspan{c}}\sum_{l \in \K_i \setminus{\cspan{c}}} x_k x_l \sum_{\sigmab_S \in \A_S} h_{k}(\sigmab_c)g_{il}(\sigmab_l) \label{eq:NPC_proof_sec_contribution}\\ 
     &+ \sum_{k \in \cspan{c}}\sum_{l \in \cspan{c}}  x_k x_l \sum_{\sigmab_S \in \A_S} h_{k}(\sigmab_c)  R_{il}(\sigmab_c)\label{eq:NPC_proof_third_contribution}\\
     &+\sum_{\sigmab_S \in \A_S}\left(\sum_{k \in \K_i \setminus{\cspan{c}}} x_k g_{ik}(\sigmab_k)+ \sum_{k \in \cspan{c}} x_k R_{ik}(\sigmab_c)\right)^2 \label{eq:NPC_proof_fourth_contribution}.
\end{align}
We start by evaluating the contribution from terms in Eq.~\eqref{eq:NPC_proof_sec_contribution}. For $k \in \cspan{c}$ and $l \in \K_i \setminus{\cspan{c}}$, there exists at least one node $u\in c$ such that $u \neq i$ and $u \notin \partial l$. Summing over the variable $\sigma_u$ cancels the expression,
\begin{align}
\sum_{\sigma_u \in \A_u} h_{k}(\sigmab_c) g_{il}(\sigmab_{l}) = 0,
\end{align}
as $h_{k}(\sigmab_c)$ is globally centered and $g_{il}(\sigmab_{l})$ does not depend on $\sigma_{u}$.

The contribution from Eq.~\eqref{eq:NPC_proof_third_contribution} is also null. To see this we expand the sums using the formula for the reminder in Eq.~\eqref{eq:NPC_proof_centering_garbagio},
\begin{align}
\sum_{\sigmab_S \in \A_S} h_{k}(\sigmab_c) R_{il}(\sigmab_{c}) = -\sum_{\substack{r\in \powaset(\partial l)\setminus \emptyset\\ r\neq\{i\}}} \frac{(-1)^{|r|}}{|\A_r|} \sum_{\sigmab_S \in \A_S} h_{k}(\sigmab_c) \sum_{\sigmab_r}f_{l}(\sigmab_c) = 0,
\end{align}
where the sum over $\sigmab_S$ vanishes as $h_{k}(\sigmab_c)$ depends on $\sigmab_r \neq \sigma_i$ while $\sum_{\sigmab_r}f_{l}(\sigmab_c)$ does not. 
As the contribution from Eq.\eqref{eq:NPC_proof_fourth_contribution} is non-negative, we can lower-bound Eq.~\eqref{eq:NPC_proof_first_step} by the following expression,
\begin{align}
    \E{\left(\sum_{k \in \K_i} x_k g_{ik} (\sigmab_k) \right )^2}
    &\geq  \prod_{j \in S}\frac{\exp(-2\gamma)}{\left|\A_j\right|} \Econd{\sigmab_{\setminus S}}{ \sum_{\sigmab_S \in \A_S}\left(\sum_{k \in \cspan{c}} x_k h_{k} (\sigmab_c) \right )^2},\\
    &\geq \prod_{j \in S}\frac{\exp(-2\gamma)}{\left|\A_j\right|} \rho^{\rm{NPC}}_i  \sum_{k \in \cspan{c}} x_k^2,  \label{eq:partial_llc_npc}
\end{align}
where in the last line we have recognized the definition of the NPC constant from Eq.~\eqref{eq:npc_const}.
Since \eqref{eq:partial_llc_npc} holds for any $c\in \Maxcli{\graph}$ that contains the vertex $i$, the Local Learnability Condition is satisfied for a weighted $\ell_{\infty,2}$-norm with LLC constant equal to $\rho^{\rm{NPC}}$,
\begin{align}
  \E{\left(\sum_{k \in \K_i} x_k g_{ik} (\sigmab_k) \right )^2} \geq \rho^{\rm{NPC}}_i \|\xb_{\T_i}\|_{w(\infty,2)}^2, 
\end{align}
where the weighted $\ell_{\infty,2}$-norm is defined as follows,
\begin{align}
    \|\xb_{\T_i}\|_{w(\infty,2)} &= \max_{\substack{ c \ni i\\ c \in \Maxcli{\graph} }} \sqrt{ \prod_{j \in c \setminus{\{i\}}}\frac{\exp(-2\gamma)}{\left|\A_j\right|} \sum_{k \in \cspan{c}} x^2_k}.\label{eq:NPC_proof_weighted_norm}
\end{align}
As the weighted $\ell_{\infty,2}$-norm in Eq.~\eqref{eq:NPC_proof_weighted_norm} is lower-bounded by the $\ell_{\infty,2}$-norm,
\begin{align}
   \|\xb_{\T_i}\|_{w(\infty,2)}^2 \geq \left(\frac{\exp(-2\gamma)}{q}\right)^{L-1}\|\xb_{\T_i}\|_{\infty,2}^2,
\end{align}
we have that the LLC is also satisfied for the $\ell_{\infty,2}$-norm with LLC constant equal to $\rho^{\rm{NPC}}_i \left(\frac{\exp(-2\gamma)}{q}\right)^{L-1}$.

When $\{i\}$ is a maximal clique, then $\K_i = \cspan{\{i\}}$ and it straightforward to see that
\begin{align}   
  \E{\left(\sum_{k \in \K_i} x_k g_{ik} (\sigmab_k) \right )^2} &= \Econd{\sigma_i}{\left(\sum_{k\in \cspan{\{i\}}}\xb_k h_{k}(\sigma_i)\right)^2}\\
    &\geq \rho^{\rm{NPC}}_i \sum_{k \in \cspan{\{i\}}} x^2_k.
\end{align}
\end{proof}

\begin{lemma}\label{lem:var_pairwise}
Let $\sigma \in \mathcal{A}$, be a discrete random variable with probability distribution $p(\sigma)$. Consider $x_\sigma \in \mathbb{R}$, a function defined over $\sigma$ that is centered, i.e. $\sum_{\sigma \in \mathcal{A}} x_{\sigma} = 0$. The variance of the function $x_{\sigma}$ is lower-bounded by,
\begin{align}
 \var { x_\sigma } \geq p_\text{min} \sum_{\sigma \in \mathcal{A}} x_{\sigma}^2,
\end{align}
where $p_\text{min} = \min_{\sigma \in \mathcal{A}} p(\sigma)$.
\end{lemma}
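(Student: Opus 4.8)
The plan is to expand the variance in the standard way as $\var{x_\sigma} = \E{x_\sigma^2} - \left(\E{x_\sigma}\right)^2$ and then to lower-bound the second moment while upper-bounding the squared mean, using the centering hypothesis $\sum_{\sigma \in \mathcal{A}} x_\sigma = 0$ to link the two estimates. For the second moment I would simply bound each probability weight below by its minimum, giving $\E{x_\sigma^2} = \sum_{\sigma \in \mathcal{A}} p(\sigma) x_\sigma^2 \geq p_\text{min} \sum_{\sigma \in \mathcal{A}} x_\sigma^2$. The remaining task is to show that the squared mean $\left(\E{x_\sigma}\right)^2$ does not consume more than the slack $\sum_{\sigma \in \mathcal{A}} (p(\sigma) - p_\text{min}) x_\sigma^2$ left over from this bound.

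The key manoeuvre is to rewrite the mean against shifted weights. Setting $q(\sigma) = p(\sigma) - p_\text{min} \geq 0$, the identity $\sum_{\sigma \in \mathcal{A}} p_\text{min} x_\sigma = p_\text{min} \sum_{\sigma \in \mathcal{A}} x_\sigma = 0$ shows that $\E{x_\sigma} = \sum_{\sigma \in \mathcal{A}} p(\sigma) x_\sigma = \sum_{\sigma \in \mathcal{A}} q(\sigma) x_\sigma$. Since $\sum_{\sigma \in \mathcal{A}} q(\sigma) = 1 - |\mathcal{A}| p_\text{min} \leq 1$, applying the Cauchy--Schwarz inequality to the nonnegative weights $q(\sigma)$ would yield $\left(\sum_{\sigma \in \mathcal{A}} q(\sigma) x_\sigma\right)^2 \leq \left(\sum_{\sigma \in \mathcal{A}} q(\sigma)\right)\left(\sum_{\sigma \in \mathcal{A}} q(\sigma) x_\sigma^2\right) \leq \sum_{\sigma \in \mathcal{A}} q(\sigma) x_\sigma^2$.

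Combining these two estimates then gives $\var{x_\sigma} \geq \sum_{\sigma \in \mathcal{A}} p(\sigma) x_\sigma^2 - \sum_{\sigma \in \mathcal{A}} q(\sigma) x_\sigma^2 = \sum_{\sigma \in \mathcal{A}} \left(p(\sigma) - q(\sigma)\right) x_\sigma^2 = p_\text{min} \sum_{\sigma \in \mathcal{A}} x_\sigma^2$, which is exactly the claimed bound. The only genuinely non-mechanical step is the one in the second paragraph: recognizing that the \emph{unweighted} centering condition lets one subtract the uniform mass $p_\text{min}$ from every probability weight without altering the mean, thereby converting the estimate of the squared mean into a sub-probability Cauchy--Schwarz inequality whose total weight is at most one. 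Everything else is routine algebra, so I expect no serious obstacle beyond spotting this reweighting.
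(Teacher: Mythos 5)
Your proof is correct, but it takes a genuinely different route from the paper's. The paper starts from the deviation form $\var{x_\sigma} = \sum_{\sigma} p(\sigma)\left(x_\sigma - \E{x_{\sigma}}\right)^2$, bounds $p(\sigma) \geq p_{\min}$ pointwise on these nonnegative terms, and then expands the square: the centering hypothesis kills the cross term $-2\E{x_\sigma}\sum_\sigma x_\sigma$ and the remaining $|\mathcal{A}|\left(\E{x_\sigma}\right)^2$ is simply dropped as nonnegative. You instead work from $\var{x_\sigma} = \E{x_\sigma^2} - \left(\E{x_\sigma}\right)^2$, bound the second moment below by $p_{\min}\sum_\sigma x_\sigma^2$, and then must separately show the squared mean is absorbed by the residual mass $q(\sigma) = p(\sigma) - p_{\min}$; this requires your reweighting step (which again uses centering) plus a sub-probability Cauchy--Schwarz inequality. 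Both arguments exploit the centering at exactly one crucial point, but the paper's version is slightly more elementary --- no Cauchy--Schwarz needed --- because applying the $p_{\min}$ bound to the already-nonnegative squared deviations sidesteps the need to control the mean at all. Your version is a valid alternative and arguably makes more transparent exactly how much slack the non-uniform part of $p$ leaves over, but it does a bit more work to reach the same constant.
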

\begin{proof}
The proof goes as follows,
\begin{align}
\var { x_\sigma } &= \sum_{\sigma \in \mathcal{A}} p(\sigma) \left(x_\sigma - \sum_{\sigma' \in \mathcal{A}} p(\sigma') x_{\sigma'}\right)^2\\
&\geq p_\text{min} \sum_{\sigma \in \mathcal{A}} \left(x_\sigma - \sum_{\sigma' \in \mathcal{A}} p(\sigma') x_{\sigma'}\right)^2\\
&= p_\text{min} \sum_{\sigma \in \mathcal{A}} \left( x_\sigma^2 - 2 x_\sigma \sum_{\sigma' \in \mathcal{A}} p(\sigma') x_{\sigma'} + \left(\sum_{\sigma' \in \mathcal{A}} p(\sigma') x_{\sigma'}\right)^2\right),\\
&\geq p_\text{min} \sum_{\sigma \in \mathcal{A}} x_{\sigma}^2,
\end{align}
where in the last line we used that $\sum_{\sigma \in \mathcal{A}} x_{\sigma} = 0$ and $\left(\sum_{\sigma' \in \mathcal{A}} p(\sigma') x_{\sigma'}\right)^2\geq 0$.
\end{proof}

\begin{proof}[Proof of Proposition~\ref{prop:pairwise_NPC}: LLC in $\ell_{2}$-norm for pairwise models]
For a given vertex $i \in \V$, let  $\xb \in \mathcal{X}_i \subseteq \Real^{\mathbf{K}_i}$ be a vector in the perturbation set. When $\{i\}$ is a maximal clique, then $\K_i = \cspan{\{i\}}$ and we immediately see that
\begin{align}   
  \E{\left(\sum_{k \in \K_i} x_k g_{ik} (\sigmab_k) \right )^2} &= \Econd{\sigma_i}{\left(\sum_{k\in \cspan{\{i\}}}\xb_k h_{k}(\sigma_i)\right)^2}\\
    &\geq \rho^{\rm{NPC}}_i \sum_{k \in \cspan{\{i\}}} x^2_k.
\end{align}

Now suppose that $\{i\}$ is not a maximal clique, i.e. there exists $j\in \V$ such that $\{i,j\}\in \Maxcli{\graph^*}$. The expectation that arises in the LLC is lower-bounded by its variance,
\begin{align}
    \E{\left(\sum_{k \in \K_i} x_k g_{ik} (\sigmab_k) \right )^2} &\geq \var{\sum_{k \in \K_i} x_k g_{ik} (\sigmab_k)}. \label{eq:pairwise_NPC_exptovar}
\end{align}

 Let $\{S_r\}_{r=1,\ldots,\chromo}$ be a minimal coloring of the graph $\graph^*$. For a given color $r$, define the set $C_r = S_r \setminus \{i\}$ and apply the law of total variance on the right-hand side of Eq.~\eqref{eq:pairwise_NPC_exptovar}, conditioning on $\sigmab_{\setminus C_r}$,
\begin{align}
   \var{\sum_{k \in \K_i} x_k g_{ik} (\sigmab_k)}  &\geq \Econd{\sigmab_{\setminus C_r}}{\varcond{\sigmab_{C_r} \mid \sigmab_{\setminus C_r}}{ \sum_{k \in \K_i} x_k g_{ik} (\sigmab_k) }},\label{eq:pairwise_NPC_condvar_total}
\end{align}
where the marginal and the conditional probability distribution used to compute expectation and variance respectively are indicated by subscripts. As the variance on the right-hand side of Eq.~\eqref{eq:pairwise_NPC_condvar_total} is conditioned on $\sigmab_{\setminus C_r}$, only basis functions involving a pair $(\sigma_i,\sigma_j)$ with $j\in C_r$ are giving a non-zero contribution to the conditional variance,
\begin{align}
   \varcond{\sigmab_{C_r} \mid \sigmab_{\setminus C_r}}{ \sum_{k \in \K_i} x_k g_{ik} (\sigmab_k) }  &=  \varcond{\sigmab_{C_r} \mid \sigmab_{\setminus C_r}}{ \sum_{j \in C_r} \sum_{k \in \cspan{\{i,j\}}} x_k g_{ik} (\sigma_i,\sigma_j) }.\label{eq:pairwise_NPC_condvar_partial}
\end{align}
We can rewrite the locally centered functions with respect to globally centered functions using their definitions found in Eq.~\eqref{eq:center_basis} and in Eq.~\eqref{eq:globally_centered_factors},
\begin{align}
g_{ik}(\sigma_i,\sigma_j) = h_k(\sigma_i,\sigma_j) - \frac{1}{|\A_i|}\sum_{\sigma_j}f_k(\sigma_i,\sigma_j) + \frac{1}{|\A_i||\A_j|}\sum_{\sigma_i,\sigma_j}f_k(\sigma_i,\sigma_j).\label{eq:pairwise_NPC_local_global_centering}
\end{align}
We see from Eq.~\eqref{eq:pairwise_NPC_local_global_centering} that the difference between locally and globally centered functions only depends on the variable $\sigma_i$. This means that we can interchange locally centered functions with globally centered functions in the right-hand side of Eq.~\eqref{eq:pairwise_NPC_condvar_partial} as the variance is conditioned on $\sigma_i$,
\begin{align}
   \varcond{\sigmab_{C_r} \mid \sigmab_{\setminus C_r}}{ \sum_{j \in C_r} \sum_{k \in \cspan{\{i,j\}}} x_k g_{ik} (\sigma_i,\sigma_j) }  &=  \varcond{\sigmab_{C_r} \mid \sigmab_{\setminus C_r}}{ \sum_{j \in C_r} \sum_{k \in \cspan{\{i,j\}}} x_k h_{k} (\sigma_i,\sigma_j) }.\label{eq:pairwise_NPC_condvar_partial_global}
\end{align}

Since $\{S_r\}_{r=1,\ldots,\chromo}$ is a vertex coloring, by definition all nodes $j\in C_r$ having the same color are not sharing a factor node, i.e. $\forall j_1, j_2\in C_r$, $\nexists k\in \K^*$ such that $j_1, j_2\in \partial k$. This implies that variables $\sigma_{j}$ with $j\in C_r$ are independent conditioned on the remaining variables $\sigmab_{\setminus C_r}$ and the variance in Eq.~\eqref{eq:pairwise_NPC_condvar_partial_global} can be rewritten,
\begin{align}
   \varcond{\sigmab_{C_r} \mid \sigmab_{\setminus C_r}}{ \sum_{j \in C_r} \sum_{k \in \cspan{\{i,j\}}} x_k h_{k} (\sigma_i,\sigma_j) } &=  \sum_{j \in C_r} \varcond{\sigmab_{j} \mid \sigmab_{\setminus C_r}}{  \sum_{k \in \cspan{\{i,j\}}} x_k h_{k} (\sigma_i,\sigma_j) }.
   \label{eq:pairwise_NPC_condvar_serparate}
\end{align}
The right-hand side of Eq.~\eqref{eq:pairwise_NPC_condvar_serparate} is centered with respect to $\sigma_j$ and we can apply Lemma~\ref{lem:var_pairwise} and Lemma~\ref{lem:cond_prop_bound} to find a lower-bound that is only dependant on the random variable $\sigma_i$,
\begin{align}
  \sum_{j \in C_r} \varcond{\sigmab_{j} \mid \sigmab_{\setminus C_r}}{  \sum_{k \in \cspan{\{i,j\}}} x_k h_{k} (\sigma_i,\sigma_j) }
   &\geq \frac{\exp(-2\gamma)}{q} \sum_{j \in C_r} \sum_{\sigma_j \in \A_j} \left(\sum_{k \in \cspan{\{i,j\}}} x_k h_{k} (\sigma_i,\sigma_j) \right)^2.\label{eq:pairwise_NPC_condvar_lowerbound}
\end{align}
Plugging back the results derived in Eq.~\eqref{eq:pairwise_NPC_condvar_partial_global}, Eq.~\eqref{eq:pairwise_NPC_condvar_serparate}, and Eq.~\eqref{eq:pairwise_NPC_condvar_lowerbound} into the initial inequality in Eq.~\eqref{eq:pairwise_NPC_condvar_total}, we find,
\begin{align}
   \var{\sum_{k \in \K_i} x_k g_{ik} (\sigmab_k)}  &\geq \Econd{\sigma_i}{\frac{\exp(-2\gamma)}{q} \sum_{j \in C_r} \sum_{\sigma_j \in \A_j} \left(\sum_{k \in \cspan{\{i,j\}}} x_k h_{k} (\sigma_i,\sigma_j) \right)^2},\\
    &\geq  \frac{\exp(-2\gamma)}{q} \rho_{i}^{\rm{NPC}} \sum_{j \in C_r} \sum_{k \in \cspan{\{i,j\}}} x^2_k, \label{eq:pairwise_NPC_para_matrix}
\end{align}
where in Eq.~\eqref{eq:pairwise_NPC_para_matrix} we used the definition of the NPC constant in Eq.~\eqref{eq:npc_const} to bound the quadratic form involving $\xb$.

Finally, we average the inequality described by Eq.~\eqref{eq:pairwise_NPC_para_matrix} over the different colors and hence possible conditioning sets $C_r$ to conclude the proof,
\begin{align}
   \var{\sum_{k \in \K_i} x_k g_{ik} (\sigmab_k)} 
    & \geq \frac{\exp(-2\gamma)}{q} \frac{\rho^{\rm{NPC}}_i}{\chromo} \sum_{r=1,\ldots,\chromo} \sum_{j \in C_r} \sum_{k \in \cspan{\{i,j\}}} x^2_k,\\
    &= \frac{\exp(-2\gamma)}{q} \frac{\rho^{\rm{NPC}}_i}{\chromo} \sum_{\substack{k \in \MaxFac{\graph} \\ \partial k \ni {i}}} x^2_k.
\end{align}
\end{proof}

\subsection{Proofs of estimation guarantees for the \textsc{Suprise} algorithm}

\begin{proof}[Proof of Theorem~\ref{thm:structure_general}: Reconstruction and Estimation Guarantees for \textsc{Suprise}] As the NPC constant is non-zero $\rho^{\rm{NPC}}_u>0$ for all nodes $u \in \V$, we apply Proposition~\ref{prop:L_wise_NPC} in conjunction with Theorem~\ref{thm:grise} to find that for each step $t\in \{0,\ldots,L-1\}$ and with probability at least $1-\delta/(pL)$, \textsc{Grise} around a node $u\in \V$ recovers the parameters in each maximal clique $c \in \Maxcli{\graph\left[(\V,\K^t)\right]}$ that contains $u$ with precision $\sum_{k \in \cspan{c}} (\thetab_k - \widehat\thetab_k)^2 \leq (\alpha/2)^2$. 
Therefore, at each step $t\in \{0,\ldots,L-1\}$ and with probability at least $1-\delta/L$, the factor removal procedure is guaranteed to remove all factors of size $L-t$ that are not present in the graph if all factors of size bigger than $L-t$ were correctly removed in the previous steps. Since there are at most $L$ removal steps, it implies that the overall procedure discovers all maximal cliques with probability at least $1-\delta$.

\end{proof}

\section{Application to special cases}  \label{sec:special_cases}
In this section, we show how to apply Theorem~\ref{thm:structure_general} in order to derive the sample and computational complexity of reconstructing graphical models for some common basis functions.



\subsection{Binary models on the monomial basis}\label{sub:binary_monomial}
In this subsection, we consider general models on binary alphabet $\A_i=\{-1,1\}$. Let the factors be all nonempty subsets of $\{1,\dots,p\} = \V$ of size at most $L$, 
\begin{align}
    \K = \{k \subseteq \V \mid |k| \leq L\}.
\end{align}
The set $\K$ contains all potential subsets of variable of size at most $L$. The parameterization uses the monomial basis given by $f_k(\sigmab_k) =  \prod_{j\in k} \sigma_j $ with $k\in \K$. Note that the monomial basis functions are already globally centered $f_k \equiv g_k \equiv h_k$. The probability distribution for this model is expressed as
\begin{align}
    \mu_{\text{binary}}(\sigmab) = \frac{1}{Z} \exp \left( \sum_{k \in \K} \theta^{*}_{k} \prod_{j\in k} \sigma_j \right).\label{eq:binary_distribution}
\end{align}
When $L \leq 2$, the model in Eq.~\eqref{eq:binary_distribution} is pairwise and it is referred as the Ising Model.

For each maximal clique there exists exactly one maximal factors in its span. Therefore, the NPC constant as defined in Eq.~\eqref{eq:npc_const} is $\rho_{\rm{NPC}} = 1$ since for any clique $c$ we have,
\begin{align}
    \sum_{\sigmab_{c\setminus u}\in \A_{c\setminus u}}\left(\sum_{k\in \cspan{c}}x_k h_{k}(\sigmab_k)\right)^2 = x_k^2 \sum_{\sigmab \in \left\{-1,1\right\}^{|c|-1}}\left(\prod_{i=1}^{|c|} \sigma_i\right)^2 = 2^{|c|-1} x_k^2,
\end{align}
and the minimum is achieved for cliques of size one.
As every node is involved in at most $\mathbf{K} \leq p^{L-1}$ factor functions, the structure of binary models can be recovered as a corollary of Theorem~\ref{thm:structure_general}.
\begin{corollary}[\textbf{Structure recovery for binary graphical models}]\label{cor:binary_models}
Let $\sigmab^{(1)}, \ldots, \sigmab^{(n)}$ be i.i.d. samples drawn according to $\mu(\sigmab)$ in \eqref{eq:binary_distribution} and let $\alpha \leq \min_{k\in\struct(\graph^*)} |\theta^*_k|$ be the intensity of the smallest non-zero parameter. If
\begin{align}
   n\geq 2^{12} 4^L \frac{\gammahat^2 (1+\gammahat)^2 e^{4\gamma L}}{\alpha^{4}} \log\left(\frac{4 L p^{2L-1}}{\delta}\right),   \label{eq:n_multi}
\end{align}
then the structure of the binary graphical model is perfectly recovered using Algorithm~\ref{alg:suprise}, i.e. $\widehat{\mathbbm{S}} = \mathbbm{S}(\thetab^*)$,  with probability $1-\delta$. Moreover the total computational complexity scales as $\widetilde{\mathcal{O}}(p^L)$, for fixed $L$, $\alpha$, $\gamma$, $\gammahat$ and $\delta$.
\end{corollary}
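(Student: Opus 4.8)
The plan is to obtain Corollary~\ref{cor:binary_models} as a direct specialization of Theorem~\ref{thm:structure_general} to the binary monomial family defined in Eq.~\eqref{eq:binary_distribution}. The strategy has three parts: (i) check that the hypotheses of the general theorem are met by this model, (ii) compute the model-specific constants $q$, $\rho_{\rm NPC}$ and $\mathbf{K}$, and (iii) substitute these values into the general sample-complexity threshold \eqref{eq:n_suprise} and simplify until it matches \eqref{eq:n_multi} exactly.

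First I would verify the hypotheses. By construction of $\K$ the alphabet size is $q=2$ and the interaction order is $L$, while finiteness of the maximum interaction strength $\gamma$ (Condition (C2)) is assumed throughout. The only substantive requirement is $\rho_{\rm NPC}>0$, which follows from the explicit computation preceding the corollary: since each maximal clique $c$ has exactly one factor in $\cspan{c}$, one has
\[
  \sum_{\sigmab_{c\setminus u}\in\A_{c\setminus u}}\left(\sum_{k\in\cspan{c}}x_k h_k(\sigmab_k)\right)^2 = 2^{|c|-1}x_k^2,
\]
whose minimum over admissible cliques (attained at $|c|=1$) gives $\rho_{\rm NPC}=1$. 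Because the monomial basis is already globally centered ($f\equiv g\equiv h$), Condition (C1) reduces to this nonsingularity through Proposition~\ref{prop:L_wise_NPC}, so the model selection problem is well-posed and Theorem~\ref{thm:structure_general} applies.

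Next comes the bookkeeping that forms the core of the argument. Setting $q=2$ turns $q^{2(L-1)}$ into $4^{L-1}$, and combined with $\rho_{\rm NPC}=1$ the prefactor $2^{14}q^{2(L-1)}/\rho_{\rm NPC}^2$ collapses to $2^{14}\cdot 4^{L-1}=2^{12}\cdot 4^{L}$. For the logarithmic factor I would use the bound $\mathbf{K}\leq p^{L-1}$ noted before the corollary, so that $\mathbf{K}^2\leq p^{2(L-1)}$ and therefore $4pL\mathbf{K}^2\leq 4Lp^{2L-1}$. Plugging both simplifications into \eqref{eq:n_suprise} reproduces precisely the threshold \eqref{eq:n_multi}, after which the perfect structure-recovery guarantee $\widehat{\struct}=\struct$ with probability $1-\delta$ is inherited verbatim from Theorem~\ref{thm:structure_general}.

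For the computational complexity I would invoke the $\widetilde{\mathcal{O}}(p\mathbf{K})$ scaling of the general theorem, valid for fixed $L,\alpha,\gamma,\gammahat,\delta$ whenever the constraint sets are \specialset. Here the trivial choice $\mathcal{Y}_u=\Real^{\mathbf{K}_u}$ suffices and is vacuously \specialset\ (its equi-cost projection is the identity), so no projection overhead is incurred; together with $\mathbf{K}\leq p^{L-1}$ this yields $\widetilde{\mathcal{O}}(p^{L})$. I expect no genuine obstacle in this proof: every step is either an application of an already-established result or elementary arithmetic. The only point demanding care is tracking the powers of two and of $p$ through the substitution so that the stated bound is matched exactly, rather than merely up to an unspecified constant.
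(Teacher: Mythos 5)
Your proposal is correct and follows the paper's own route essentially verbatim: verify $\rho_{\rm NPC}=1$ via the explicit clique computation for the monomial basis, set $q=2$ and $\mathbf{K}\leq p^{L-1}$, and substitute into the threshold \eqref{eq:n_suprise} of Theorem~\ref{thm:structure_general}, with the arithmetic $2^{14}\cdot 4^{L-1}=2^{12}\cdot 4^{L}$ and $4pL\mathbf{K}^2\leq 4Lp^{2L-1}$ matching the stated bound exactly. The only detail worth noting is that since each maximal clique's span contains a single monomial, the condition $\alpha\leq\min_{k}|\theta^*_k|$ coincides with the clique-wise $\ell_2$ minimum required by the general theorem, which your argument implicitly (and correctly) uses.
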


For pairwise Ising models that are $\chromo$ colorable, we have also guarantees on the $\ell_2$-norm reconstruction by \textsc{Suprise} of pairwise parameters.

\begin{corollary}[$\ell_2$-parameter estimation for Ising models]\label{cor:ising_l2}
Let $\sigmab^{(1)}, \ldots, \sigmab^{(n)}$ be i.i.d. samples drawn according to $\mu(\sigmab)$ in \eqref{eq:binary_distribution} for $L=2$ and let $\alpha>0$ be the prescribed estimation accuracy. If
\begin{align}
    n\geq 2^{16}\frac{ \gammahat^2 (1+\gammahat)^2  \chromo^2 e^{8 \gamma}}{\alpha^{4}} \log\left(\frac{8p^3}{\delta}\right), 
\end{align}
then, with probability at least $1-\delta$, the parameters are estimated by Algorithm~\ref{alg:suprise} with the error
\begin{align}
    \sqrt{\sum_{i,j\in V} |\widehat{\thetab}_{ij} - \thetab^{*}_{ij} |^2} \leq  \frac{\alpha}{2}. \label{eq:l2_estimates_ising}
\end{align}
The computational complexity of obtaining these estimates is $\widetilde{\mathcal{O}}(p^2)$ for fixed $\chromo$, $\alpha$, $\gamma$, $\gammahat$ and $\delta$. 
\end{corollary}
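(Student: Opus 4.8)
The plan is to reduce the claim to the per-vertex \textsc{Grise} guarantee of Theorem~\ref{thm:grise}, instantiated with the $\ell_2$-learnability constant supplied by Proposition~\ref{prop:pairwise_NPC}, and then to lift the local estimates produced by \textsc{Suprise} to a single global $\ell_2$ bound via the chromatic decomposition. First I would record the special structure of the Ising model: it is a binary ($q=2$) pairwise ($L=2$) model in the monomial basis, so that $f_k=g_k=h_k$ and, exactly as computed for the binary family in Section~\ref{sub:binary_monomial}, the Nonsingular Parameterization of Cliques constant is $\rho^{\rm NPC}_u=1$ at every vertex. Feeding this into Proposition~\ref{prop:pairwise_NPC} certifies Condition~(C1) in the $\ell_2$-norm at each vertex with constant $\rho_u=\rho^{\rm NPC}_u e^{-2\gamma}/(q\chromo)=e^{-2\gamma}/(2\chromo)$.

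Next I would apply Theorem~\ref{thm:grise} vertex by vertex with this $\rho_u$, the $\ell_2$-norm, target accuracy $\alpha$, and confidence $\delta/(pL)$, so that a union bound over the $pL=2p$ invocations of \textsc{Grise} controls the total failure probability. Substituting $\rho_u^2=e^{-4\gamma}/(4\chromo^2)$ turns the factor $e^{4\gamma}/\rho_u^2$ into $4\chromo^2 e^{8\gamma}$, upgrading the $2^{14}$ prefactor to $2^{16}\chromo^2 e^{8\gamma}$; since a vertex meets at most $\mathbf{K}_u\le p$ factors, the logarithmic term becomes $\log(4\mathbf{K}_u^2 pL/\delta)\le\log(8p^3/\delta)$, reproducing the stated threshold on $n$. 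The outcome is that, with probability at least $1-\delta$ and simultaneously for all $u$, \textsc{Grise} returns local estimates satisfying $\|\widehat{\thetab}_{\T_u}-\thetab^*_{\T_u}\|_2\le\alpha/2$.

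It then remains to convert these $p$ local neighborhood guarantees into the single global error in \eqref{eq:l2_estimates_ising}. Here I would use the averaging carried out in Step~3 of Algorithm~\ref{alg:suprise}, which replaces each edge weight by the mean of the two incident local estimates, together with a minimal proper coloring $\{S_r\}_{r=1}^{\chromo}$ of the model graph. Grouping edges by the color of one endpoint, the edges charged to a single class $S_r$ are vertex-disjoint on that side, so their squared errors reorganize into the local quantities $\|\widehat{\thetab}_{\T_i}-\thetab^*_{\T_i}\|_2^2$ with each edge counted once; aggregating over the $\chromo$ classes is the step I would aim to show replaces the system size $p$ by the chromatic number $\chromo$. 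Structure recovery, guaranteed by the $L=2$ specialization of Theorem~\ref{thm:structure_general}, ensures that every off-support pair receives $\widehat{\thetab}_{ij}=0=\thetab^*_{ij}$, so the sum over all pairs in \eqref{eq:l2_estimates_ising} collapses onto the true edge set. Finally, the $\widetilde{O}(p^2)$ runtime is inherited from Theorem~\ref{thm:structure_general} at $L=2$, using that the monomial parameterization is \specialset\ with a trivial projection step.

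The main obstacle is precisely this aggregation. A naive summation of the $p$ independent local $\ell_2$ guarantees would inflate the global error by a factor of order $\sqrt{p}$ and destroy the logarithmic sample complexity, so the entire force of the statement lies in arranging the averaging and the coloring so that the global error is governed by $\chromo$ rather than by $p$. Establishing that each edge contribution is counted the correct number of times across color classes, and that the resulting constant lands at $\chromo^2$ rather than $\chromo^4$, is the delicate part of the argument, and is the reason the bound and the sample complexity are stated in terms of the chromatic number.
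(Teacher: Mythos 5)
Your reduction to the per-vertex machinery is exactly the paper's route: the monomial basis gives $\rho^{\rm NPC}_u=1$, Proposition~\ref{prop:pairwise_NPC} with $q=2$ yields the $\ell_2$-LLC constant $\rho_u=e^{-2\gamma}/(2\chromo)$, and plugging $\rho_u^{-2}=4\chromo^2e^{4\gamma}$ and $\mathbf{K}_u\le p$ into Theorem~\ref{thm:grise} with a union bound over the $2p$ \textsc{Grise} calls reproduces the prefactor $2^{16}\chromo^2e^{8\gamma}$ and the $\log(8p^3/\delta)$ term verbatim. That part of your argument is correct and is precisely how the paper instantiates Theorem~\ref{thm:structure_general} for this special case.

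The gap is in the aggregation step, and the mechanism you propose for it does not work. Grouping edges by the color of one endpoint still leaves you with one local quantity per vertex: for each class $S_r$ you obtain $\sum_{i\in S_r}\|\widehat{\thetab}_{\T_i}-\thetab^*_{\T_i}\|_2^2\le |S_r|\,\alpha^2/4$, and summing over the $\chromo$ classes gives $\sum_{i\in\V}\|\widehat{\thetab}_{\T_i}-\thetab^*_{\T_i}\|_2^2\le p\,\alpha^2/4$ -- the system size $p$ survives no matter how the edges are partitioned, because the number of per-vertex bounds being added is $p$, not $\chromo$. The coloring cannot be used twice: its entire contribution in this framework is already spent inside Proposition~\ref{prop:pairwise_NPC}, where the conditional-variance decomposition over color classes degrades the LLC constant by $1/\chromo$ and is what produces the $\chromo^2$ in the sample complexity; it does not furnish a second, aggregation-level savings that converts $p$ local $\ell_2$ bounds into a single global one. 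You correctly flag this as the delicate point and correctly observe that naive summation costs $\sqrt{p}$ in the norm, but the color-class regrouping you sketch does not repair it. For what it is worth, the paper's own proof of Theorem~\ref{thm:structure_general} in Appendix~\ref{app:structure} establishes only structure recovery and the per-clique (equivalently per-vertex, $\ell_{\infty,2}$) precision, and does not spell out the passage to the global pairwise $\ell_2$ bound either; so the step you leave open is genuinely the missing ingredient, and any complete proof of \eqref{eq:l2_estimates_ising} as stated must supply an argument beyond summing the $p$ neighborhood guarantees.
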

As graphs with bounded degree $d$ have a chromatic number at most $d+1\geq \chromo$, Corollary~\ref{cor:ising_l2} recovers the $\ell_2$-guarantees for sparse graphs recovery of \cite{Vuffray2016nips} albeit with slightly worse dependence with respect to $\gamma$ and $\alpha$. The worse $\gamma$ dependence is an artifact of the general analysis presented in this paper. For models over binary variables one can improve the $e^{8\gamma}$ dependence to $e^{6 \gamma}$ using Berstein's inequality in Proposition~\ref{prop:gradient_bound_grise} instead of Hoeffding's inequality. However, the worse $\alpha$ dependence seems to be more fundamental. It is caused by the replacement of the $\ell_1$-penalty used in \cite{Vuffray2016nips} by an $\ell_1$-constraint.

For graphs with unbounded vertex degree but low chromatic number, such as star graphs or bipartite graphs, Corollary~\ref{cor:ising_l2} shows that the parameters of the corresponding Ising model can be fully recovered with a bounded $\ell_2$-error using a number of samples that is only logarithmic in the model size $p$.


\subsection{L-wise models with arbitrary alphabets on the indicator basis}\label{sub:arbitrary}
In this subsection, we consider $L$-wise graphical models over variables taking values in arbitrary alphabet $\mathcal{A}_i$ of size $q_i$, parametrized with indicator-type functions. The building block of the set of basis functions is the centered univariate indicator function defined as
\begin{align} 
    \Phi_{s_i,\sigma_i} = 
    \begin{cases}
      1-\frac{1}{q_i}, & \text{if } \sigma_i = s_i, \\ \label{def:indicator_function}
      -\frac{1}{q_i}, & \text{otherwise,}
    \end{cases}
\end{align}
where $s_i, \sigma_i \in \mathcal{A}_i$ are prescribed letters of the alphabet. The univariate indicator functions in Eq.~\eqref{def:indicator_function} are centered Kronecker delta functions and possess similar properties such as symmetry $\Phi_{s_i,\sigma_i} = \Phi_{\sigma_i,s_i}$ and contraction under a summation,
\begin{align}
\sum_{\tau_i \in \A_i} \Phi_{\tau_i, s_i} \Phi_{\tau_i,\sigma_i} = \Phi_{s_i,\sigma_i}. \label{eq:contraction_phi}
\end{align}
The set of factors $\K$ are pairs associating elements of $R = \{r \in \powaset(\V) \mid |r|\leq L \}$ which are subsets of variable of size at most $L$ with an alphabet configuration in $\mathcal{A}_r = \bigotimes_{i\in r} \mathcal{A}_i$, 
\begin{align}
    \K = \{(r,\underline{s}_r) \mid r \in R, \underline{s}_r \in \A_r\}.
\end{align}
In what follows, we slightly abuse the notation of factors and parameters by shortening $(r,\underline{s}_r) \equiv \underline{s}_r$. With these notations, the indicator basis functions are constructed as $f_{\underline{s}_r}(\sigmab_r) = \prod_{i \in r}\Phi_{s_i,\sigma_i}$. Note that the indicator basis functions are globally centered i.e. $f_{\underline{s}_r}\equiv g_{\underline{s}_r}\equiv h_{\underline{s}_r}$.
The probability distribution of an $L$-wise graphical model with arbitrary alphabet is defined as follows,
\begin{align}
    \mu_{\text{general}}(\sigmab) = \frac{1}{Z} \exp \left( \sum_{r\in R} \sum_{\underline{s}_r \in  \mathcal{A}_r} \theta^{*}_{\underline{s}_r} \prod_{i\in r} \Phi_{s_i,\sigma_i}\right).\label{eq:arbitrary_model}
\end{align}
The family of distribution in Eq.~\eqref{eq:arbitrary_model} is not uniquely parameterized by the parameters $\thetab^{*}$. To see this, we introduce the linear application $\mathcal{P}_r$ acting on arrays  $\theta_{\underline{s}_r}$ as follows,
\begin{align}
    [\mathcal{P}_r\thetab]_{\sigmab_r} = \sum_{\underline{s}_r \in \A_r}\theta_{\underline{s}_r} \prod_{i\in r} \Phi_{s_i,\sigma_i}.\label{eq:projector_phi}
\end{align}
Using the contraction property from Eq.~\eqref{eq:contraction_phi}, it is easy to see that $\mathcal{P}_r$ is a projector, i.e $\mathcal{P}^2_r = \mathcal{P}_r$. It is also straightforward to verify that $\mathcal{P}_r\thetab$ is always a globally centered array and if $\thetab$ is already globally centered then $\mathcal{P}_r\thetab = \thetab$. Therefore, the applications $\mathcal{P}_r$ are projectors on the space of array $\theta_{\underline{s}_r}$ which are globally centered, i.e. $\sum_{s_i}\theta_{\underline{s}_r} = 0$ for all $i\in r$.
We lift the parametrization degeneracy in Eq.~\eqref{eq:arbitrary_model} by imposing that parameters $\thetab^*$ are in the range of the projector $\mathcal{P}_r$. We thus require that the parameters satisfy the following linear constraints at each vertex $u\in\V$,
\begin{align}
    \mathcal{Y}_u = &\bigcap_{\substack{r \in R \\ r \ni u}}\left\{ \thetab_u \in \mathbbm{R}^{\mathbf{K}_u} \;\middle|\; \forall i \in r, \sum_{s_i \in \mathcal{A}_i} \theta_{\underline{s}_r} = 0 \right\}.\label{eq:arbitrary_constraint_set}
\end{align}
The constraint set in Eq.~\eqref{eq:arbitrary_constraint_set} is \specialset\ according to Definition~\ref{def:special_constraint_set} as we explicitly exhibited the equi-cost projection $\{\mathcal{P}_r\}_{r\in R}$ onto it. The computational complexity of this projection is no more than $\mathcal{O}(p^{L-1}q^L)$.

As the constraint set in Eq.~\eqref{eq:arbitrary_constraint_set} forms a linear subspace, the perturbation set is simply $\mathcal{X}_u = \mathcal{Y}_u \cap B_1(2\gammahat)$, the intersection of the constraint set with  the $\ell_1$-ball of radius $2\gammahat$. Maximal cliques are subset of vertices and hence are also elements of $R$. Therefore, the NPC constant as defined in Eq.~\eqref{eq:npc_const} is bounded by $\rho_{\rm{NPC}} \geq \exp(-2\gamma)/q$ since for each clique we have,
\begin{align}
    \Econd{\sigma_i}{\sum_{\sigmab_{c\setminus i}\in \A_{c\setminus i}}\left(\sum_{k\in \cspan{c}}x_k h_{k}(\sigmab_k)\right)^2} &\geq \frac{\exp(-2\gamma)}{q_i} \sum_{\sigmab_{c}\in \A_{c}} \left( \left[\mathcal{P}_c \xb \right]_{\sigmab_c} \right)^2,\\
    &=\frac{\exp(-2\gamma)}{q_i} \sum_{\sigmab_{c}\in \A_{c}} x^2_{\sigmab_c},
\end{align}
as $\xb \in \mathcal{X}_u$ is globally centered and thus is in the range of the projector $\mathcal{P}_c$.
Every node is involved in at most $\mathbf{K} \leq p^{L-1}q^L$ factor functions and the structure of L-wise models with arbitrary alphabets can be recovered as a corollary of Theorem~\ref{thm:structure_general}.

\begin{corollary}[\textbf{Structure recovery for L-wise graphical models}]\label{cor:arbitrary_models}
Let $\sigmab^{(1)}, \ldots, \sigmab^{(n)}$ be i.i.d. samples drawn according to $\mu(\sigmab)$ in \eqref{eq:arbitrary_model} and let $\alpha \leq \min_{c\in\struct(\graph^*)} \sqrt{\sum_{\underline{s}_c\in \A_c}{\theta^*_{\underline{s}_c}}^2}$ be the intensity of the smallest non-zero parameter. If
\begin{align}
   n\geq 2^{14} q^{2L} \frac{\gammahat^2 (1+\gammahat)^2 e^{4\gamma (L+1)}}{\alpha^{4}} \log\left(\frac{4 L q^{2L}p^{2L-1}}{\delta}\right),   \label{eq:struct_arbitrary_models}
\end{align}
then the structure of the L-wise graphical model with arbitrary alphabets is perfectly recovered using Algorithm~\ref{alg:suprise}, i.e. $\widehat{\mathbbm{S}} = \mathbbm{S}(\thetab^*)$,  with probability $1-\delta$. Moreover the total computational complexity scales as $\widetilde{\mathcal{O}}(p^L)$, for fixed $L$, $q$, $\alpha$, $\gamma$, $\gammahat$ and $\delta$.
\end{corollary}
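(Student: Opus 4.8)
The plan is to obtain Corollary~\ref{cor:arbitrary_models} as a direct specialization of Theorem~\ref{thm:structure_general} to the indicator-basis family~\eqref{eq:arbitrary_model}. Three model-specific quantities control the generic bound of that theorem and must be pinned down before it can be invoked: the smallest Nonsingular Parametrization of Cliques constant $\rho_{\rm{NPC}}$, the maximal number $\mathbf{K}$ of basis functions in which a single variable appears, and the status of the constraint set $\mathcal{Y}_u$. Each has essentially been prepared in the discussion preceding the corollary, so the proof reduces to assembling them and substituting into the sample-complexity expression of Theorem~\ref{thm:structure_general}.

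First I would record the lower bound $\rho_{\rm{NPC}} \geq \exp(-2\gamma)/q$. This follows from the computation already displayed: for every maximal clique $c$, bounding the marginal of $\sigma_i$ from below by Lemma~\ref{lem:cond_prop_bound} gives $\Econd{\sigma_i}{\sum_{\sigmab_{c\setminus i}}(\sum_{k\in\cspan{c}} x_k h_k(\sigmab_k))^2} \geq (\exp(-2\gamma)/q_i)\sum_{\sigmab_c}([\mathcal{P}_c\xb]_{\sigmab_c})^2$. The decisive point is that any $\xb \in \mathcal{X}_u = \mathcal{Y}_u \cap B_1(2\gammahat)$ is globally centered, hence lies in the range of the projector $\mathcal{P}_c$, so $[\mathcal{P}_c\xb]_{\sigmab_c} = x_{\sigmab_c}$ and the right-hand side equals $(\exp(-2\gamma)/q_i)\|\xb_c\|_2^2$; on the unit sphere $\|\xb_c\|_2 = 1$ of the NPC definition~\eqref{eq:npc_const} this is $\exp(-2\gamma)/q_i \geq \exp(-2\gamma)/q > 0$. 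Next I would bound $\mathbf{K}$ by counting: a fixed vertex $u$ lies in at most $p^{L-1}$ subsets $r\in R$, each carrying $|\A_r| \leq q^{L}$ alphabet configurations, so $\mathbf{K} \leq p^{L-1} q^{L}$. Finally, the set $\mathcal{Y}_u$ in~\eqref{eq:arbitrary_constraint_set} has already been exhibited as a \specialset\ set with equi-cost projection $\{\mathcal{P}_r\}_{r\in R}$ computable in time $\mathcal{O}(p^{L-1}q^{L})$, so the complexity hypothesis of Theorem~\ref{thm:structure_general} is satisfied.

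With these inputs I would substitute into the bound of Theorem~\ref{thm:structure_general}. Replacing $1/\rho_{\rm{NPC}}^2$ by $q^{2}\exp(4\gamma)$ turns the prefactor $q^{2(L-1)}e^{4\gamma L}$ into $q^{2L}e^{4\gamma(L+1)}$, and replacing $\mathbf{K}^2$ by $p^{2L-2}q^{2L}$ inside the logarithm turns $4pL\mathbf{K}^2$ into $4Lq^{2L}p^{2L-1}$, reproducing exactly~\eqref{eq:struct_arbitrary_models}; since both substitutions only enlarge the theorem's requirement, any $n$ meeting~\eqref{eq:struct_arbitrary_models} also meets that requirement, and structure is recovered with probability $1-\delta$. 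The computational claim follows from the $\widetilde{\mathcal{O}}(p\mathbf{K})$ scaling for \specialset\ constraints: with $\mathbf{K}\leq p^{L-1}q^{L}$ and $L,q$ fixed this is $\widetilde{\mathcal{O}}(p^{L})$, and the per-node projection cost $\mathcal{O}(p^{L-1}q^{L})$ is of the same order and does not dominate.

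There is no genuine obstacle here — the statement is a bookkeeping corollary of the main theorem. The only step that requires care is the NPC lower bound, where one must verify that globally centered perturbations are fixed by $\mathcal{P}_c$ so that the quadratic form collapses exactly to $\|\xb_c\|_2^2$ and the estimate~\eqref{eq:clique_to_npc} is saturated; the subsequent manipulations are purely arithmetic, and the main thing to track is that the accumulated powers of $q$ and factors of $e^{\gamma}$ combine to the precise exponents $q^{2L}$ and $e^{4\gamma(L+1)}$ claimed in~\eqref{eq:struct_arbitrary_models}.
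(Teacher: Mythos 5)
Your proposal is correct and follows essentially the same route as the paper: lower-bound $\rho_{\rm{NPC}}$ by $e^{-2\gamma}/q$ via the projector/global-centering argument, bound $\mathbf{K}\leq p^{L-1}q^{L}$, verify that $\mathcal{Y}_u$ is \specialset, and substitute into Theorem~\ref{thm:structure_general}; the resulting exponents $q^{2L}$, $e^{4\gamma(L+1)}$ and the logarithmic factor match \eqref{eq:struct_arbitrary_models} exactly. The paper carries out precisely these computations in the text preceding the corollary, so there is nothing to add.
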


For pairwise models with arbitrary alphabet that are $\chromo$ colorable, we have also guarantees on the $\ell_2$-norm reconstruction by \textsc{Suprise} of pairwise parameters.

\begin{corollary}[$\ell_2$-parameter estimation for pairwise models]\label{cor:pairwise_l2}
Let $\sigmab^{(1)}, \ldots, \sigmab^{(n)}$ be i.i.d. samples drawn according to $\mu(\sigmab)$ in \eqref{eq:arbitrary_model} for $L=2$ and let $\alpha>0$ be the prescribed estimation accuracy. If
\begin{align}
   n\geq 2^{14} q^4 \frac{\gammahat^2 (1+\gammahat)^2 \chromo^2 e^{12\gamma}}{\alpha^{4}} \log\left(\frac{8 q^{4}p^{3}}{\delta}\right), 
\end{align}
then, with probability at least $1-\delta$, the parameters are estimated by Algorithm~\ref{alg:suprise} with the error
\begin{align}
    \sqrt{\sum_{i,j \in V} \sum_{s_i \in \A_i,  s_j \in \A_j} |\widehat{\thetab}_{s_i,s_j} - \thetab^{*}_{s_i,s_j} |^2} \leq  \frac{\alpha}{2}. \label{eq:l2_estimates_pairwise}
\end{align}
The computational complexity of obtaining these estimates is $\widetilde{\mathcal{O}}(p^2)$ for fixed $\chromo$, $q$, $\alpha$, $\gamma$, $\gammahat$ and $\delta$. 
\end{corollary}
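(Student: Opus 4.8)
The plan is to obtain Corollary~\ref{cor:pairwise_l2} as the $L=2$ specialization of the pairwise $\ell_2$ machinery already assembled in the paper, namely Theorem~\ref{thm:grise} driven by the $\ell_2$-Local Learnability Condition of Proposition~\ref{prop:pairwise_NPC}, instantiated on the indicator-basis model of Eq.~\eqref{eq:arbitrary_model}. Three structural facts about this model make the specialization clean and are all recorded in Section~\ref{sub:arbitrary}: the indicator basis is already globally centered, so $f\equiv g\equiv h$ and the centering subtleties of Condition~\ref{master_condition} disappear; the constraint set $\mathcal{Y}_u$ of Eq.~\eqref{eq:arbitrary_constraint_set} is \specialset, with the explicit equi-cost projector $\mathcal{P}_r$, so Theorem~\ref{th:computational_complexity} applies and delivers the $\widetilde{\mathcal{O}}(p^2)$ run time once $L=2$; and the Nonsingular Parameterization of Cliques constant obeys $\rho_{\rm{NPC}}\ge \exp(-2\gamma)/q$ through the contraction identity Eq.~\eqref{eq:contraction_phi} for the $\Phi$ functions. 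A direct count of factors incident to a vertex gives $\mathbf{K}_u=q+(p-1)q^2\le pq^2$.

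First I would feed these constants into Proposition~\ref{prop:pairwise_NPC}, which certifies the LLC in the $\ell_2$-norm with constant $\rho_u=\rho^{\rm{NPC}}_u\,\chromo^{-1}\exp(-2\gamma)/q\ge \exp(-4\gamma)/(q^2\chromo)$. Plugging this into the sample-complexity bound of Theorem~\ref{thm:grise} for node $u$, the prefactor becomes $e^{4\gamma}/\rho_u^2\le q^4\chromo^2 e^{12\gamma}$, which is exactly the $q^4\chromo^2 e^{12\gamma}$ dependence claimed. The logarithmic factor follows from $4\mathbf{K}_u^2\le 4p^2q^4$ together with a union bound over the $p$ vertices, i.e. replacing $\delta$ by $\delta/p$, yielding $\log(8q^4p^3/\delta)$ after absorbing constants. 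This reproduces the stated $n$ and, with each per-vertex GRISE solved to the accuracy $\epsilon$ prescribed in Theorem~\ref{thm:grise}, guarantees simultaneously over all $u$ that the per-vertex estimate obeys $\|\widehat{\thetab}^{(u)}_{\T_u}-\thetab^*_{\T_u}\|_2\le\alpha/2$ with probability at least $1-\delta$.

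The remaining, and genuinely delicate, step is to pass from these per-vertex $\ell_2$ guarantees to the single global $\ell_2$ bound Eq.~\eqref{eq:l2_estimates_pairwise}. Each pairwise parameter $\theta_{s_i,s_j}$ is estimated at both endpoints $i$ and $j$; I would use the averaged reconstruction produced by Algorithm~\ref{alg:suprise} and organize the sum over edges using a proper $\chromo$-coloring exactly as in the proof of Proposition~\ref{prop:pairwise_NPC}. Within a single color class the target edge-sets $\T_i$ are disjoint, so the coloring is the device that converts the collection of per-vertex $\ell_2$ balls into a controlled global $\ell_2$ error; verifying that this assembly produces the constant $\alpha/2$ rather than a dimension-dependent factor is the crux of the argument, and is precisely where the $\chromo^{-1}$ improvement baked into the $\ell_2$-LLC of Proposition~\ref{prop:pairwise_NPC} (as opposed to the $\ell_{\infty,2}$-LLC of Proposition~\ref{prop:L_wise_NPC}) must be exploited. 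The computational claim $\widetilde{\mathcal{O}}(p^2)$ then follows immediately from Theorem~\ref{th:computational_complexity} applied at each of the $p$ vertices, since the equi-cost projection onto $\mathcal{Y}_u$ costs only $\mathcal{O}(pq^2)$.
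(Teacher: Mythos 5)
Your instantiation of the constants is exactly the route the paper takes: Corollary~\ref{cor:pairwise_l2} is meant to follow by feeding the Section~\ref{sub:arbitrary} facts ($f\equiv g\equiv h$, $\rho_{\rm{NPC}}\geq e^{-2\gamma}/q$ via Eq.~\eqref{eq:contraction_phi}, $\mathbf{K}_u\leq pq^2$, and the \specialset\ set \eqref{eq:arbitrary_constraint_set} with an $O(pq^2)$ equi-cost projection) into the Proposition~\ref{prop:pairwise_NPC} $+$ Theorem~\ref{thm:grise} pipeline behind Theorem~\ref{thm:structure_general}. Your bookkeeping $e^{4\gamma}/\rho_u^2\leq q^4\chromo^2 e^{12\gamma}$ and $\log(4\cdot 2\cdot p\cdot p^2q^4/\delta)=\log(8q^4p^3/\delta)$ reproduces the stated sample complexity, and the $\widetilde{\mathcal{O}}(p^2)$ run time via Theorem~\ref{th:computational_complexity} is fine.

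The gap is the step you yourself flag as ``the crux'' and then do not carry out: passing from the $p$ per-vertex guarantees $\|\widehat\thetab_{\T_u}-\thetab^*_{\T_u}\|_2\leq\alpha/2$ to the single global bound \eqref{eq:l2_estimates_pairwise}. The mechanism you gesture at does not close it. Within a color class $S_r$ the sets $\T_u$, $u\in S_r$, are indeed pairwise disjoint, but disjointness only gives $\sum_{u\in S_r}\|\Delta_{\T_u}\|_2^2=\|\Delta_{\cup_{u\in S_r}\T_u}\|_2^2\leq |S_r|\,\alpha^2/4$, and summing over the $\chromo$ colors (each edge counted exactly twice) yields a global squared error of order $p\,\alpha^2$ --- a dimension-dependent bound, not $\alpha^2/4$. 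The $\chromo^{-1}$ gain of Proposition~\ref{prop:pairwise_NPC} has already been spent in your sample-complexity computation (it is precisely the source of the $\chromo^2$ prefactor), so it cannot be reused to absorb the factor $p$ in the aggregation. Note also that the paper does not spell this aggregation out either: the corollary is obtained by citing the pairwise clause of Theorem~\ref{thm:structure_general}, whose stated precision is $\sum_{c\in\struct}\sum_{k\in\cspan{c}}(\thetahat_k-\theta^*_k)^2\leq\chromo^2\alpha^2/4$ and whose written proof in Appendix~\ref{app:structure} only addresses structure recovery. To make your argument complete you must either supply the missing combinatorial step showing that the global $\ell_2$ error is controlled by a $\chromo$-dependent (rather than $p$-dependent) multiple of the per-vertex error, or weaken the conclusion to a per-vertex $\ell_2$ statement, for which your derivation is already complete.
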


\end{document}